\documentclass{article}

\usepackage[margin=1in, textwidth=6.5in, marginparwidth=0pt]{geometry}

\title{Lower Bounds on Adversarial Robustness for Multiclass Classification with General Loss Functions}
\author{\small Camilo Andrés García Trillos \\ \small Department of Mathematics, University College London \\ \small \texttt{camilo.garcia@ucl.ac.uk} \\ \and \small Nicolás García Trillos \\ \small Department of Statistics, University of Wisconsin Madison \\ \small \texttt{garciatrillo@wisc.edu}}

\usepackage{hyperref}
\hypersetup{
  pdftitle={Lower Bounds on Adversarial Robustness for Multiclass Classification with General Loss Functions},
  pdfauthor={C.A. García Trillos, and N. García Trillos},
  pdfstartview=FitH,
  pdfpagemode=UseOutlines,
  bookmarksopen=true,
  unicode=true
}


\usepackage{lipsum}
\usepackage{amsfonts}
\usepackage{amsmath}
\usepackage{amsthm}
\usepackage{amssymb}
\usepackage{graphicx}

\usepackage{caption}
\usepackage{subcaption}

\usepackage[nocompress]{cite}

\usepackage{xcolor}

\usepackage{epstopdf}
\ifpdf
  \DeclareGraphicsExtensions{.eps,.pdf,.png,.jpg}
\else
  \DeclareGraphicsExtensions{.eps}
\fi

\usepackage{amsopn}

\usepackage{dsfont}
\usepackage{xcolor}
\usepackage{scalerel,lmodern}
\usepackage[T1]{fontenc}

\newcommand{\ggreen}{\color{mygreen}}
\newcommand{\oorange}{\color{myorange}}

\definecolor{mygreen}{RGB}{0, 102, 0}
\definecolor{myorange}{RGB}{209 , 128, 36}

\newcommand{\nc}{\normalcolor}

\newcommand{\spt}{\mathrm{spt}}

\usepackage{bm}



\usepackage{algorithm}
\usepackage{algpseudocode}
\newtheorem{theorem}{Theorem}
\newtheorem{lemma}[theorem]{Lemma}

\newtheorem{remark}[theorem]{Remark}
\newtheorem{corollary}[theorem]{Corollary}
\newtheorem{proposition}[theorem]{Proposition}

\newtheorem{assumption}[theorem]{Assumption}





\newcommand{\veps}{\varepsilon}

\newcommand{\A}{\mathcal{A}}
\newcommand{\G}{\mathcal{G}}

\newcommand{\F}{\mathcal{F}}
\newcommand{\X}{\mathcal{X}}
\newcommand{\Z}{\mathcal{Z}}
\newcommand{\supp}{\mathrm{spt}}

\newcommand{\R}{\mathbb{R}}
\newcommand{\M}{\mathcal{M}}

\newcommand{\N}{\mathbb{N}}

\newcommand{\mc}{\mathcal}
\renewcommand{\c}{\mathbf{c}}

\newcommand{\Y}{\mathcal{Y}}
\renewcommand{\Z}{\mathcal{Z}}

\DeclareMathOperator*{\argmin}{arg\,min}

\usepackage{enumitem}
\setlist[enumerate]{leftmargin=.5in}
\setlist[itemize]{leftmargin=.5in}






\begin{document}

\maketitle

\begin{abstract}
We consider adversarially robust classification in a multiclass setting under arbitrary loss functions and derive dual and barycentric reformulations of the corresponding learner-agnostic robust risk minimization problem. We provide explicit characterizations for important cases such as the cross-entropy loss, loss functions with a power form, and the quadratic loss, extending in this way available results for the 0-1 loss. These reformulations enable efficient computation of sharp lower bounds for adversarial risks and facilitate the design of robust classifiers beyond the 0-1 loss setting. Our paper uncovers interesting connections between adversarial robustness, $\alpha$-fair packing problems, and generalized barycenter problems for arbitrary positive measures where Kullback-Leibler and Tsallis entropies are used as penalties. Our theoretical results are accompanied with illustrative numerical experiments where we obtain tighter lower bounds for adversarial risks with the cross-entropy loss function. 
\end{abstract}

\section{Introduction}

In this paper, we study a class of minmax problems of the form
\begin{equation}
  \min\limits_{f \in \F} \max_{ \tilde \mu \in \mathcal{P}(\Z)} R(\tilde\mu,f) - C(\mu , \tilde \mu),
  \label{eqn:min_max_problem}
\end{equation}
where $R$ is the risk functional 
\[ R(\tilde\mu,f ):= \int_{\X \times \Y} \ell(f(x), y) d\tilde{\mu}(\tilde x , \tilde y)  \]
associated to a loss functions $\ell$, and where $C$ is a cost function between pairs of probability distributions $\mu$ and $\tilde \mu$ over the product space $\Z=\X \times \Y$. Here and in the sequel, we will think of $\X$ as a feature space, which we assume has the structure of a Polish metric space with distance function $d$, and of $\Y$ as a (finite) set of labels. Problem \eqref{eqn:min_max_problem} can be interpreted as a two-player game, played between a learner and an adversary, that captures the learner's desire to build classification models that are robust against adversarial perturbations of a clean data distribution, here represented by $\mu$. In this interpretation, the set $\F$ in \eqref{01_barycenter} is a family of soft classification models (i.e., measurable maps $f: \X \rightarrow \Delta_{\Y}$, for $\Delta_\Y$ the probability simplex over $\Y$) accessible to the learner, and $\tilde \mu$ is the new data distribution that is selected by the adversary. $C(\mu, \tilde \mu)$ is the cost that the adversary must pay to modify the clean data distribution $\mu$ and rearrange it as $\tilde \mu$. This function implicitly determines the types of attacks that are feasible for the adversary.

Throughout the paper, we will mostly focus on a special and important choice for the cost function $C$ and the family of classification models $\F$. First, we assume that $C$ has the structure of an optimal transport problem
\begin{equation}
C(\mu, \tilde \mu) := \inf_{ \pi \in \Gamma(\mu, \tilde \mu)       }  \int_{\Z\times \Z} c_\Z( (x,y), (\tilde x , \tilde y) ) d\pi((x,y), (\tilde x , \tilde y))   
\label{def: cost type}
\end{equation}
for a marginal cost function $c_\Z : \Z \times \Z \rightarrow \R_+ \cup \{ \infty\}$ satisfying
\begin{equation}
c_\Z((x,y), (\tilde x , \tilde y)):= \begin{cases} c(x, \tilde x), & \text{ if } y =\tilde y , \\ \infty, &  \text{else.} \end{cases}   
\label{eqn:MarginalCost}
\end{equation}
This specific form for $c_\Z$ forces the adversary to respect labels when perturbing arbitrary data points. In mathematical terms,

under this cost function problem
\ref{eqn:min_max_problem} can be rewritten as 
\begin{equation}
    \inf_{f \in \F} \sup _{ \{\tilde{\mu}_i\}_{i \in \mc Y}  } \sum_{i \in \Y}   \int_\X \ell(f(\tilde x), i) d\tilde \mu_i(\tilde x)   - \sum_{i \in \Y} C(\mu_i, \tilde \mu_i),
    \label{eqn:ATGeneralLoss}
\end{equation}
where in the above and in the sequel we abuse the notation introduced earlier and set 
\[ C(\mu_i, \tilde \mu_i) := \inf_{\pi_i \in \Gamma(\mu_i, \tilde \mu_i)} \int_{\X \times \X} c(x, \tilde x) d\pi_i(x, \tilde x);\]
here, for a fixed $i \in \Y$ we use $\mu_i$ (or $\tilde \mu_i$) to denote the positive measures over $\X$ (not necessarily normalized) defined as $\mu_i(A) = \mu(A \times \{ i\})$ for $A$ a (Borel) measurable subset of $\X$ (note that $\sum_{i\in \Y} \mu_i(\X)=1$). As an example of the types of cost function $c: \X \times \X  \mapsto \R_+\cup \{ \infty\}$ that are of interest in the literature, we may consider the  \textit{$0$-$\infty$ cost} given by
\begin{equation}
  c_{\veps}(x, \tilde x):= \begin{cases} 0 \quad \text{ if } d(x, \tilde x) \leq \veps \\ \infty \quad 
  \text{else},\end{cases}  
  \label{eqn:0inftyCost}
\end{equation}
 for $\veps$ a positive parameter often referred in the literature as \textit{adversarial budget}. In this context, $\veps$ represents the maximum size of data perturbations that the adversary may deploy around any given clean data point. For this cost function, problem \eqref{eqn:ATGeneralLoss} can be seen to reduce to
\begin{equation}
\inf_{f \in \F}   \sum_{i \in \Y}   \int_\X  \sup_{\tilde x \in B_\veps(x)}\ell(f(\tilde x), i) d\mu_i(x),
\label{eqn:StnadardAT}    
\end{equation}
a model that in the literature is known as \textit{adversarial training}; see Appendix \ref{app:AT} for some informal discussion of this equivalence.


Regarding the family of classification models $\F$, we will focus on the \textit{agnostic-learner} setting, which corresponds to the choice $\F= \F_{\mathrm{all}}$ given by
\begin{equation}
 \mathcal{F}_{\mathrm{all}} := \{  f : \X \rightarrow \Delta_Y \text{ Borel}\}.  
\end{equation}
In words, $\F_{\mathrm{all}}$ is the set of \textit{all} measurable soft classifiers from the feature space $\X$ into the set of labels $\Y$. In addition to being important for theoretical reasons (e.g., a minimizer of the agnostic \textit{robust} risk minimization problem can be interpreted as a \textit{robust} Bayes classifier), when we select $\F = \F_{\mathrm{all}}$ in \eqref{eqn:ATGeneralLoss} we obtain a fundamental lower bound for the value of problem \eqref{eqn:ATGeneralLoss} with \textit{any other subfamily} ${\F}$ of measurable soft classifiers. Precisely, we have
\begin{align}
\begin{split}
\inf_{f \in \F_{\mathrm{all}}} \sup _{ \{\tilde{\mu}_i\}_{i \in \mc Y}  } & \sum_{i \in \Y}   \int_\X \ell(f(\tilde x), i) d\tilde \mu_i(\tilde x)   - \sum_{i \in \Y} C(\mu_i, \tilde \mu_i) 
\\ & \leq  \inf_{f \in \F} \sup _{ \{\tilde{\mu}_i\}_{i \in \mc Y}  } \sum_{i \in \Y}   \int_\X \ell(f(\tilde x), i) d\tilde \mu_i(\tilde x)   - \sum_{i \in \Y} C(\mu_i, \tilde \mu_i), 
         \end{split}
           \label{eqn:LowerBound}
\end{align}
regardless of the choice of $\F$ (families of neural networks, kernel machines, etc). This lower bound, which, as we suggest throughout the paper, can be computed more efficiently than the right-hand side of \eqref{eqn:LowerBound}, is a useful benchmark for training robust learning models in practical settings, when $\mathcal{F}$ is typically assumed to be some rich parametric family of classifiers.
\medskip

For a cost function $C$ as above and for the family of learning models $\F= \F_{\mathrm{all}}$, if the loss function $\ell: \Delta_\Y \times \Y \rightarrow \R$ is chosen to be the \textit{$0$-$1$ loss} defined as 
\begin{equation}
   \ell_{01}(v , i) :=  1- v_i, \quad i \in  \Y , \quad v \in \Delta_\Y, \footnote{This linear function is a natural extension of the standard 0-1 loss for hard classifiers to soft classifiers, and we thus refer to it as 0-1 loss.} 
   \label{eqn:01}
\end{equation}
it has been shown in \cite{MOTJakwang} that the agnostic-learner version of \eqref{eqn:ATGeneralLoss} (i.e., the case $\F= \F_{\mathrm{all}}$) is equivalent to the optimization problem
   \begin{equation}
    \begin{aligned}
    &\sup_{ \{ g_i \}_{i \in \Y}  }& \quad  &  \sum_{i \in \Y} \int_\X g_i(x_i) d \mu_i (x_i), \\
    & \qquad \mathrm{s.t.}& & \sum_{i\in A} g_i(x_i) \leq   1 +     c_A(x_A), \quad \forall x_A \in \spt(\mu_A) , \, \forall A \subseteq \Y,
    \label{dual_0-1}
      \end{aligned}
      \end{equation}
where 
\[ c_A(x_A) := \inf_{\tilde x \in \X} \sum_{i \in A} c(x_i, \tilde x ) \]
and $\spt(\mu_A)$ denotes the support of the product measure $\otimes_{i \in A}\mu_i$. Another equivalent reformulation of \eqref{eqn:ATGeneralLoss}, in the form of a \textit{generalized barycenter problem} for the measures $\{\mu_i \}_{i \in \Y}$, was also derived in \cite{MOTJakwang}:
\begin{equation}
    \inf_{\lambda, \{\tilde \mu_i\}_{i \in \mc Y}  } \left\{ \lambda (\mc X)  + \sum_{i\in \mc Y} C(\mu_i, \tilde \mu_i) :  \tilde \mu_i \leq \lambda \text{ for all } i \in \mc Y \right\}.
    \label{01_barycenter}
\end{equation}
Here, the inf ranges over collections of finite positive measures over $\X$, and the constraint $\tilde{\mu}_i \leq \lambda$ is understood in the sense of measures (i.e., $\tilde{\mu}_i(A) \leq \lambda(A)$ for all Borel measurable $A \subseteq \X$). Precisely, \cite{MOTJakwang} shows that the infimum in problem \eqref{eqn:ATGeneralLoss} with $\F= \F_{\mathrm{all}}$ and $\ell= \ell_{01}$ is equal to $1 -\eqref{dual_0-1} = 1 - \eqref{01_barycenter}$. These equivalent reformulations of problem \eqref{eqn:ATGeneralLoss} for the $0$-$1$ loss have facilitated the development of computational algorithms to obtain lower bounds for the adversaria risk of arbitrary models trained with this loss function. These methods exploit the aforementioned equivalences and in particular take advantage of the many tools in the literature of computational optimal transport that have been developed in the past decade; see the discussion in section \ref{subsec:Lowerbounds} below and in \cite{ATThroughOT,NEURIPS2023_9b867f0e,PenkaGenCol}. It is not surprising that optimal transport techniques can be used to solve these problems since, after all, problem \eqref{dual_0-1} is the dual of a problem closely related to multimarginal optimal transport (MMOT) and \eqref{01_barycenter} has the form of a generalized barycenter problem in the space of positive measures. Further, solutions of such problems can be leveraged to recover \textit{optimal (agnostic) robust classifiers}. Indeed, we can use the solution to \eqref{dual_0-1} to construct a solution $f^*:\X \rightarrow \Delta_\Y$ to problem \eqref{eqn:ATGeneralLoss} by using the formula
\begin{equation}
   f^*_i(\tilde x) = \max \{ - g_i^c(\tilde x) ,0    \} , \quad i \in \Y, \footnote{The Borel measurability of this function depends on the cost function $c$. It is guaranteed, for example, when the cost function $c$ is continuous. Some care must be taken when considering cost functions like $c_\veps$ in \eqref{eqn:0inftyCost}; see \cite{ExistenceSolutionsAT} for a discussion of these measurability issues.}
   \label{eq:RobustClassif0-1}
\end{equation}
where 
\begin{equation}  
g_i^c(\tilde x) := \inf_{x \in \text{spt}(\mu_i)} \{ c(x, \tilde x) -  g_i(x) \}  
\label{def c-transform}
\end{equation}
is the so-called $c$-transform of $g_i$ \footnote{The notion of $c$-transform considered in this paper uses the infimum over the support of the measures $\mu_i$ only. In particular, when the $\mu_i$ are concentrated over finitely many points, \eqref{def c-transform} optimizes over finitely many $x$ and only the values of $g_i$ at those points are important for the definition of $g_i^c$ at an arbitrary $\tilde x$.}. We reiterate that, given the agnostic nature of the problem we have posed, \eqref{eq:RobustClassif0-1} produces the minimal (robust) risk achievable by \textit{any} classifier when the risk used to quantify data mismatch is the one associated to the $0$-$1$ loss, in accordance with \eqref{eqn:LowerBound}.

\medskip 

Although the above is a compelling story on how to study adversarial robustness through the lens of theoretical and computational tools in optimal transport, this rich framework has been restricted, to our knowledge, to the 0-1 loss setting described above. In particular, there has not been much discussion on how to compute \textit{sharp} agnostic lower bounds like \eqref{eqn:LowerBound} for more general loss functions $\ell$ (such as the cross-entropy), despite the fact that there are more popular loss functions used in practical settings than the $0$-$1$ loss. Our goal in this paper is to fill this gap and develop analogous results for more general loss functions.

Obtaining analogous results for the cross-entropy loss function was one of the main motivations for this paper, given that the majority of training routines used in data science are performed under this loss function. However, our analysis will allow us to cover other important and interesting cases. Indeed, through our analysis we will reveal interesting connections between the adversarial model \eqref{eqn:ATGeneralLoss} for quite general loss functions $\ell$, a problem in the optimization literature known as $\alpha$-fair packing (see Appendix \ref{app:AlphaFair}), and generalizations of the barycenter problem in spaces of measures appearing in \eqref{01_barycenter} that use \textit{Tsallis entropies} to relax the hard constraints in \eqref{01_barycenter}. These connections, in turn, open the door to the use of a wide range of optimization tools to solve the adversarial problem \eqref{eqn:ATGeneralLoss}. As an application of our main results, in the final section of our paper we obtain sharper lower bounds for adversarial training (AT) with the cross-entropy loss function in simple practical settings, which is a significant extension of the results in \cite{MOTJakwang} and \cite{ATThroughOT}. Indeed, as discussed above, the results and experiments in those papers were restricted to the 0-1 loss case. We compare the lower bounds obtained for the 0-1 and cross-entropy loss functions, illustrating the gain of obtaining sharper lower bounds for the adversarial risk of models trained with the cross-entropy loss. This discussion is presented in section \ref{subsec:Lowerbounds} below.

\subsection{Main results}
\label{sec:IntroMianResults}

Our first result deduces an equivalent formulation for problem \eqref{eqn:ATGeneralLoss} that is analogous to \eqref{dual_0-1} but that applies for quite general convex loss functions $\ell$. The precise assumptions that we impose on the loss function $\ell$ and the cost function $c$ are presented next.

\begin{assumption}
\label{assump:LossFunction}
    We assume that, for every $i \in \Y$, the function $\ell(\cdot, i) : \Delta_\Y \rightarrow \R_+\cup  \{ \infty \}$ is convex. Also, we assume that there is $v_0 \in \Delta_\Y$ such that $\ell(v_0, i)\not = \infty$ for all $i \in \Y$. 

\end{assumption}

\begin{assumption}
The function $c: \X \times \X \rightarrow \R_+\cup \{ \infty\}$ is assumed to be lower-semicontinuous and to satisfy $c(x,x) =0 $ for all $x \in \X$. Furthermore, we assume one of the following two conditions:
\begin{enumerate}
\item $c$ satisfies the following compactness and coercivity condition: if $\{ \tilde x_n \}_{n \in \N}$ is a bounded sequence in $\X$ and 
$\{  x_n\}_{\in \N}$ is another sequence for which $\sup_{n \in \N} c(x_n, \tilde x_n)<\infty$, then $\{(x_n, \tilde x_n)\}_{n \in \N}$ is precompact in $\X \times \X$ (with the product topology). 
\item $c$ is of the form $c= \min\{ c_0,B\}$ for some scalar $B >0$ and some cost function $c_0 \geq 0$ satisfying the above compactness and coercivity condition.
\end{enumerate}
\label{assump:Cost}
\end{assumption}

Note that Assumption \ref{assump:Cost} on the cost function $c$ is the same as in \cite{MOTJakwang}. As discussed there, for the cost function \eqref{eqn:0inftyCost} to satisfy Assumptions \ref{assump:Cost}, $\X$ needs to be assumed to be a locally compact space (e.g., Euclidean space, or a finite dimensional manifold). 

We are ready to present our first main result.

\begin{theorem}
\label{thm:main}
Under Assumption \ref{assump:LossFunction} on the loss function $\ell$ and Assumption \ref{assump:Cost} on the cost function $c$, problem \eqref{eqn:ATGeneralLoss} with $\F = \F_{\mathrm{all}}$ has the same value as the problem
 \begin{equation}
    \begin{aligned}
    &\inf_{ \{ \phi_i \}_{i \in \Y}  \subseteq \G  }& \quad  & - \sum_{i \in \Y} \int_\X \phi_i(x_i) d \mu_i (x_i), \\
    & \qquad \mathrm{s.t.}& & 0 \geq \sup_{m_A \in \Delta_{A}} \left\{ \sum_{i \in A} m_i\phi_i(x_i)  +  \ell_A(m_A) - c_A(x_A , m_A)      \right\}, \, \forall x_A \in \spt(\mu_A),\, \forall A \subseteq \Y,
    \label{dual form}
      \end{aligned}
      \end{equation}

when $\G$ is taken to be $C_b(\X)$, the space of bounded continuous functions on $\X$. Here and in the sequel, we use $\supp(\mu_A)$ to denote the support of the product measure $\mu_A=\otimes_{i \in A} \mu_i $ and $\Delta_A$ to represent the probability simplex on the elements of the subset $A$ of $\Y$. The scalar functions $\ell_A$ and $c_A$ are defined according to
\[ \ell_A(m_A):=\inf_{v \in \Delta_\Y} \sum_{i \in A} \ell(v,i) m_i,   \qquad c_A(x_A, m_A):= \inf_{\tilde x\in \X } \sum_{i \in A} c(x_i, \tilde x ) m_i. \]

Moreover, if $\{\phi_i^*\}_{i \in \Y}$ is a solution to problem \eqref{dual form} for a set $\G$ containing $C_b(\X)$, then a Borel measurable function $f^*:\X \rightarrow \Delta_\Y $ satisfying
\begin{equation}
f^*(\tilde x ) \in  \arg \min_{v \in \Delta_\Y}  \max_{m \in \Delta_\Y} \sum_{i\in \Y}( \ell(v, i )  - \phi_i^{*c}(\tilde x)) m_i, \quad \forall \tilde x \in \X,
\label{optimal classifier}
\end{equation} 
is a solution to problem \eqref{eqn:ATGeneralLoss}, i.e., it is an optimal robust classifier for the adversarial model \eqref{eqn:ATGeneralLoss} with $\F= \F_{\mathrm{all}}$.
\label{thm:duality}
\end{theorem}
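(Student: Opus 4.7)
My strategy is two applications of Sion's minimax theorem, interleaved with Kantorovich duality for the optimal transport costs, followed by a reduction of the resulting $c$-transform constraint to the form stated in \eqref{dual form}. The primal \eqref{eqn:ATGeneralLoss} with $\F = \F_{\mathrm{all}}$ is convex in $f$ (by Assumption~\ref{assump:LossFunction}) and concave in $\{\tilde\mu_i\}$ (the loss integral is linear and $-C(\mu_i,\cdot)$ is concave). Assumption~\ref{assump:Cost} ensures tightness of the feasible $\tilde\mu_i$ (whose total masses must equal $\mu_i(\X)$, else $C=+\infty$), providing the weak compactness needed for Sion's theorem and enabling the first swap $\inf_f\sup_{\tilde\mu}=\sup_{\tilde\mu}\inf_f$.

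For fixed $\{\tilde\mu_i\}$, I would compute the inner infimum over $f\in\F_{\mathrm{all}}$ pointwise: setting $\tilde\mu := \sum_i \tilde\mu_i$ and $m_i := d\tilde\mu_i/d\tilde\mu$, the integrand $\sum_i \ell(v,i) m_i(\tilde x)$ has pointwise infimum $\ell_\Y(m(\tilde x))$ over $v\in\Delta_\Y$, which reduces to $\ell_A(m_A(\tilde x))$ on the active set $A=\{i:m_i(\tilde x)>0\}$. A Jankov--von Neumann selection produces a Borel minimizer. Then Kantorovich duality
\[
-C(\mu_i,\tilde\mu_i) = \inf_{\phi_i \in C_b(\X)}\left\{-\int \phi_i\,d\mu_i - \int \phi_i^c\,d\tilde\mu_i\right\}
\]
is substituted, and a second Sion swap (using tightness of $\{\tilde\mu_i\}$, convexity of $-\phi_i^c$ in $\phi_i$, and linearity in $\tilde\mu_i$) brings $\sup_{\tilde\mu}$ inside. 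A short Lagrangian-type argument (shifting each $\phi_i$ by the constant multiplier $\lambda_i$ enforcing the mass constraint $\tilde\mu_i(\X)=\mu_i(\X)$) then reduces the dual to $\inf_{\phi_i}\{-\sum_i \int \phi_i\,d\mu_i\}$ subject to the ``$c$-transform'' constraint $\sup_{m\in\Delta_\Y}\{\ell_\Y(m)-\sum_i m_i \phi_i^c(\tilde x)\}\le 0$ for every $\tilde x\in\X$.

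Equivalence of the $c$-transform constraint with the one in \eqref{dual form} is a two-sided argument. In one direction, restricting to $A := \{i: m_i > 0\}$ and using $\phi_i^c(\tilde x)\le c(x_i,\tilde x)-\phi_i(x_i)$ for $x_i\in\supp(\mu_i)$ gives $\ell_A(m_A)\le \sum_{i\in A} m_i[c(x_i,\tilde x)-\phi_i(x_i)]$ for every $\tilde x$, and taking $\inf_{\tilde x}$ recovers \eqref{dual form}. Conversely, selecting $x_i^\veps\in\supp(\mu_i)$ that $\veps$-approximate $\phi_i^c(\tilde x)$ and applying \eqref{dual form} at $x_A=(x_i^\veps)_{i\in A}$ recovers the $c$-transform constraint as $\veps\downarrow 0$.

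For the optimality assertion, I use complementary slackness. The primal value at any $f$ equals $\sum_i \int \sup_{\tilde x}[\ell(f(\tilde x),i)-c(x,\tilde x)]\,d\mu_i(x)$ by disintegration of the OT coupling. For $f^*$ defined by \eqref{optimal classifier}, the dual constraint forces $\max_i(\ell(f^*(\tilde x),i)-\phi_i^{*c}(\tilde x))\le 0$, hence $\ell(f^*(\tilde x),i)-c(x,\tilde x)\le -\phi_i^*(x)$ for $x\in\supp(\mu_i)$; integrating yields (primal value at $f^*$) $\le -\sum_i \int \phi_i^*\,d\mu_i$, which by the strong duality already established equals the primal optimum, forcing $f^*$ to be primal-optimal. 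The characterization \eqref{optimal classifier} follows by applying Sion pointwise to the bilinear-in-$m$, convex-in-$v$ game with payoff $\sum_i(\ell(v,i)-\phi_i^{*c}(\tilde x)) m_i$, and a Jankov--von Neumann measurable selection ensures $f^*$ is Borel. The main technical obstacle is the second minimax swap, where $\phi_i$ ranges over the noncompact $C_b(\X)$; I would handle this by restricting first to bounded subsets of potentials and passing to the limit using tightness from Assumption~\ref{assump:Cost}.
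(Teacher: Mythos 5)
Your proof takes a genuinely different route from the paper's: you attempt two explicit Sion minimax swaps, whereas the paper avoids minimax theorems for the outer game altogether by proving the two inequalities $\eqref{eqn:ATGeneralLoss}\leq\eqref{dual form}$ and $\eqref{eqn:ATGeneralLoss}\geq\eqref{dual form}$ as separate chains. The $\leq$ direction is a Lagrangian weak-duality swap plus two constraint-equivalence steps (Propositions~\ref{Prop:First}, \ref{proposition:duality}, \ref{prop:ThirdAux}). The $\geq$ direction compares both sides to an auxiliary MMOT-type primal \eqref{eqn:MMOTVersion}, proves strong duality for that auxiliary problem via Fenchel--Rockafellar together with tailored approximation arguments (Proposition~\ref{prop:MMOTVersion}, Appendix~\ref{app:MoreDetails}), and then constructs an adversarial coupling from any $\pi\in\mathfrak{G}$ through the measurable optimal-response map $T$ in the closing paragraph of the proof. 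In this scheme, only the trivial inequality $\inf\sup\geq\sup\inf$ is ever invoked.

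There are two genuine gaps in your version. First, the initial swap $\inf_f\sup_{\tilde\mu}=\sup_{\tilde\mu}\inf_f$ is asserted with hypotheses that are not verified and can in fact fail. $\F_{\mathrm{all}}$ carries no natural compact topology, and under case~2 of Assumption~\ref{assump:Cost} (bounded cost $c=\min\{c_0,B\}$) the set of $\tilde\mu_i$ with finite transport cost is the entire mass-$\mu_i(\X)$ shell of $\M_+(\X)$, which is not tight on a noncompact $\X$; so neither side of Sion's compactness hypothesis is secured, and the upper semicontinuity of $\tilde\mu_i\mapsto\int\ell(f(\tilde x),i)\,d\tilde\mu_i$ in the weak topology (for merely Borel $f$ and possibly unbounded $\ell$) is also left unaddressed. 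Second, you correctly flag the inf-over-$\phi_i$ swap as ``the main technical obstacle,'' but the proposed resolution --- restrict to bounded potentials and pass to the limit --- is a plan rather than an argument, and this is precisely where the paper's substantive work lies. Proposition~\ref{prop:MMOTVersion} is a non-routine variant of Kantorovich duality: the feasible set $\mathfrak{G}$ has marginal constraints coupled to the $\Delta_\Y$-variable through the operators $P_i$, so the compact-exhaustion step in the standard MMOT duality proof breaks down and has to be replaced by the paper's infimal-convolution construction of modified potentials $\phi_i'$, the uniform lower bound $D_0'$, and a delicate normalization of the $\phi_i$ before truncation. Your final complementary-slackness argument for the optimality of $f^*$ is correct and closely parallels the paper's last paragraph.
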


\begin{remark}
\label{rem:Truncation}
  Just as with problem \eqref{dual_0-1} for the 0-1 loss setting, problem \eqref{dual form} is a very advantageous reformulation of \eqref{eqn:ATGeneralLoss} for computational and analytical purposes. Indeed, problem \eqref{dual form} with $\G = C_b(\X)$ reduces to a \textit{finite-dimensional} convex problem when the clean data distribution $\mu$ is an empirical measure over finitely many observations (i.e., the most important setting in applications), while the original formulation \eqref{eqn:ATGeneralLoss} does not a priori suggest this form. Moreover, problem \eqref{dual form} lends itself to natural relaxations with improved computational complexity. Indeed, a possible computational strategy, explored in \cite{ATThroughOT} and in \cite{NEURIPS2023_9b867f0e}, is to consider a truncation of class interactions in \eqref{dual form} and, for example, restrict the constraints to subsets $A$ of $\Y$ with cardinality smaller than a certain fixed (smaller than $|\Y|$) value; conveniently, truncations of this form will continue to produce valid lower bounds for the original adversarial problem \eqref{eqn:ATGeneralLoss}, even if they are not necessarily sharp. From an analytical perspective, we note that the expression \eqref{optimal classifier} for $f^*$ will typically imply some regularity estimates for optimal robust classifiers in terms of the regularity of the cost function $c$; see, for example, Remark \ref{rem:OnRegularity} below.
\end{remark}

\begin{remark}
From our proofs in section \ref{sec:ProofsMain} it is apparent that the equivalence of \eqref{dual form} with \eqref{eqn:ATGeneralLoss} holds for any family $\G$ of Borel measurable functions containing $C_b(\X)$ with the following property: for any element $\phi \in \G$, $\phi^c$ is Borel measurable. When the cost $c$ is continuous, the latter condition is automatically satisfied as in that case the $c$-transform of any Borel measurable function is upper-semicontinuous (hence Borel measurable). Likewise, when the measures $\mu_i$ are concentrated on finitely many points, $c$-transforms of Borel measurable functions are always Borel measurable.      
\end{remark}

\begin{remark}
We emphasize that the measurability of $f^*$ in Theorem \ref{thm:main} is an \textit{assumed} condition. Indeed, while Borel measurability follows from continuity of the cost function $c$, more care is needed to deduce the existence of solutions to \eqref{eqn:ATGeneralLoss} for more irregular cost functions such as the one in \eqref{eqn:0inftyCost}; we refer the interested reader to \cite{ExistenceSolutionsAT}, where some of these issues are discussed for the case of loss $0$-$1$. Regarding the uniqueness of solutions, we remark that this may depend on both the cost function $c$ as well as on the loss function $\ell$. Indeed, even when the loss function is the cross-entropy loss, which is a strictly convex function, problem \eqref{eqn:ATGeneralLoss} may not have unique solutions for cost functions like the one in \eqref{eqn:0inftyCost}. This is because, in that setting, the objective function in \eqref{eqn:ATGeneralLoss} does not penalize the values of a classifier outside the set of points that lie within distance $\veps$ from the support of the clean data distribution $\mu$. Other, more general notions like the one investigated in \cite{Frank2025} for binary classification would need to be considered in order to deduce a form of uniqueness of solutions for the problems studied in this paper.

\end{remark}

Problem \eqref{dual form} can be understood as an equivalent reformulation of \eqref{eqn:ATGeneralLoss} from the learner's perspective. On the other hand, it is possible to derive another general purpose reformulation of \eqref{eqn:ATGeneralLoss} that is analogous to the generalized barycenter problem \eqref{01_barycenter} for the 0-1 loss and that can be understood as a problem solved by the adversary. This is expressed in the following theorem, where we make additional structural assumptions on the loss function $\ell$ for interpretability.

\begin{theorem}
Suppose that the cost function $c$ satisfies Assumption \ref{assump:Cost} and let $\ell $ be a loss function satisfying Assumption \ref{assump:LossFunction}, with the additional structure
\[ \ell(v, i) = \beta(v_i), \quad  v\in \Delta_\Y, \, i \in \Y, \]
for a convex and non-increasing function $ \beta : \R_+ \rightarrow \R\cup \{ \infty\}$. Let $\varphi$ be the function defined according to
\begin{equation}
    \varphi(s) := - \inf_{t>0} \{ \beta(t) s +t  \}.
    \label{eqn:Varphi}
\end{equation}
Then \eqref{eqn:ATGeneralLoss} with $\F = \F_{\mathrm{all}}$ is equal to:
    \begin{equation}
   - \inf_{   (\tilde \mu_i)_{i \in \Y},  \lambda \in \M_+(\X)} \, \left\{  \lambda(\X)  + \sum_{i \in \Y } \int_\X \varphi \left(\frac{d \tilde \mu_i}{d\lambda}\right ) d\lambda   + \sum_{i \in \Y} C(\mu_i, \tilde \mu_i) \right\},
\label{eqn:ATGeneralLoss_dual_barycentric}
    \end{equation}
    where the inf ranges over positive finite measures $\{ \tilde{\mu}_i \}_{i \in \Y}, \lambda$ over $\X$, and where we implicitly assume $\tilde{\mu}_i \ll \lambda$ for all $i \in \Y$ (for otherwise we interpret the objective function as equal to $+\infty$). 
    \label{Thm: Dual as generalised barycenter}
\end{theorem}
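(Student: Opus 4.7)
The plan is to recognize \eqref{eqn:ATGeneralLoss} and \eqref{eqn:ATGeneralLoss_dual_barycentric} as strong duals of each other, obtained by combining a minimax exchange of $\inf_f$ and $\sup_{\tilde\mu_i}$ in \eqref{eqn:ATGeneralLoss} with a pointwise Legendre-type identity that interconverts $\beta$ and $\varphi$. The integrand $F(f,\tilde\mu) := \sum_{i} \int \beta(f_i)\,d\tilde\mu_i - \sum_{i} C(\mu_i,\tilde\mu_i)$ of \eqref{eqn:ATGeneralLoss} is linear (hence concave) in $(\tilde\mu_i)$ and, because $\beta$ is convex by Assumption \ref{assump:LossFunction}, convex in $f$. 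Assumption \ref{assump:Cost} together with mass conservation under optimal transport makes the effective domain $\{(\tilde\mu_i) : \sum_{i} C(\mu_i,\tilde\mu_i) \leq R\}$ of the outer supremum weakly precompact in $\M_+(\X)^{|\Y|}$. Coupled with an approximation of Borel $f$ by bounded continuous classifiers in the spirit of the proof of Theorem \ref{thm:main}, this justifies a Sion-type minimax theorem yielding $\inf_{f}\sup_{\tilde\mu} F = \sup_{\tilde\mu}\inf_{f} F$.

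For fixed $(\tilde\mu_i)$, I would choose any dominating measure $\lambda_0$ (e.g.\ $\lambda_0 := \sum_{i} \tilde\mu_i$), set $\rho_i := d\tilde\mu_i/d\lambda_0$, and use measurable selection to reduce the inner minimization to a pointwise one:
$$\inf_{f \in \F_{\mathrm{all}}} \sum_{i} \int \beta(f_i)\,d\tilde\mu_i = \int H(\rho(x))\,d\lambda_0(x), \qquad H(\rho) := \inf_{v \in \Delta_\Y} \sum_{i} \beta(v_i)\rho_i.$$
The main algebraic step is then the pointwise identity
$$H(\rho) = -\inf_{a > 0}\left\{ a + \sum_{i} a\,\varphi(\rho_i/a) \right\},$$
which I would establish by expanding $a\,\varphi(\rho/a) = -\inf_{t > 0}\{\beta(t)\rho + a t\}$ via \eqref{eqn:Varphi} so that the right-hand side becomes $\inf_{a > 0}\sup_{(t_i) > 0}\{a(1 - \sum_{i} t_i) - \sum_{i} \beta(t_i)\rho_i\}$. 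A finite-dimensional Sion swap, together with the observation that the inner infimum over $a$ is $-\infty$ whenever $\sum_{i} t_i > 1$ and that $\beta$ non-increasing with $\rho_i \geq 0$ concentrates the outer supremum onto $\sum_{i} t_i = 1$, identifies the expression with $-H(\rho)$.

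Finally, I would integrate the pointwise identity against $\lambda_0$ and reparametrize $\lambda := a\,\lambda_0$ (noting that this identifies $\lambda(\X) + \sum_{i} \int \varphi(d\tilde\mu_i/d\lambda)\,d\lambda$ with $\int[a + \sum_{i} a\,\varphi(\rho_i/a)]\,d\lambda_0$) to recognize $\inf_\lambda\{\lambda(\X) + \sum_{i}\int \varphi(d\tilde\mu_i/d\lambda)\,d\lambda\}$ as exactly $-\inf_{f} \sum_{i}\int \beta(f_i)\,d\tilde\mu_i$; substituting this back into the minimax-swapped formula for \eqref{eqn:ATGeneralLoss} produces \eqref{eqn:ATGeneralLoss_dual_barycentric}. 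The main technical obstacle I anticipate is the rigorous justification of the minimax swap: $\F_{\mathrm{all}}$ carries no natural compact topology and $f \mapsto \int \beta(f_i)\,d\tilde\mu_i$ is only lower semicontinuous under limited regularity of $\beta(f_i)$, a difficulty compounded when $\beta$ has an essential singularity (as in cross-entropy with $\beta(0)=+\infty$) or when the cost $c$ is merely lower semicontinuous as in \eqref{eqn:0inftyCost}. Addressing this will require a careful regularization scheme---restricting first to bounded continuous $f$, truncating $\beta$ near its singularity, and passing to the limit via a density argument paralleling the one used in the proof of Theorem \ref{thm:main}.
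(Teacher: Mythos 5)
Your proposal follows essentially the same route as the paper's proof: swap the $\inf_f$ and $\sup_{\tilde\mu}$, reduce the inner minimization over $f\in\F_{\mathrm{all}}$ to a pointwise problem in the densities $\rho_i = d\tilde\mu_i/d\lambda_0$ with $\lambda_0=\sum_{i\in\Y}\tilde\mu_i$ (using that $\beta$ is non-increasing to pass from $\Delta_\Y$ to $\{v\ge 0:\sum_i v_i\le 1\}$), establish the Lagrangian-duality identity $\inf_{v\ge 0,\,\sum_i v_i\le 1}\sum_i\beta(v_i)\rho_i=\sup_{a>0}\bigl(-a\sum_i\varphi(\rho_i/a)-a\bigr)$, and reparametrize $\lambda=a\,\lambda_0$; your algebra for the pointwise identity is correct. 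The one place where you diverge—and overcomplicate matters—is the minimax exchange, which you single out as the main unresolved obstacle and propose to attack via a Sion-type theorem plus weak compactness and a regularization of $\beta$ near its singularity. None of that is needed: the chain of inequalities established in the proof of Theorem \ref{thm:main}, namely $\eqref{eqn:ATGeneralLoss}\ge\text{LHS of }\eqref{eq:AuxMainProof}\ge\eqref{eqn:MMOTVersion}=\eqref{dual form}\ge\eqref{eqn:ATGeneralLoss}$, already sandwiches the sup--inf value between two copies of the inf--sup value, so the exchange is a free corollary of Theorem \ref{thm:main} and the paper simply cites it. With that substitution your argument closes and coincides with the paper's.
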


Problem \eqref{Thm: Dual as generalised barycenter} is another form of generalized barycenter problem over positive measures, but with a penalty term $\int_\X \varphi \left(\frac{d \tilde \mu_i}{d\lambda}\right ) d\lambda$ that (in general) replaces the hard constraints $\tilde{\mu}_i \leq \lambda$ in \eqref{01_barycenter}. For the cross-entropy loss (see \eqref{def:CrossEntropy} for a precise definition), we can interpret the resulting problem \eqref{eqn:ATGeneralLoss_dual_barycentric} as a generalized barycenter problem with a Kullback-Leibler type penalization, which relaxes the hard constraint $\tilde{\mu}_i \leq \Lambda $ in \eqref{01_barycenter}. For certain loss functions with a power form that we will refer to as \textit{$\alpha$-logarithmic losses}, the reformulation \eqref{eqn:ATGeneralLoss_dual_barycentric} becomes a generalized barycenter problem with a suitable Tsallis relative entropy penalization (see  \ref{def:Tsallis} below). Other notions of barycenter problems for unbalanced measures have been recently explored in papers like \cite{Gero,Gramfort,Heinemann2022,JMLR:v26:22-1262}.

\subsection{Main results for some examples of loss functions}
\label{sec:Examples}
After presenting our main results in a general but somewhat abstract way, we make our results concrete by discussing more explicit forms for Theorems \ref{thm:main} and \ref{Thm: Dual as generalised barycenter} for some important examples of loss functions. We state each result in the form of a corollary and provide a brief discussion of its implications. The proofs of the results enunciated in this section are presented in section \ref{sec:ProofsConcreteLosses} below.

\subsubsection{Cross-entropy loss}
\label{Subsec: Optimal classifier cross-entropy}
Recall that the cross-entropy loss function is defined as
\begin{equation}
\ell_{\mathrm{ce}}(v,i) := - \log(v_i), \quad v \in \Delta_\Y, \, i \in \Y, 
\label{def:CrossEntropy}
\end{equation}
which clearly satisfies Assumption \ref{assump:LossFunction}. In the following corollary, we provide an explicit formula for the optimal robust classifier $f^*$ in \eqref{optimal classifier} when $\ell= \ell_{\mathrm{ce}}$.

\begin{corollary}[Form of optimal classifier for the cross-entropy loss] Provided Assumption \ref{assump:Cost} on the cost function $c$ is satisfied, if $\{\phi_i^*\}_{i \in \Y}$ is a solution to \eqref{dual form} for $\ell = \ell_{\mathrm{ce}}$, then the optimal classifier $f^*$ in \eqref{optimal classifier} can be explicitly written as 
\begin{equation}
    f_i^*(\tilde x) = \frac{ \exp \left( - \phi^{*c}_i (\tilde x)   \right)  }{ \sum_{j \in \Y}  \exp \left(- \phi^{*c}_j  (\tilde x)   \right)  }, \quad i\in \Y,
    \label{classifier cross-entropy pure}
\end{equation}
where we recall $\phi_i^{*c}$ is the $c$-transform of $\phi^*_i$ as introduced in \eqref{def c-transform}.
\label{cor:CrossEntropyOptimizer}
\end{corollary}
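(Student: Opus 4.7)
The plan is to apply Theorem \ref{thm:main} and then exploit the specific structure of the cross-entropy loss to compute the inner minmax in \eqref{optimal classifier} in closed form. Substituting $\ell_{\mathrm{ce}}(v,i) = -\log v_i$ into \eqref{optimal classifier}, any $f^*$ satisfying, for every $\tilde x$,
\[
f^*(\tilde x) \in \arg\min_{v \in \Delta_\Y} \max_{m \in \Delta_\Y} \sum_{i \in \Y}\bigl(-\log v_i - \phi_i^{*c}(\tilde x)\bigr) m_i
\]
is optimal. Fix $\tilde x$ and write $a_i := -\phi_i^{*c}(\tilde x)$, so that the inner objective becomes $F(v,m) := \sum_i(-\log v_i + a_i)m_i$, which is linear in $m$. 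Its maximum over $\Delta_\Y$ therefore equals $g(v) := \max_{i \in \Y}(a_i - \log v_i)$; any $v$ with a vanishing coordinate gives $g(v) = +\infty$, so the outer minimization reduces to the relatively open simplex, on which $g$ is continuous, convex, and blows up as $v$ approaches $\partial\Delta_\Y$. Hence the minimum is attained.

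The main step is an equalization argument showing that any minimizer $v^*$ must satisfy $a_i - \log v_i^* = M$ for a common constant $M$. Suppose instead that $I := \{i : a_i - \log v_i^* = M\}$ is a proper subset of $\Y$. Picking $i_1 \notin I$ and redistributing a small mass $\epsilon > 0$ from $v_{i_1}^*$ uniformly onto the coordinates indexed by $I$ strictly lowers $a_i - \log v_i$ on $I$ (since $-\log$ is strictly decreasing), while continuity keeps coordinate $i_1$ below $M$ for small $\epsilon$, and the remaining coordinates are untouched. This contradicts optimality, forcing $I = \Y$. Combining the equalization $v_i^* = e^{a_i - M}$ with $\sum_i v_i^* = 1$ yields $M = \log \sum_j e^{a_j}$ and
\[
v_i^* = \frac{e^{a_i}}{\sum_j e^{a_j}} = \frac{e^{-\phi_i^{*c}(\tilde x)}}{\sum_j e^{-\phi_j^{*c}(\tilde x)}},
\]
which is precisely \eqref{classifier cross-entropy pure}.

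The only mild technical subtlety is the singularity of $-\log v_i$ on $\partial\Delta_\Y$, which is neutralized by the observation above that any optimal $v$ has full support, so the analysis is safely restricted to the open simplex. An alternative route leading to the same conclusion is to swap min and max via Sion's theorem on the open simplex (the objective is convex in $v$, linear and hence concave in $m$, and $\Delta_\Y$ is compact) and reduce the problem to $\max_{m \in \Delta_\Y}\{H(m) + \sum_i a_i m_i\}$ with $H(m) := -\sum_i m_i \log m_i$ the Shannon entropy; this is the standard Gibbs variational problem whose unique maximizer is the softmax of $a$, and the inner minimum in $v$ at this $m$ is attained at $v = m$, yielding the same formula.
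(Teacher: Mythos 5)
Your proposal is correct, but it takes a genuinely different route from the paper. The paper's proof is a verification argument: it invokes Lemma \ref{lemma:optimal_K} (a KKT-type characterization of solutions to the minimax problem \eqref{eqn:OptimalClassifierEquation}, itself resting on Ky Fan's minimax theorem) and simply checks that the softmax $v^*$ together with the explicit multipliers $m^*=v^*$, $\gamma_v=\gamma_m=\vec 0_K$, $\lambda_v=1$, $\lambda_m=-\log\bigl(\sum_{j}e^{-\phi_j^{*c}(\tilde x)}\bigr)$ satisfies conditions 1--4 of that lemma. You instead compute the inner maximum over $m$ directly (it is linear in $m$, so equals $\max_i(a_i-\log v_i)$), observe that the resulting outer objective forces any minimizer into the open simplex, and then run an equalization/perturbation argument to show all the quantities $a_i-\log v_i^*$ must coincide at the optimum. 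Your argument is more elementary and self-contained --- it bypasses Lemma \ref{lemma:optimal_K} entirely --- and it delivers something the paper's proof only obtains in a subsequent remark, namely that the softmax is the \emph{only} minimizer (the paper's main proof merely exhibits it as \emph{a} minimizer). The trade-off is that the paper's KKT framework is reused verbatim for the $\alpha$-logarithmic and quadratic losses, where boundary behavior makes the multiplier bookkeeping the cleaner path, whereas your pointwise-max reduction relies on the loss depending only on the coordinate $v_i$ and blowing up at $v_i=0$. All steps in your argument check out, including the mass-transfer contradiction and the handling of the boundary singularity; your alternative route via Sion's theorem and the Gibbs variational principle is also sound.
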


\begin{remark}
\label{rem:OnRegularity}
It is straightforward to show that when $c(x, \tilde x)= \frac{1}{\tau}d(x, \tilde x)$ (recall that $d$ is the distance in $\X$), each of the functions $f_i^*$ in \eqref{classifier cross-entropy pure} is $1/\tau$-Lipschitz. This is a particular instance of the fact that, typically, the $c$ transform of a function will directly inherit some regularity from the cost function $c$.
\end{remark}

We also consider the formulation \eqref{Thm: Dual as generalised barycenter} for the cross-entropy.

\begin{corollary}[Barycenter formulation for the cross-entropy loss]
Assume that the loss $\ell$ is the cross-entropy loss given in \eqref{def:CrossEntropy}. Then the generalized barycenter problem \eqref{eqn:ATGeneralLoss_dual_barycentric} is equivalent to
    \begin{equation}
        1 - \inf_{   (\tilde \mu_i)_{i \in \Y},  \lambda \in \M_+(\X)} \left\{   \lambda(\mc X) +   \sum_{i\in \Y} \mathrm{KL}(\tilde \mu_i | \lambda  )  
        + \sum_{i\in \Y } C(\mu_i, \tilde \mu_i) \right\},        \label{eq:barycentric_cross_entropy_expression}
    \end{equation}  
where $ \mathrm{KL}(\tilde \mu_i| \lambda)$ is equal to $ \int_{\X} \log \left( \frac{d \tilde \mu_i}{d \lambda} \right) d \tilde \mu_i$ if $\tilde \mu_i \ll \lambda$, and $+\infty$ otherwise. 
    \label{cor:CrossEntropyBarycenter}
\end{corollary}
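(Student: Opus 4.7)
The plan is to apply Theorem \ref{Thm: Dual as generalised barycenter} directly and simply identify the function $\varphi$ associated with the cross-entropy loss. For $\ell = \ell_{\mathrm{ce}}$ we have the form $\ell(v,i)=\beta(v_i)$ with $\beta(t) = -\log(t)$, which is convex, non-increasing, and satisfies Assumption \ref{assump:LossFunction} (take $v_0$ to be the uniform distribution). So I first need to compute
\[
\varphi(s) = -\inf_{t>0}\bigl\{-s \log(t) + t\bigr\}
\]
for $s \geq 0$. For $s>0$, differentiating the inner expression in $t$ and setting to zero gives $t^* = s$, yielding $\varphi(s) = s\log s - s$. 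For $s=0$ the infimum is $0$ (take $t \to 0^+$), so $\varphi(0)=0$, consistent with the convention $0\log 0 = 0$.

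Next, I substitute this $\varphi$ into the penalty term appearing in \eqref{eqn:ATGeneralLoss_dual_barycentric}. Writing $r_i = d\tilde\mu_i/d\lambda$, I would compute
\[
\int_\X \varphi(r_i)\, d\lambda = \int_\X \bigl(r_i\log r_i - r_i\bigr)\, d\lambda = \mathrm{KL}(\tilde\mu_i \,|\, \lambda) - \tilde\mu_i(\X),
\]
using the definition of the Kullback--Leibler divergence and the fact that $\int_\X r_i\, d\lambda = \tilde\mu_i(\X)$. Summing over $i \in \Y$, the penalty contribution becomes $\sum_{i\in\Y}\mathrm{KL}(\tilde\mu_i \,|\, \lambda) - \sum_{i\in\Y}\tilde\mu_i(\X)$.

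Finally, I would justify simplifying the $-\sum_i \tilde\mu_i(\X)$ term to the constant $-1$. Since the transport cost $C(\mu_i, \tilde\mu_i)$ in \eqref{def: cost type}--\eqref{eqn:MarginalCost} is finite only when $\tilde\mu_i(\X)=\mu_i(\X)$ (balanced mass), the infimum in \eqref{eqn:ATGeneralLoss_dual_barycentric} can be restricted to such measures without loss of generality. Then $\sum_{i\in\Y}\tilde\mu_i(\X) = \sum_{i \in \Y} \mu_i(\X) = \mu(\X \times \Y) = 1$ since $\mu$ is a probability distribution. Substituting back into \eqref{eqn:ATGeneralLoss_dual_barycentric} produces
\[
-\inf_{(\tilde\mu_i), \lambda}\Bigl\{\lambda(\X) + \sum_{i\in\Y}\mathrm{KL}(\tilde\mu_i \,|\, \lambda) - 1 + \sum_{i\in\Y} C(\mu_i,\tilde\mu_i)\Bigr\} = 1 - \inf\{\cdots\},
\]
which is exactly \eqref{eq:barycentric_cross_entropy_expression}.

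The calculation is essentially a routine identification of the Legendre-type transform $\varphi$ associated with $\beta = -\log$; the only mild subtlety, and the step I would be most careful about, is the mass-balancing argument that converts $\sum_i \tilde\mu_i(\X)$ into the constant $1$. This relies on the $(+\infty)$-off-diagonal structure of the marginal cost \eqref{eqn:MarginalCost}, which forces any competitor with finite objective to preserve the mass of each class, together with the global normalization $\sum_i \mu_i(\X)=1$. Once this is observed, the rest of the proof is a direct substitution into Theorem \ref{Thm: Dual as generalised barycenter}.
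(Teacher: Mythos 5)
Your proposal is correct and follows essentially the same route as the paper: compute $\varphi(s)=s\log s - s$ from \eqref{eqn:Varphi}, identify $\int_\X \varphi(d\tilde\mu_i/d\lambda)\,d\lambda = \mathrm{KL}(\tilde\mu_i|\lambda)-\tilde\mu_i(\X)$, and use that finiteness of $\sum_i C(\mu_i,\tilde\mu_i)$ forces $\sum_i\tilde\mu_i(\X)=\sum_i\mu_i(\X)=1$. The paper's proof states exactly these two facts; you have merely written out the elementary calculus and the mass-balance justification in more detail.
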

Note that \eqref{eq:barycentric_cross_entropy_expression} is a generalized barycenter problem with an additional Kullback-Leibler penalization term. We remark that, since $\lambda$ and $\tilde{\mu}_i$ don't necessarily have the same total mass, $\mathrm{KL}(\tilde \mu_i|\lambda)$ as defined above may take on negative values. On the other hand, we can show that, for $\{ \tilde{\mu}_i \}_{i \in \Y}$ with finite cost $\sum_{i \in \Y} C(\mu_i, \tilde \mu_i)$, the value $\lambda(\mc X) +   \sum_{i\in \Y} \mathrm{KL}(\tilde \mu_i | \lambda  ) $ is always bounded from below by  $1+\log(1/k)$; see Remark \ref{rem:CrossEntropyLB} in the Appendix.

\medskip

While the above results hold for arbitrary cost functions $c$ satisfying Assumptions \ref{assump:Cost}, there is a further simplification of the problem \eqref{dual form} for specific choices of $c$. For example, when the cost function is chosen as $c= c_\veps$ with $c_\veps$ as in \eqref{eqn:0inftyCost}, which, as we discussed earlier, is directly related to the adversarial training model \eqref{eqn:StnadardAT}, we have the following result.

\begin{corollary}[Cross-entropy loss with $0$-$\infty$ cost]
Assume that the loss is the cross-entropy loss from \eqref{def:CrossEntropy} and suppose, in addition, that the cost function $c: \X  \times \X \rightarrow \R_+ \cup \{ \infty \}$ is the $0$-$\infty$ cost defined in \eqref{eqn:0inftyCost}. Then the value of problem \eqref{dual form} is the same as the value of:
 \begin{equation}
    \begin{aligned}
    &\inf_{ \{ \psi_i \}_{i \in \Y} \subseteq \G_0  }& \quad   -  &\sum_{i \in \Y} \int_\X  \log(\psi_i(x_i) )  d \mu_i (x_i), \\
    & \qquad \mathrm{s.t.}& & \sum_{i\in A} \psi_i(x_i) \leq   1, \quad \forall x_A \in \spt(\mu_A)\,\, \mathrm{ s.t. } \,  \bigcap_{i \in A} B_\veps(x_i) \not = \emptyset, \quad \forall A \subseteq \Y,
    \label{dual_CE}
      \end{aligned}
\end{equation}
where $\G_0$ is the set of functions of the form $\exp(\phi)$ for $\phi\in \G$. Moreover, if $\{ \psi_i^*\}_{i \in \Y}$ is a solution of \eqref{dual_CE}, then $\{ \phi_i^*:= \log(\psi_i^*)\}_{i \in \Y}$ is a solution of \eqref{dual form}. In particular, it is possible to directly obtain an optimal robust classifier $f^*$ from a solution of \eqref{dual_CE} for a $\G$ containing $C_b(\X)$. 
    \label{cor:CrossEntropyDual}
\end{corollary}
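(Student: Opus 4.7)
The plan is to start from the constraints of \eqref{dual form} with $\ell=\ell_{\mathrm{ce}}$ and $c=c_\veps$ and simplify them explicitly via the substitution $\psi_i = \exp(\phi_i)$. Since $\phi\mapsto e^{\phi}$ is a bijection between $\G$ and $\G_0:=\{\exp(\phi):\phi\in\G\}$ that sends $-\sum_i \int \phi_i\, d\mu_i$ to $-\sum_i \int \log \psi_i\, d\mu_i$, the equality of the two optimal values and the correspondence of minimizers will follow as soon as the constraint sets are shown to match pointwise.

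The first step is to compute $\ell_A$ and $c_A$ explicitly. For the cross-entropy loss, a direct Lagrange-multiplier argument (or, equivalently, the equality case in the Gibbs inequality) gives
\[
\ell_A(m_A) = \inf_{v\in\Delta_\Y}\Big\{-\sum_{i\in A} m_i \log v_i\Big\} = -\sum_{i\in A} m_i \log m_i,
\]
the Shannon entropy of $m_A$, attained at $v_i=m_i$ for $i\in A$ and $v_j=0$ otherwise (with the convention $0\log 0=0$). For the $0$-$\infty$ cost, the product $c_\veps(x_i,\tilde x)\, m_i$ is infinite whenever $m_i>0$ and $\tilde x\notin B_\veps(x_i)$, so
\[
c_A(x_A,m_A)=\begin{cases}0,&\text{if }\bigcap_{i\in A:\,m_i>0}B_\veps(x_i)\ne\emptyset,\\ +\infty,&\text{otherwise.}\end{cases}
\]
Then I would split the inner supremum in \eqref{dual form} according to the support $A'=\{i\in A:m_i>0\}$ of $m_A$: only pairs with $\bigcap_{i\in A'}B_\veps(x_i)\neq\emptyset$ contribute, and for such $A'$ the Gibbs variational principle (Fenchel duality between negative Shannon entropy on the simplex and log-sum-exp) yields
\[
\sup_{m\in\Delta_{A'},\, m_i>0}\Big\{\sum_{i\in A'} m_i\phi_i(x_i) - \sum_{i\in A'} m_i\log m_i\Big\}=\log\sum_{i\in A'} e^{\phi_i(x_i)}.
\]
Therefore the constraint at a fixed $(A,x_A)$ is equivalent to $\sum_{i\in A'} e^{\phi_i(x_i)}\le 1$ for every $A'\subseteq A$ with $\bigcap_{i\in A'}B_\veps(x_i)\ne\emptyset$. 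Because the projection of any $x_A\in\spt(\mu_A)$ onto the coordinates in $A'$ lies in $\spt(\mu_{A'})$, sweeping $A$ over all subsets of $\Y$ reproduces exactly the family of constraints in \eqref{dual_CE} once we set $\psi_i=e^{\phi_i}$.

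The main (but mild) technical obstacle is the bookkeeping of boundary cases in the inner supremum when some components of $m_A$ vanish; the split by support $A'\subseteq A$ above dispenses with this issue and shows why the constraint needs only be imposed on the nonempty-intersection subcollection. The correspondence of minimizers then follows immediately from the bijection $\phi\leftrightarrow e^{\phi}$, which gives that $\{\phi_i^*=\log\psi_i^*\}$ solves \eqref{dual form} whenever $\{\psi_i^*\}$ solves \eqref{dual_CE}. Finally, plugging $\phi_i^*=\log\psi_i^*$ into the explicit formula from Corollary \ref{cor:CrossEntropyOptimizer} expresses an optimal robust classifier $f^*$ directly in terms of the $c$-transforms of $\log\psi_i^*$, completing the claim.
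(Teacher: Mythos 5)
Your proposal is correct and follows essentially the same route as the paper's proof: computing $\ell_A$ as the Shannon entropy of $m_A$, reducing $c_A$ to a $0$/$\infty$ dichotomy based on whether the balls indexed by the support of $m_A$ intersect, applying the log-sum-exp (Gibbs) variational formula over each admissible support $A'$, and then changing variables $\psi_i = e^{\phi_i}$. No substantive differences.
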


Note that when $\G$ is $ C_b(\X)$ or the set of all measurable functions, the two sets $\G_0$ and $\G$ coincide. Also, note that the expression \eqref{dual_CE} is quite similar to the one for the 0-1 loss discussed earlier in this introduction, with the difference that in the objective function of \eqref{dual_CE} it is now the logarithms of the variables $\psi_i$ and not the $\psi_i$ themselves that appear. This new expression has the form of a minimization problem with a logarithmic objective function subject to linear constraints and is a special case of a \textit{$\alpha$-fair packing problem} (with $\alpha =1)$; see Appendix \ref{app:AlphaFair}.   


\begin{remark}
    As explained before, having certain numerical methods for $\alpha$-fair packing in mind, the idea of the reformulation \eqref{dual_CE} is that the non-linearity in the problem appears in the objective function, while the constraints remain linear. Of course, with the change of variables $\phi= \log(\psi)$ it is possible to return to the setting of a linear objective with nonlinear (but convex) constraints.
\end{remark}

\subsubsection{$\alpha$-logarithmic loss}  
\label{sec:alpha-fair}

Next, we consider a family of loss functions that we will refer to as \textit{$\alpha$-logarithmic} losses. Specifically, for a given $\alpha \geq 0$ and $\alpha\not =1$, the $\alpha$-logarithmic loss is defined as
\begin{equation}
\ell_{\alpha}(v,i) := - \log_\alpha(v_i), \quad \log_\alpha(t) := \frac{t^{1-\alpha} -1}{1-\alpha}, \quad t > 0.
\label{def:AlphaLoss}
\end{equation}

\begin{remark}
Note that, regardless of the specific value of $\alpha$, the $\alpha$-logarithm function $\log_\alpha$ is both increasing and concave in its domain and thus $\ell_\alpha$ satisfies Assumption \ref{assump:LossFunction}. Also, note that, as $\alpha \rightarrow 1$, we recover the cross-entropy loss $\ell_{\mathrm{ce}}$, while we obtain the 0-1 loss \eqref{eqn:01} when we set $\alpha =0$. The family of $\alpha$-logarithmic losses \eqref{def:AlphaLoss} thus interpolates between the 0-1 and cross-entropy loss functions. In economic theory, the functions $\log_\alpha$ are known as \textit{isoelastic utilities}; see the Appendix for some discussion.
\end{remark}

\begin{remark}
\label{rem:DomainexpAlpha}
For all $\alpha \geq 0$ with $\alpha \not =1$, the function $\log_\alpha$ is continuous and strictly increasing and thus invertible over its range. In the sequel, we denote its inverse by $\exp_\alpha$ (the $\alpha$-exponential) and use $\log_\alpha$ and $\exp_\alpha$ to characterize optimal robust classifiers in the setting of the loss function $\ell_\alpha$. In particular, it will be important to precisely specify $\log_\alpha$'s range, which determines $\exp_\alpha$'s domain. To do this, we must distinguish between two separate cases that, as we will soon discuss, induce very different qualitative behaviors on the corresponding adversarial models; see in particular Remark \ref{rem:QualitativeBehavalpha} below. First, note that, in case $\alpha \in [0,1)$, the range of $\log_\alpha$ is $[-\frac{1}{1-\alpha}, \infty)$ and $\log_\alpha(0) = - \frac{1}{1-\alpha}$. On the other hand, in case $\alpha >1$ the function $\log_\alpha$ has range $(-\infty, -\frac{1}{1-\alpha})$ and $\lim_{t \rightarrow 0^{+}} \log_\alpha(t)= -\infty$. In either case, the function $\exp_\alpha$ is strictly increasing and convex and can be written as
\begin{equation}
   \exp_\alpha (s) = \left((1-\alpha)s + 1\right)^{1/(1-\alpha)},
   \label{def:ExpAlpha}
\end{equation}
provided $s$ belongs to the suitable domain. For the convenience of the reader, in Figure \ref{Fig:Plots} we visualize the function $\log_\alpha$ in the two cases $0 \leq \alpha <1$ and $\alpha >1$.
\end{remark}

\begin{figure}
    \centering
    \includegraphics[width=\linewidth, trim={0 5cm 0 5cm}]{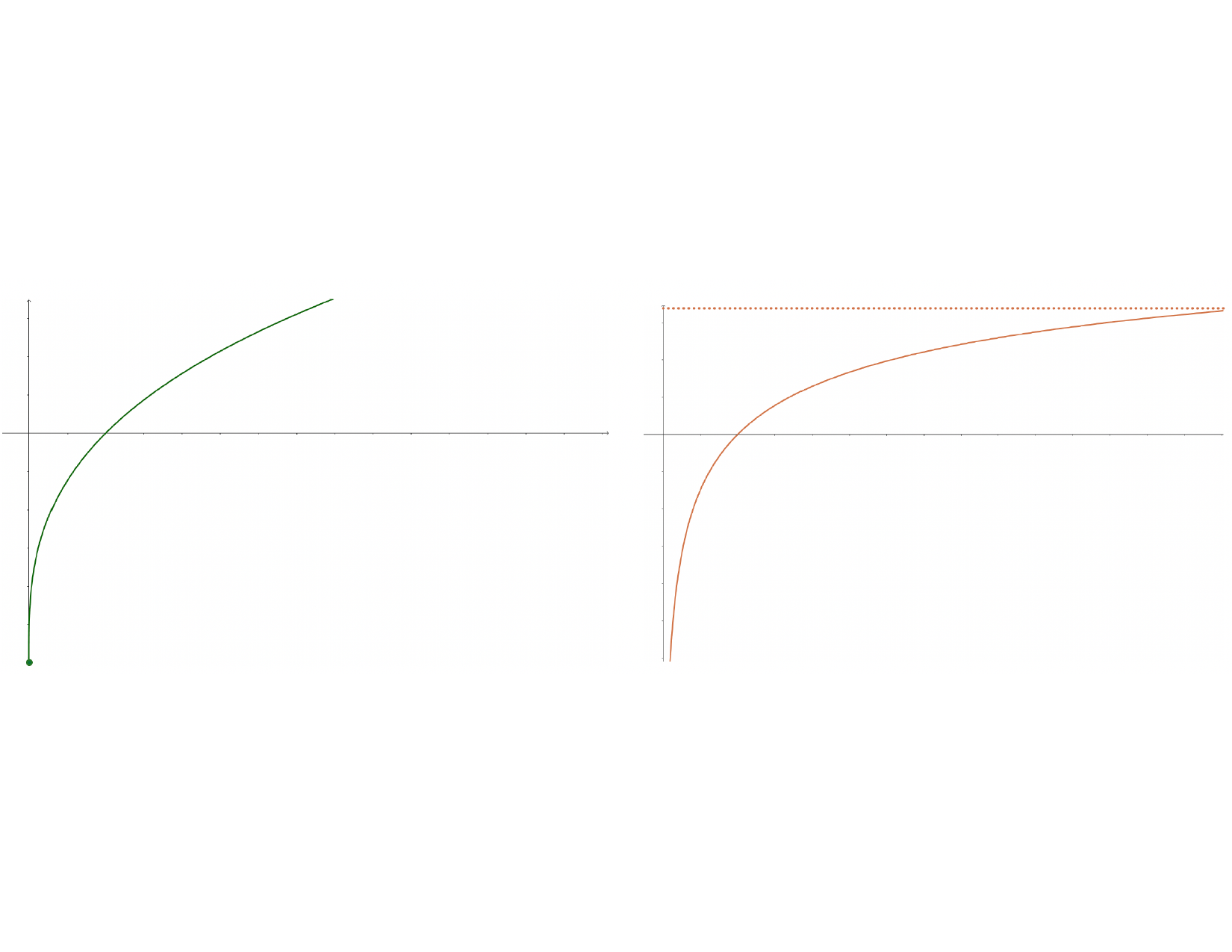}
    \put(-466, 22){{\ggreen $-\frac{1}{1-\alpha}$}}
    \put(-237, 150){{\oorange$-\frac{1}{1-\alpha}$}}
    \caption{\textit{Left:} Plot of $\log_\alpha$ when $\alpha \in [0,1)$. The function cuts the vertical axis at the value $-\frac{1}{1-\alpha}$ and diverges to $\infty$ as the argument of the function gets larger. \textit{Right:} Plot of $\log_{\alpha}$ for $\alpha>1$. The function has a horizontal asymptote at $-\frac{1}{1-\alpha}$ and a vertical one at $0$. For both cases, and regardless of the value of $\alpha$, the function $\log_\alpha$ cuts the horizontal axis at the value $1$.}
    \label{Fig:Plots}
\end{figure}

\begin{corollary}[Form of optimal classifier for $\alpha$-logarithmic loss] Let $\alpha \geq 0$, $\alpha \not =1$. Provided Assumption \ref{assump:Cost} on the cost function $c$ is satisfied, if $\{\phi_i^*\}_{i \in \Y}$ is a solution to \eqref{dual form} for $\ell = \ell_{\alpha}$, then the optimal classifier $f^*$ in \eqref{optimal classifier} can be explicitly written as 
\begin{equation}
    f_i^*(\tilde x) =  \exp_\alpha \left( \max \left\{ - \phi_i^{*c}(\tilde x) - Z(\tilde x), -\frac{1}{1-\alpha}  \right\} \right), \quad i\in \Y,
    \label{classifier alpha<1}
\end{equation}
in case $\alpha \in [0,1)$, and as
\begin{equation}
    f_i^*(\tilde x) =  \exp_\alpha \left(  - \phi_i^{*c}(\tilde x) - Z(\tilde x) \right), \quad i\in \Y,
    \label{classifier alpha>1}
\end{equation} 
when $\alpha >1$. In either case, $Z(\tilde x)$ is a ``normalization'' factor that guarantees that 
\[  \sum_{i\in \Y} f_i^*(\tilde x) =1. \]
We recall $\phi_i^{*c}$ is the $c$-transform of $\phi^*_i$ as introduced in \eqref{def c-transform}.
\label{cor:OptimalClassAlpha}
\end{corollary}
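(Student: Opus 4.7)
The plan is to specialize the characterization of $f^*$ in Theorem~\ref{thm:main} to the loss $\ell = \ell_\alpha$ and to solve the resulting minmax in closed form using the properties of $\log_\alpha$ and $\exp_\alpha$ reviewed in Remark~\ref{rem:DomainexpAlpha}. Theorem~\ref{thm:main} gives
\[
f^*(\tilde x) \in \argmin_{v \in \Delta_\Y}\, \max_{m \in \Delta_\Y}\, \sum_{i \in \Y}\bigl(\ell_\alpha(v,i) - \phi_i^{*c}(\tilde x)\bigr) m_i,
\]
and since the inner objective is linear in $m$, its maximum is attained at a vertex of $\Delta_\Y$ and equals $\max_{i \in \Y}\bigl(-\log_\alpha(v_i) - \phi_i^{*c}(\tilde x)\bigr)$. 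I would then reformulate the outer minimization via the epigraph trick: minimize $t \in \R$ over $(v,t)$ subject to $v \in \Delta_\Y$ and $\log_\alpha(v_i) \geq -\phi_i^{*c}(\tilde x) - t$ for every $i \in \Y$.

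For $\alpha > 1$, Remark~\ref{rem:DomainexpAlpha} states that $\log_\alpha$ is a strictly increasing bijection $(0,\infty) \to (-\infty, -\tfrac{1}{1-\alpha})$, so the $i$-th constraint rewrites as $v_i \geq \exp_\alpha\bigl(-\phi_i^{*c}(\tilde x) - t\bigr)$ with the implicit requirement $-\phi_i^{*c}(\tilde x) - t < -\tfrac{1}{1-\alpha}$. The feasible set is non-empty precisely when $\sum_i \exp_\alpha\bigl(-\phi_i^{*c}(\tilde x) - t\bigr) \leq 1$, and since this sum is continuous and strictly decreasing in $t$ with range $(0,\infty)$, a unique $Z(\tilde x) := t^*$ realizes equality. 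At this $t^*$ the only admissible $v$ is $v_i = \exp_\alpha\bigl(-\phi_i^{*c}(\tilde x) - Z(\tilde x)\bigr)$, which is \eqref{classifier alpha>1}. For $\alpha \in [0,1)$, the bijection $\log_\alpha : [0,\infty) \to [-\tfrac{1}{1-\alpha},\infty)$ attains its minimum at $v_i = 0$, so the $i$-th constraint is automatically satisfied by $v_i = 0$ whenever $-\phi_i^{*c}(\tilde x) - t \leq -\tfrac{1}{1-\alpha}$ and otherwise reduces to $v_i \geq \exp_\alpha\bigl(-\phi_i^{*c}(\tilde x) - t\bigr)$. Writing the tightest admissible $v_i$ uniformly yields $v_i(t) = \exp_\alpha\bigl(\max\{-\phi_i^{*c}(\tilde x) - t,\, -\tfrac{1}{1-\alpha}\}\bigr)$, and minimizing $t$ subject to $\sum_i v_i(t) = 1$ defines $Z(\tilde x)$, recovering \eqref{classifier alpha<1}; existence and uniqueness of $Z(\tilde x)$ again follow from monotonicity and continuity of the sum on the range where at least one term is strictly positive.

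The main obstacle I anticipate is the clean bookkeeping at the boundary of the domain of $\exp_\alpha$ in the case $\alpha \in [0,1)$, where ``subcritical'' indices produce $v_i = 0$ and must be subsumed together with the ``active'' indices into a single closed-form expression. The remainder of the argument is routine manipulation of the $\log_\alpha/\exp_\alpha$ identities combined with saturation of active constraints at the epigraphic minimum.
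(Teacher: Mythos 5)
Your proposal is correct, but it proves the corollary by a genuinely different route than the paper. The paper's proof verifies that the candidate $v^*$ satisfies the Kuhn--Tucker/saddle-point conditions of Lemma \ref{lemma:optimal_K}: it exhibits explicit multipliers ($m^*_i \propto (v_i^*)^\alpha$, $\lambda_v$, $\lambda_m=-Z(\tilde x)$, and, for $\alpha\in[0,1)$, nonzero $\gamma_v^i,\gamma_m^i$ on the indices where $v_i^*=0$) and checks conditions 1--4. You instead evaluate the inner maximum over $m\in\Delta_\Y$ exactly (it is linear in $m$, hence attained at a vertex and equal to $\max_{i}\bigl(-\log_\alpha(v_i)-\phi_i^{*c}(\tilde x)\bigr)$), pass to the epigraph formulation, and show that at the smallest feasible level $t^*=Z(\tilde x)$ the constraints $v_i\geq \exp_\alpha(\cdots)$ together with $\sum_i v_i=1$ pin down $v$ uniquely. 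Your argument is more elementary and self-contained (it bypasses Lemma \ref{lemma:optimal_K} and the minimax theorem behind it), and it yields as a by-product that the argmin in \eqref{optimal classifier} is a singleton, so the formula is forced rather than merely admissible. What the paper's KKT route buys in exchange is the explicit optimal dual variable $m^*$, i.e.\ the full saddle point $(v^*,m^*)$, which is reused verbatim in the proof of Corollary \ref{cor:AlphaFairDual}. Your treatment of the two regimes is consistent with Remarks \ref{rem:DomainexpAlpha} and \ref{rem:Porpertiesexp+Alpha}: for $\alpha>1$ the domain restriction $t>\tfrac{1}{1-\alpha}-\min_i\phi_i^{*c}(\tilde x)$ and the divergence of the sum at that boundary give existence and uniqueness of $Z(\tilde x)$, and for $\alpha\in[0,1)$ the floor $\log_\alpha(0)=-\tfrac{1}{1-\alpha}$ correctly converts the inactive constraints into the $\max\{\cdot,-\tfrac{1}{1-\alpha}\}$ clamp; the only point worth stating explicitly is that at $t^*$ at least one index is active, so the objective value there is exactly $t^*$, which you implicitly use.
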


\begin{remark}
\label{rem:Porpertiesexp+Alpha}
We highlight that the normalization factor $Z(\tilde x)$ in \eqref{classifier alpha<1} and \eqref{classifier alpha>1} can always be found from the values of $\phi_i^{*c}(\tilde x)$. To see this, let $\{ a_i \}_{i \in \Y}$ be a collection of real numbers (playing the role of the $\phi_i^{*c}(\tilde x)$). For $\alpha \in [0,1)$, we observe that the function
\[ Z \in \R \mapsto \sum_{i \in \Y}  \exp_\alpha \left( \max \left\{ - a_i - Z, -\frac{1}{1-\alpha}  \right\} \right)  \]
is continuous, decreasing, and has limit $0$ when $Z \rightarrow \infty$, and $\infty$ when $Z \rightarrow -\infty$. The intermediate value theorem implies that it is always possible to find $Z$ at which this function takes the value $1$. The uniqueness of this $Z$ follows from the fact that $\exp_\alpha$ is strictly increasing. 

Likewise, for $\alpha>1$, we observe that the function
\[ Z \in ( \frac{1}{1-\alpha}  -\min_{i \in \Y} a_i , \infty) \mapsto  \sum_{i \in \Y}  \exp_\alpha \left(  - a_i - Z \right)  \]
is decreasing and continuous, and has limit $0$ when $Z \rightarrow \infty$, and $\infty$ when $Z \rightarrow \frac{1}{1-\alpha}  -\min_{i \in \Y} a_i$. It is thus possible to find $Z$ at which this function takes the value $1$. The uniqueness of this $Z$ follows, again, from the fact that $\exp_\alpha$ is strictly increasing.
\end{remark}

\begin{remark}
In case $\alpha=0$ (i.e., when $\ell_\alpha = \ell_{\mathrm{01}}$), we have $\exp_\alpha(s)=1+s$, and the optimal robust classifier can be written as
\[  f_i^*(\tilde x) = 1+ \max\{ - \phi_i^c(\tilde x) - Z(\tilde x) , -1   \} = \max \{ - \phi_i^c(\tilde x) + 1 - Z(\tilde x), 0   \}.\]
This expression has a similar form to \eqref{eq:RobustClassif0-1} (derived in \cite{MOTJakwang} and discussed further in \cite{ExistenceSolutionsAT}) after we consider the change of variables $g_i= 1+\phi_i$. The apparent discrepancy in the two formulas is resolved after noticing that in \cite{ExistenceSolutionsAT} the $f_i^*$ are only assumed to sum to a number smaller than one (which can be directly related to the problem considered here); see Remark 2.2 in \cite{ExistenceSolutionsAT}. 
\end{remark}

\begin{remark}
\label{rem:QualitativeBehavalpha}
There is a fundamentally different qualitative behavior between the robust classifiers arising from the $\alpha$-logarithmic loss model when $\alpha \in [0,1)$ and when $\alpha >1$. Indeed, when $\alpha \in [0,1)$, $f^*_i$ in \eqref{classifier alpha<1} may take the value $f_i^*=0$, whereas $f_i^*$ is guaranteed to be strictly greater than zero when $\alpha>1$. This is because the loss function $\ell_\alpha$ blows up at values close to zero in the latter case while it converges to a finite value in the former, according to the discussion in Remark \ref{rem:DomainexpAlpha}. In this sense, the loss functions $\ell_\alpha$ for $\alpha>1$ behave like the cross-entropy loss, while $\ell_\alpha$ for $\alpha \in [0,1)$ induces \textit{sparsity} and behaves more similarly to the 0-1 loss.
\end{remark}

Next, we specialize Theorem \ref{Thm: Dual as generalised barycenter} to the case of the $\alpha$-logarithmic loss function $\ell_\alpha$.

\begin{corollary}[Barycenter formulation for the $\alpha$-logarithmic loss]
Assume that the loss is given by equation \eqref{def:AlphaLoss} for some $\alpha\geq 0$ different from one. Then the generalized barycenter problem \eqref{eqn:ATGeneralLoss_dual_barycentric} is equivalent to
    \begin{equation}
        1 - \inf_{   (\tilde \mu_i)_{i \in \Y},  \lambda \in \M_+(\X)} \left\{  \lambda(\mc X) 
        + \sum_{i \in \Y}  \mathrm{D}_q(\tilde \mu_i | \lambda  )  + \sum_{i\in \Y} C(\mu_i, \tilde \mu_i)    \right\}, 
        \label{eq:barycentric_alpha_fair_expression}
    \end{equation}  
    where $q= \frac{1}{\alpha}$, and $\mathrm{D}_q(\tilde \mu_i | \lambda)$ is the $q$-Tsallis relative entropy between $\tilde \mu_i$ and $\lambda$:
\begin{equation}
\mathrm{D}_q(\tilde \mu_i| \lambda) := \int_{\X} \frac{\left(\frac{d \tilde \mu_i}{d \lambda} \right)^{q-1} - 1 }{q-1}   d\tilde \mu_i, 
\label{def:Tsallis}
\end{equation}
if $\tilde \mu_i \ll \Lambda$, and $+\infty$ otherwise. In case $\alpha=0$, i.e., when $q = \infty$, the above must be interpreted as $0$ if $\tilde \mu_i \leq \lambda$ and $\infty$ otherwise. 
    \label{cor:alphaFairBarycenter}
\end{corollary}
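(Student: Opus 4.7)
The plan is to directly specialize Theorem \ref{Thm: Dual as generalised barycenter} to the loss $\ell_\alpha$. For this loss, $\beta(t) = -\log_\alpha(t) = (1 - t^{1-\alpha})/(1-\alpha)$, and the argument reduces to two calculations: (a) an explicit expression for the function $\varphi$ defined in \eqref{eqn:Varphi}, and (b) an algebraic identity that rewrites the integral penalty $\int_\X \varphi(d\tilde\mu_i/d\lambda)\, d\lambda$ as $\mathrm{D}_q(\tilde\mu_i|\lambda) - \tilde\mu_i(\X)$. The constant $1$ in \eqref{eq:barycentric_alpha_fair_expression} will then come out of the linear correction $-\tilde\mu_i(\X)$ and the mass balance enforced by the optimal transport cost.

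For step (a) with $\alpha \in (0,1) \cup (1,\infty)$, I would set the derivative of $t \mapsto \beta(t)s + t$ to zero; this yields a unique positive critical point $t^* = s^{1/\alpha}$, which is a genuine minimum since $\beta$ is convex (Assumption \ref{assump:LossFunction}). Substituting back and using $1 + (1-\alpha)/\alpha = 1/\alpha$ gives $\varphi(s) = (\alpha s^{1/\alpha} - s)/(1-\alpha)$, or equivalently $\varphi(s) = (s^q - q s)/(q-1)$ with $q = 1/\alpha$. The degenerate case $\alpha = 0$ must be treated separately, because then $\beta(t) = 1-t$ is affine and the infimum is not attained in the interior: a direct inspection of $\inf_{t>0}\{s + t(1-s)\}$ shows $\varphi(s) = -s$ for $s \leq 1$ and $\varphi(s) = +\infty$ for $s > 1$, which enforces the pointwise constraint $d\tilde\mu_i/d\lambda \leq 1$ and is consistent with the $q = \infty$ convention for $\mathrm{D}_q$ stated in the corollary.

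For step (b), writing $\rho = d\tilde\mu_i/d\lambda$, the algebraic identity $\rho^q - q\rho = (\rho^q - \rho) - (q-1)\rho$ together with $d\tilde\mu_i = \rho\, d\lambda$ gives
\[ \int_\X \varphi(\rho)\, d\lambda \;=\; \int_\X \frac{\rho^q - q\rho}{q-1}\, d\lambda \;=\; \mathrm{D}_q(\tilde\mu_i|\lambda) \;-\; \tilde\mu_i(\X). \]
Now I would use that $C(\mu_i, \tilde\mu_i) = +\infty$ unless $\tilde\mu_i(\X) = \mu_i(\X)$ (the optimal transport cost requires matching marginals), so the infimum in \eqref{eqn:ATGeneralLoss_dual_barycentric} may be restricted to $\tilde\mu_i$ with matching total mass. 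Summing over $i \in \Y$ and using that $\mu$ is a probability measure then yields $\sum_{i\in\Y} \tilde\mu_i(\X) = \sum_{i\in \Y} \mu_i(\X) = 1$, which exits the infimum as a constant $-1$ and, combined with the leading minus sign in \eqref{eqn:ATGeneralLoss_dual_barycentric}, produces the expression $1 - \inf(\cdot)$ that appears in \eqref{eq:barycentric_alpha_fair_expression}.

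The only real care required is tracking the sign of $1-\alpha$, which flips between the regimes $\alpha \in (0,1)$ and $\alpha > 1$; this affects the individual signs of terms in $\varphi$ and in $\mathrm{D}_q$, but not the final identity in step (b), which is purely algebraic. Beyond this bookkeeping and the separate treatment of the degenerate $\alpha = 0$ case, the proof is a direct substitution into Theorem \ref{Thm: Dual as generalised barycenter}, and no genuine obstacle is expected.
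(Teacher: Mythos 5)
Your proposal is correct and follows essentially the same route as the paper's proof: compute $\varphi$ explicitly from \eqref{eqn:Varphi} (the formula $\varphi(s)=(s^q-qs)/(q-1)$ you obtain agrees with the paper's $s\bigl(\frac{s^{q-1}-1}{q-1}\bigr)-s$), handle $\alpha=0$ separately, and use the mass-balance constraint forced by finiteness of $\sum_i C(\mu_i,\tilde\mu_i)$ to extract the constant $1$. The paper states these two facts and stops, whereas you also spell out the algebraic identity $\int_\X\varphi(\rho)\,d\lambda=\mathrm{D}_q(\tilde\mu_i|\lambda)-\tilde\mu_i(\X)$ and the justification that the interior critical point $t^*=s^{1/\alpha}$ is a genuine minimum; these are correct and welcome details, not a different argument.
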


Similarly to the cross-entropy case, $\mathrm{D}_q(\tilde \mu_i|\lambda)$ as defined above may take on negative values. On the other hand, we can show that, for $\{ \tilde{\mu}_i \}_{i \in \Y}$ with finite cost $\sum_{i \in \Y} C(\mu_i, \tilde \mu_i)$, the quantity $\lambda(\mc X) +   \sum_{i\in \Y} \mathrm{D}_q(\tilde \mu_i | \lambda  ) $ is always bounded from below by  $1+ \log_\alpha(1/k)$; see Remark \ref{rem:CrossEntropyLB} in the Appendix.

\medskip

As for the case of the cross-entropy, when the cost function $c$ is of the form \eqref{eqn:0inftyCost} we can rewrite problem \eqref{dual form} for the $\alpha$-logarithmic loss in the following equivalent form.

\begin{corollary}[$\alpha$-logarithmic loss with $0$-$\infty$ cost]
Assume that the loss is the $\alpha$-logarithmic loss from \eqref{def:AlphaLoss} for some $\alpha \geq 0$ with $\alpha \not =1$. Suppose, in addition, that the cost function $c: \X  \times \X \rightarrow  \R_+ \cup \{\infty\}$ is the $0$-$\infty$ cost defined in \eqref{eqn:0inftyCost}. Finally, let $\G$ be any set of measurable functions on $\X$ in case $\alpha>1$, and let $\G$ be a set of measurable functions that is closed under pointwise maximum with a constant (i.e., if $\phi \in \G$, then $\max \{\phi, a \} \in \G$ for any $a \in \R$) in case $\alpha \in [0,1)$. Then the value of problem \eqref{dual form} is the same as the value of
 \begin{equation}
    \begin{aligned}        
    &  \inf_{ \{ \psi_i \}_{i \in \Y} \subseteq \G_0 }& \    -  &\sum_{i \in \Y} \int_\X \log_\alpha(\psi_i(x_i))  d \mu_i (x_i) , \\
    & \quad \mathrm{s.t.}& & \sum_{i\in A} \psi_i(x_i) \leq 1, \, 0 \leq \psi_i(x_i),\, \forall x_A \in \spt(\mu_A) \,  \mathrm{ s.t. }\,  \bigcap_{i \in A} B_\veps(x_i) \not = \emptyset, \,  \forall A \subseteq \Y,
    \label{dual_alpha fair}
      \end{aligned}
\end{equation}
where $\G_0$ is the set of functions of the form $\exp_\alpha(\phi)$ for $\phi\in \G$. Furthermore, if $\{ \psi_i^* \}_{i \in \Y}$ is a solution of the above problem, then $\{ \phi_i^*:= \log_\alpha(\psi_i^*)\}_{i \in \Y}$ is a solution of \eqref{dual form}. In particular, it is possible to directly obtain an optimal robust classifier $f^*$ from a solution of \eqref{dual_alpha fair} when $\G$ contains $C_b(\X)$.
\label{cor:AlphaFairDual}
\end{corollary}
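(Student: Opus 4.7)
The plan is to massage problem \eqref{dual form} for $\ell = \ell_\alpha$ and $c = c_\veps$ into \eqref{dual_alpha fair} through three steps: simplifying the cost term, applying a minimax swap, and changing variables $\phi_i \leftrightarrow \log_\alpha \psi_i$.

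\textbf{Step 1 (Reducing the cost term).} First, for the $0$-$\infty$ cost \eqref{eqn:0inftyCost}, the quantity $c_A(x_A, m_A) = \inf_{\tilde x} \sum_{i \in A} c_\veps(x_i, \tilde x) m_i$ equals $0$ whenever $\bigcap_{i \in \spt(m_A)} B_\veps(x_i) \neq \emptyset$ and $+\infty$ otherwise. Splitting the supremum over $m_A \in \Delta_A$ according to the support of $m_A$, and using that $\ell_A(m_A)$ only depends on the mass-carrying coordinates of $m_A$, I would verify that the family of constraints of \eqref{dual form} indexed by all $(A, x_A)$ collapses (the remainder being implied) to the subfamily indexed by those $(A, x_A)$ with $\bigcap_{i \in A} B_\veps(x_i) \neq \emptyset$. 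Each such constraint takes the form
\begin{equation}
\sup_{m_A \in \Delta_A} \left\{ \sum_{i \in A} m_i \phi_i(x_i) + \inf_{v \in \Delta_\Y} \sum_{i \in A} (-\log_\alpha(v_i)) m_i \right\} \leq 0.
\end{equation}

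\textbf{Step 2 (Minimax and reparameterization).} Rewriting the left-hand side as $\sup_{m_A}\inf_{v}\sum_i m_i(\phi_i(x_i) - \log_\alpha(v_i))$, I would apply Sion's minimax theorem: the integrand is linear in $m$ on the compact convex set $\Delta_A$ and, with the convention $-\log_\alpha(0) := +\infty$ when $\alpha > 1$ (and using continuity of $\log_\alpha$ on $[0,\infty)$ when $\alpha \in [0,1)$), convex and lower-semicontinuous in $v$ on the compact convex set $\Delta_\Y$. The swapped expression $\inf_{v \in \Delta_\Y}\max_{i \in A}(\phi_i(x_i) - \log_\alpha(v_i))$ is $\leq 0$ iff there exists $v \in \Delta_\Y$ with $\phi_i(x_i) \leq \log_\alpha(v_i)$ for every $i \in A$. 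Setting $\psi_i := \exp_\alpha(\phi_i)$ and using that $\log_\alpha$ is strictly increasing, this amounts to the existence of $v \in \Delta_\Y$ with $\psi_i(x_i) \leq v_i$ for all $i \in A$, which in turn holds iff $\sum_{i \in A} \psi_i(x_i) \leq 1$ (necessity is immediate; sufficiency follows by setting $v_i := \psi_i(x_i)$ on $A$ and distributing the residual mass $1 - \sum_{i \in A} \psi_i(x_i) \geq 0$ over $\Y \setminus A$). Under the same change of variables the objective $-\sum_i \int \phi_i\, d\mu_i$ becomes $-\sum_i \int \log_\alpha(\psi_i)\, d\mu_i$, identifying \eqref{dual form} with \eqref{dual_alpha fair}.

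\textbf{Step 3 (Classifier recovery and main obstacle).} Given a solution $\{\psi_i^*\}$ of \eqref{dual_alpha fair}, the correspondence above produces a solution $\{\phi_i^* := \log_\alpha(\psi_i^*)\}$ of \eqref{dual form}, and Corollary \ref{cor:OptimalClassAlpha} then yields an optimal robust classifier through the $c$-transforms of the $\phi_i^*$. The hard part will be the rigorous justification of the minimax swap when $\alpha > 1$, where $v \mapsto -\log_\alpha(v_i)$ is only lower-semicontinuous (not continuous) on $\Delta_\Y$; a careful application of Sion's theorem (or an approximation argument truncating $v_i$ away from zero) resolves this. A related subtlety, addressed by the hypothesis that $\G$ be closed under pointwise maxima with constants in the $\alpha \in [0,1)$ case, is ensuring that $\phi_i^* = \log_\alpha(\psi_i^*)$ actually lies in $\G$ at points where $\psi_i^*$ vanishes (so that $\phi_i^* = -\frac{1}{1-\alpha}$): the closure property lets one safely replace $\phi_i^*$ by $\max\{\phi_i^*, -\frac{1}{1-\alpha}\}$ without leaving $\G$ and without changing the value of \eqref{dual form}.
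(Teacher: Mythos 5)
Your reduction is correct and reaches the right reformulation, but the mechanism in your Step 2 is genuinely different from the paper's. After the common Step 1 (which matches the paper: the $0$-$\infty$ cost kills all constraints except those indexed by tuples with $\bigcap_{i\in A}B_\veps(x_i)\neq\emptyset$), the paper does \emph{not} pass to $\inf_{v}\max_{i\in A}\bigl(\phi_i(x_i)-\log_\alpha(v_i)\bigr)$. Instead it exhibits an explicit saddle point $(m^*,v^*)$ of the inner max--min problem by recycling the formulas from Corollary \ref{cor:OptimalClassAlpha} ($v_i^*=\exp_\alpha(\phi_i(x_i)-Z(x_A))$, suitably truncated when $\alpha\in[0,1)$, and $m_i^*=(v_i^*)^\alpha/\sum_j(v_j^*)^\alpha$), evaluates the objective at the saddle, and shows the constraint is equivalent to $Z(x_A)\le 0$, hence to $\sum_{i\in A}\exp_\alpha(\phi_i(x_i))\le 1$. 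Your route replaces that computation by an abstract minimax swap followed by the elementary observation that a family $\{\psi_i(x_i)\}_{i\in A}$ is dominated coordinatewise by some $v\in\Delta_\Y$ iff $\sum_{i\in A}\psi_i(x_i)\le 1$; this is cleaner where it works, and your handling of the $\alpha\in[0,1)$ truncation via the closure-under-max hypothesis matches the paper's exactly. The trade-off is precisely the step you defer: for $\alpha>1$ the payoff $\sum_i m_i(\phi_i(x_i)-\log_\alpha(v_i))$ takes the value $+\infty$ on the boundary $v_i=0$, $m_i>0$, so it is lower- but not upper-semicontinuous in $m$ there, and Sion's theorem does not apply off the shelf; you would need either the truncation $v\in\{v:v_i\ge\delta\}$ plus a limiting argument (checking that for fixed $m$ the inner infimum is attained away from the boundary on $\spt(m)$), or Ky Fan's theorem for extended-real convex--concave functions as invoked in Lemma \ref{lemma:optimal_K}. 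Exhibiting the saddle point directly, as the paper does, certifies the swap with no topological hypotheses to verify, which is the main thing its longer computation buys. Two minor points: the constraint $0\le\psi_i$ is automatic from $\psi_i=\exp_\alpha(\phi_i)$, and your sufficiency argument "distribute the residual mass over $\Y\setminus A$" breaks when $A=\Y$ and $\sum_{i\in A}\psi_i(x_i)<1$; since only the inequalities $\psi_i(x_i)\le v_i$ are needed, simply add the residual to any coordinate of $A$.
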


As for the cross-entropy case, when $\G$ is $C_b(\X)$ or the set of all measurable functions we have $\G_0=\G$. Also, note that problem \eqref{dual_alpha fair} is quite similar to \eqref{dual_CE}, where, instead of having a standard logarithm in the objective, we use an $\alpha$-logarithm (a power function). As for the cross-entropy case, when $\mu$ is an empirical measure problem \eqref{dual_alpha fair} can be solved using algorithms designed for the $\alpha$-fair packing problem such as those presented in \cite{Diakonikolas_Fair_2020}.

\begin{remark}
Since the 0-1 loss $\ell_{01}$ is precisely $\ell_\alpha$ with $\alpha=0$, it is natural to ask whether the results obtained in this paper recover the results in \cite{MOTJakwang} and \cite{ATThroughOT} that were discussed earlier in this introduction. In Appendix \ref{app:Linear}, we show that this is indeed the case.     
\end{remark}

\nc

\subsubsection{Quadratic loss}
\label{Subsec: Optimal classifier quadratic}
The last example considered in this paper is the quadratic loss function defined according to 
\begin{equation}
\ell_Q(v,i):= \lVert v - e_i \rVert^2, \quad v \in \Delta_\Y,
\label{eq:QuadraticLoss}
\end{equation}
where we identify the label set $\Y$ with the set $\{1, \dots, K \}$ (for $K= |\Y|$) and $e_1, \dots, e_K$ is the canonical basis for $\R^K$, for convenience. In contrast to the previous examples, $\ell(v, i)$ depends on all entries of $v$ and not just on the $i$-th entry of $v$. For concreteness, we only present the form of the optimal classifier \eqref{optimal classifier} in this case.

\begin{corollary}[Form of optimal classifier for quadratic loss] Provided Assumption \ref{assump:Cost} on the cost function $c$ is satisfied, if $\{ \phi_i^*\}_{i \in \Y}$ is a solution to \eqref{thm:main} for $\ell = \ell_{Q}$, then the optimal classifier $f^*$ in \eqref{optimal classifier} can be written as follows for a given $\tilde x$: after relabeling the indices $i \in \Y $ so that $\phi_1^{*c}(\tilde x) \leq \dots \leq \phi_K^{*c}(\tilde x)$, and defining $i^*$ and $c^*$ according to 
\[ i^* := K \wedge \min \{ i = 1, \ldots, K \quad  \mathrm{ s.t. } \quad   i \phi^{*c}_{i+1}(\tilde x) -   \sum_{j=1}^{i}\phi_j^{*c}(\tilde x) >2    \}\] 
\[c^* := \frac{1}{i^*} (2+ \sum_{i=1}^{i^*} \phi^{*c}_i(\tilde x)), \]
we have
\begin{equation}
f_i^*(\tilde x) := 
\begin{cases}
    \frac{1}{2} (c^*  - \phi^{*c}_i(\tilde x)),  & \mathrm{ if }\, i\leq i^*, \\
    0, & \mathrm{else}.
\end{cases}   
    \label{classifier quadratic}
\end{equation}

\label{cor:OptimalClassQuadratic}
\end{corollary}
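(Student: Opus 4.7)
Fix $\tilde x \in \X$ and set $a_i := \phi_i^{*c}(\tilde x)$ for $i \in \Y$. By Theorem \ref{thm:main}, $f^*(\tilde x)$ must belong to the $v$-argmin of the saddle problem
\[
\min_{v \in \Delta_\Y}\ \max_{m \in \Delta_\Y}\ \sum_{i \in \Y}\bigl(\|v - e_i\|^2 - a_i\bigr) m_i.
\]
The objective is strictly convex in $v$ and linear in $m$, and both simplexes are compact convex, so Sion's minimax theorem yields a saddle point $(v^*, m^*)$ with $v^* = f^*(\tilde x)$. Using $\sum_i m_i = 1$, the identity
\[
\sum_{i \in \Y} m_i \|v - e_i\|^2 = \|v - m\|^2 + \bigl(1 - \|m\|^2\bigr)
\]
shows that for fixed $m \in \Delta_\Y$ the unique inner minimizer is $v = m$. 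Plugging back, the saddle reduces to the strictly convex finite-dimensional quadratic program
\[
f^*(\tilde x) = m^* = \arg\min_{m \in \Delta_\Y} \Bigl( \|m\|^2 + \sum_{i \in \Y} a_i m_i \Bigr),
\]
whose solution is unique.

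The KKT conditions for this QP give $m_i^* = \max\{0,\, (c^* - a_i)/2\}$, where $c^* \in \R$ is the Lagrange multiplier for the equality $\sum_i m_i = 1$. After relabeling so that $a_1 \leq \cdots \leq a_K$, the active set is automatically of the form $\{1, \ldots, i^*\}$ for some $i^* \in \{1, \ldots, K\}$, and the normalization $\sum_{j=1}^{i^*}(c^* - a_j)/2 = 1$ forces $c^* = \tfrac{1}{i^*}\bigl(2 + \sum_{j=1}^{i^*} a_j\bigr)$, matching the formula in the statement.

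It remains to identify $i^*$ from the sorted values. Define $h(i) := i\, a_{i+1} - \sum_{j=1}^i a_j$ (with $a_{K+1} = +\infty$); since $h(i+1) - h(i) = (i+1)(a_{i+2} - a_{i+1}) \geq 0$, the sequence $h$ is non-decreasing. The KKT requirement $m_{i^*+1}^* = 0$ translates into $h(i^*) \geq 2$, and $m_{i^*}^* > 0$ translates into $h(i^* - 1) < 2$, so taking $i^*$ to be the smallest index satisfying $h(i^*) > 2$ (or $K$ by default) yields the correct cutoff. The main subtlety, and really the only non-routine step, is reconciling the strict inequality ``$> 2$'' in the definition of $i^*$ with the non-strict bound produced by KKT: when $h(i) = 2$ exactly one has $c^* = a_{i+1}$ so that $m_{i+1}^* = 0$, and a direct check shows that both $i$ and $i+1$ produce the same $c^*$ and the same $m^*$, making the formula insensitive to the tie-breaking convention.
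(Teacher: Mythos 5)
Your proof is correct, and it reaches the same KKT system as the paper but by a different route. The paper's proof is a \emph{verification}: it plugs the candidate $v^*$ from \eqref{classifier quadratic} into the four saddle-point conditions of Lemma \ref{lemma:optimal_K}, choosing $m^*=v^*$, $\lambda_v=0$, $\gamma_v=\vec 0_K$, and explicit $\lambda_m,\gamma_m$, and checks non-negativity of the multipliers and of $v^*$ from the definition of $i^*$. You instead \emph{derive} the formula: the identity $\sum_i m_i\lVert v-e_i\rVert^2=\lVert v-m\rVert^2+1-\lVert m\rVert^2$ collapses the inner minimization to $v=m$, turning the game into the strictly convex QP $\min_{m\in\Delta_\Y}(\lVert m\rVert^2+\sum_i a_i m_i)$ — essentially a Euclidean projection of $-a/2$ onto the simplex — whose KKT conditions produce $i^*$, $c^*$, and $m^*=v^*$ with uniqueness for free. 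The identification $m^*=v^*$ is present in the paper too, but only implicitly through condition 1 of Lemma \ref{lemma:optimal_K} ($\partial_v\vec\ell(v)\,m=-2m+2v$ vanishes at $m=v$). Your approach buys two things the paper's does not make explicit: (i) it shows the stated $v^*$ is the \emph{unique} minimizer rather than merely \emph{a} minimizer, and (ii) it resolves the boundary case $h(i)=2$, confirming that the strict inequality in the paper's definition of $i^*$ is a harmless tie-breaking convention since $i$ and $i+1$ then yield the same $c^*$ and the same $m^*$. The only cosmetic caveat is that your appeal to "Sion's minimax theorem" for existence of a saddle point should really be the standard compactness-plus-convex-concave argument (as the paper does via Ky Fan in Lemma \ref{lemma:optimal_K}); Sion by itself gives only equality of the two optimal values, though existence then follows immediately here.
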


\subsection{Related literature} 
There is a growing literature on lower bounds in adversarial classification. Architecture-specific results were obtained in \cite{yin2019rademacher} for linear classifiers; see also \cite{khim2018adversarial} for linear classifiers and neural networks. An alternative approach, based on obtaining classifier-agnostic bounds that hold regardless of model architectures, which is the perspective explored in this paper, was pioneered by \cite{bhagoji2019lower} using optimal transport theory in the setting of binary classification and 0-1 loss function. This analysis was later extended by \cite{VarunMuni2}, where more detailed existence and characterization results were provided; other related results have been established in \cite{awasthi2021existence, frank2023the, frank2023existence}. Still in the binary classification setting, the work \cite{Carlier} proves the existence of continuous optimal robust classifiers assuming sufficient regularity of the loss function. In \cite{LeonNGTRyan}, adversarial training was studied through a geometric perspective, and in \cite{Bungert_Gammaconvergence_2024} the authors studied a related notion of nonlocal perimeter and used $\Gamma$-convergence techniques to study the behavior of solutions to adversarial training in the small adversarial budget regime. A related paper is \cite{RyanRachel}, where a stronger form of convergence characterizing the asymptotic behavior of robust classifiers in the small adversarial budget regime was considered. We also mention the work \cite{BUNGERT2024103625}, which provides a deeper connection between adversarial training and geometric variational analysis.

For multiclass problems, \cite{ExistenceSolutionsAT} proved existence of solutions to the learner-agnostic adversarial risk minimization problem, \cite{ATThroughOT} exploited a connection to multimarginal optimal transport to deduce computationally tractable algorithms to compute lower bounds, while \cite{NEURIPS2023_9b867f0e} characterized the problem through the notion of conflict hypergraph. All these results consider the 0-1 loss.

\nc

\subsection{Outline} The rest of the paper is organized as follows. 
Section \ref{sec:Proofs} contains most of the proofs of the theoretical results stated in sections \ref{sec:IntroMianResults} and \ref{sec:Examples}. We begin by proving Theorem \ref{thm:main}, and, in the process, we develop other equivalent reformulations of the original problem \eqref{eqn:ATGeneralLoss} that could potentially be used to design alternative methods to solve it. In section \ref{sec:ProofsConcreteLosses}, we present the proofs of the results for the cross-entropy, $\alpha$-logarithmic, and quadratic loss functions that we stated in section \ref{sec:Examples}. In section \ref{subsec:Lowerbounds}, we use our theoretical results to derive lower bounds for the robust training of learning models in a simple, yet concrete practical setting.

In the Appendix, we present additional technical auxiliary results used in the proof of Theorem \ref{thm:main}, present the proof of Theorem \ref{Thm: Dual as generalised barycenter}, present a brief discussion of $\alpha$-fair packing, and provide more details on how the results derived in this paper recover the reformulations of \eqref{eqn:ATGeneralLoss} for the 0-1 loss case derived in \cite{MOTJakwang} and \cite{ATThroughOT}.

\medskip 

\textit{Additional notation:} We use $\M_+(\X)$ and $\M_+(\X \times \X)$ to denote the set of finite positive measures over $\X$ and $\X \times \X$, respectively. For a given $\nu \in \M_+(\X)$, we use $\spt(\nu)$ to denote $\nu$'s support and use $\Gamma_1(\nu)$ to denote the set of measures $\pi \in \M_+(\spt(\nu) \times \X)$ whose first marginal is equal to $\nu$. In the sequel, we may use $\X_i$ to represent the set $\spt(\mu_i)$, especially when notation gets particularly burdensome.

Given $\nu, \tilde \nu \in \M_+(\X)$, we use $\Gamma(\nu, \tilde \nu)$ to represent the set of couplings between $\nu$ and $\tilde \nu$, i.e., the set of $\pi \in \M_+(\X \times \X)$ whose first and second marginals are $\nu$ and $\tilde \nu$, respectively. Note that the set $\Gamma(\nu, \tilde \nu)$ is nonempty if and only if $\nu(X)= \tilde{\nu}(X)$.

For a given $x \in \X$, we use $B_\veps(x)$ to denote the closed ball of radius $\veps$ around $x$. In the sequel, we may identify $\Y$ with the set $\{1, \dots, K \}$ (where $K= |\Y|$) without further mention. We use $u \odot v$ to denote the Hadamard product (coordinatewise product) between two vectors of the same dimension. The vectors $\vec{0}_K, \vec{1}_K$ are the $K$-dimensional vectors with all zeroes and all ones, respectively. $\mathbb{I}_K$ represents the $K\times K$ identity matrix. The symbol $\otimes $ is used to describe product measures (as in $\otimes _{i \in A} \mu_i$) or tensor products between two vectors (as in $u \otimes v$); in the latter case, $u \otimes v$ is the matrix whose $ij$ entry is $u_iv_j$. No confusion should arise about the intended use of the symbol $\otimes$.

\section{Proofs}
\label{sec:Proofs}
In this section, we present the proofs of the results stated in the introduction, with the exception of the proof of Theorem \ref{Thm: Dual as generalised barycenter}, which is postponed to the Appendix. We begin with the proof of Theorem \ref{thm:main}.

\subsection{Proof of Theorem \ref{thm:main}}
\label{sec:ProofsMain}
We first derive some useful inequalities using weak duality arguments.

\begin{proposition}
Suppose that $\ell(\cdot, i): \Delta_\Y \rightarrow [0,\infty] $ is a continuous function for all $i \in \Y$. Then the value of problem \eqref{eqn:ATGeneralLoss} for $\F=\F_{\mathrm{all}}$ is smaller than or equal to the value of
    \begin{equation}
    \begin{aligned}
    &\inf_{(\phi_i)_{i \in \mc Y} \in \G , f \in \F_{\mathrm{all}}   }& \quad  -  &\sum_{i\in \mc Y} \int_\X \phi_i(x) d\mu_i(x), \\
    & \qquad \mathrm{s.t.}& & -\phi_i(x) + c(x, \tilde x) \geq \ell(f(\tilde x),i),  \quad \forall x \in \spt(\mu_i),\tilde x \in \X, i\in \Y,
    \label{eq:ATGeneralLoss_Dual_0}
      \end{aligned}
      \end{equation}
      \label{Prop:First}
\end{proposition}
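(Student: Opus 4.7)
The approach is to establish the inequality by a weak duality argument, leveraging Kantorovich duality for each of the optimal transport costs $C(\mu_i, \tilde \mu_i)$ that appear in \eqref{eqn:ATGeneralLoss}. The crucial observation is that the pointwise constraint in \eqref{eq:ATGeneralLoss_Dual_0} is precisely the admissibility condition for the dual of the underlying transport problem, with a specific (and natural) choice of the second potential tied to the loss evaluated at $f$.

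Concretely, fix an $f \in \F_{\mathrm{all}}$, a tuple $(\phi_i)_{i \in \Y} \subseteq \G$ satisfying the constraint of \eqref{eq:ATGeneralLoss_Dual_0}, and an arbitrary family $\{\tilde\mu_i\}_{i \in \Y}$ of finite positive measures. Define $\psi_i(\tilde x) := \ell(f(\tilde x), i)$ for each $i \in \Y$; this is a Borel measurable function, since $f$ is Borel and $\ell(\cdot, i)$ is continuous by assumption. The constraint $-\phi_i(x) + c(x, \tilde x) \geq \ell(f(\tilde x), i)$ rearranges to $\phi_i(x) + \psi_i(\tilde x) \leq c(x, \tilde x)$ for every $x \in \spt(\mu_i)$ and $\tilde x \in \X$, which is exactly the admissibility inequality in the Kantorovich dual of $C(\mu_i, \tilde\mu_i)$. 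Invoking weak Kantorovich duality one coordinate at a time gives
\[
C(\mu_i, \tilde\mu_i) \,\geq\, \int_\X \phi_i\, d\mu_i \,+\, \int_\X \psi_i\, d\tilde\mu_i \,=\, \int_\X \phi_i\, d\mu_i \,+\, \int_\X \ell(f(\tilde x), i)\, d\tilde\mu_i,
\]
which is standard for lower-semicontinuous non-negative costs (guaranteed by Assumption \ref{assump:Cost}). Summing over $i \in \Y$ and rearranging produces the cancellation of the $\int \ell(f(\tilde x), i)\, d\tilde\mu_i$ terms:
\[
\sum_{i \in \Y} \int_\X \ell(f(\tilde x), i)\, d\tilde\mu_i \,-\, \sum_{i \in \Y} C(\mu_i, \tilde\mu_i) \,\leq\, -\sum_{i \in \Y} \int_\X \phi_i\, d\mu_i.
\]
Since the right-hand side does not depend on $\{\tilde\mu_i\}_{i \in \Y}$, I would then take the supremum over these measures on the left and subsequently the infimum over admissible pairs $((\phi_i)_{i\in\Y}, f)$, yielding the claimed inequality between \eqref{eqn:ATGeneralLoss} and \eqref{eq:ATGeneralLoss_Dual_0}.

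There is no really serious obstacle in the argument. The only points that need attention are (i) the measurability of $\psi_i$, resolved by the continuity assumption on $\ell(\cdot, i)$ and the Borel measurability of $f$; and (ii) potential $+\infty$ values of $\ell$ or $c$, which cause no problem because all integrands are non-negative and the chain of inequalities remains valid in the extended real line (if the left-hand side of the target inequality is $-\infty$, the claim is trivial; otherwise the Kantorovich lower bound above makes both sides finite, or at least comparable, whenever the $\tilde \mu_i$ in question yield a finite transport cost).
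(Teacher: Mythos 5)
Your proposal is correct and is essentially the paper's own weak-duality argument: the paper Lagrangian-izes the first-marginal constraint and swaps sup and inf, while you integrate the pointwise admissibility inequality against a coupling (weak Kantorovich duality coordinatewise), which is the same mechanism packaged slightly differently. Your handling of the side issues (measurability of $\ell(f(\cdot),i)$, extended-real values, and the case $\Gamma(\mu_i,\tilde\mu_i)=\emptyset$ giving $C=+\infty$) is also fine.
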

provided $\G$ is a set of measurable functions containing $C_b(\X)$.
\begin{proof}
Let $f$ be an arbitrary measurable soft classifier. Observe that 
\begin{align*}
  \sup _{ \{\tilde{\mu}_i\}_{i \in \mc Y}  } & \sum_{i \in \Y}   \int_\X \ell(f(\tilde x), i) d\tilde \mu_i(\tilde x)   - \sum_{i \in \Y} C(\mu_i, \tilde \mu_i) 
  \\ & =  \sup_{\pi_i \in \Gamma_1(\mu_i), \: i \in \Y} \sum_{i \in \Y} \int_{\spt(\mu_i) \times \X} (\ell(f(\tilde x), i) - c(x, \tilde x)) d\pi_i(x, \tilde x), 
   \label{eq:Aux1}
\end{align*}
where, recall, $\Gamma_1(\mu_i)$ denotes the collection of $\pi$ in $\M_+(\spt(\mu_i) \times \X)$ whose first marginal is equal to $\mu_i$.

Now, for a $\pi_i \in \M_+(\spt(\mu_i) \times \X)$, the negative of the characteristic function for the constraint $\pi_i \in \Gamma_1(\mu_i)$ can be written as
\[ \inf_{\phi_i \in \G}  \int_{\spt(\mu_i) \times \X} \phi_i(x)  d\pi_i(x, \tilde x) -   \int_\X \phi_i(x)  d\mu_i(x),\]
given that $C_b(\X) \subseteq \G$ (and $C_b(\X)$ characterizes Borel measures). From the above, we deduce that
\[ \sup_{\pi_i \in \Gamma_1(\mu_i), \: i \in \Y} \int_{\spt(\mu_i) \times \X} (\ell(f(\tilde x), i) - c(x, \tilde x)) d\pi_i(x, \tilde x)  \]
is equivalent to 
\[ \sup_{\pi_i \in \M_+(\spt(\mu_i) \times \X)} \inf_{(\phi_i)_{i \in \Y} \in \G}   - \int_\X \phi_i(x) d\mu_i(x) + \int_{\spt(\mu_i) \times \X} (\ell(f(\tilde x), i) - c(x, \tilde x) +\phi_i(x) )d\pi_i(x, \tilde x).  \]
Swapping the sup and the inf, we get the upper bound
\[ \inf_{(\phi_i)_{i \in \Y} \in \G} \sup_{\pi_i \in \M_+(\spt(\mu_i) \times \X)}    - \int_\X \phi_i(x) d\mu_i(x) + \int_{\spt(\mu_i) \times \X} (\ell(f(\tilde x), i) - c(x, \tilde x) +\phi_i(x) )d\pi_i(x, \tilde x).  \]
In turn, for fixed $\phi_i$ the inner sup over $\pi_i \in \M_+(\spt(\mu_i) \times \X)$ gives $0$ if the constraint $\ell(f (\tilde x), i) - c(x, \tilde x) + \phi_i(x)\leq 0$ is satisfied for all $\tilde x \in \X$ and all $x \in \spt(\mu_i)$, and is equal to $\infty$ if not. Inequality $\eqref{eqn:ATGeneralLoss} \leq \eqref{eq:ATGeneralLoss_Dual_0}$ follows. 
\end{proof}

\begin{proposition}
Under Assumption \ref{assump:LossFunction} on the loss function $\ell$, problem 
\begin{equation} \inf_{(\phi_1, \dots, \phi_K) \in \mathfrak{A}} \quad   -  \sum_{i\in \mc Y} \int_\X \phi_i(x) d\mu_i(x),
\label{eq:ATGeneralLoss_Dual2}
\end{equation}
for $\mathfrak{A}$ the admissible set 
\[ \mathfrak{A} := \left\{ (\phi_1, \dots, \phi_K) \in \G^K\, \mathrm{ s.t. }\, 0 \geq \min_{v \in \Delta_\Y} \max_{m \in \Delta_\Y} \sum_{i\in \Y} (\ell(v,i) - \phi_i^c(\tilde x)) m_i  , \quad \forall \tilde x \in \X \right\}, \]
is equivalent to \eqref{eq:ATGeneralLoss_Dual_0}, provided $\G = C_b(\X)$.

\label{proposition:duality}
\end{proposition}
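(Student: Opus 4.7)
The plan is to recast feasibility in both \eqref{eq:ATGeneralLoss_Dual_0} and in $\mathfrak{A}$ in terms of a common pointwise object and then to invoke a measurable selection theorem in the nontrivial direction. The first step is to simplify the pointwise constraint in \eqref{eq:ATGeneralLoss_Dual_0}: since it must hold for every $x \in \spt(\mu_i)$, taking the infimum over $x$ shows that it is equivalent to $\ell(f(\tilde x), i) \leq \phi_i^c(\tilde x)$ for every $\tilde x \in \X$ and every $i \in \Y$. On the other hand, the identity $\max_i a_i = \max_{m \in \Delta_\Y} \sum_i m_i a_i$ shows that the condition defining $\mathfrak{A}$ can be rewritten as $\min_{v \in \Delta_\Y} \max_{i \in \Y} (\ell(v, i) - \phi_i^c(\tilde x)) \leq 0$, and continuity of $\ell(\cdot, i)$ together with compactness of $\Delta_\Y$ make this exactly the statement that, for every $\tilde x \in \X$,
\[ V(\tilde x) := \{v \in \Delta_\Y : \ell(v, i) \leq \phi_i^c(\tilde x) \text{ for all } i \in \Y\} \]
is nonempty.

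The inequality $\eqref{eq:ATGeneralLoss_Dual_0} \geq \eqref{eq:ATGeneralLoss_Dual2}$ is then immediate: any feasible pair $(\phi, f)$ for \eqref{eq:ATGeneralLoss_Dual_0} satisfies $f(\tilde x) \in V(\tilde x)$ for every $\tilde x$, so $(\phi_i) \in \mathfrak{A}$ with the same objective. For the converse, given $(\phi_i) \in \mathfrak{A}$, the sets $V(\tilde x)$ are nonempty and (by continuity of $\ell(\cdot, i)$) compact in $\Delta_\Y$ for every $\tilde x \in \X$. I would produce a Borel measurable selector $f: \X \to \Delta_\Y$ with $f(\tilde x) \in V(\tilde x)$ by invoking a standard measurable selection result such as the Kuratowski--Ryll-Nardzewski theorem; the pair $(\phi, f)$ is then feasible for \eqref{eq:ATGeneralLoss_Dual_0}, and the objectives coincide.

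The main obstacle is to verify the measurability hypothesis required by the selection theorem, namely that the multifunction $\tilde x \mapsto V(\tilde x)$ has Borel measurable graph, which reduces to showing that each $c$-transform $\phi_i^c$ is Borel measurable on $\X$ when $\phi_i \in C_b(\X)$. When $c$ is continuous this is classical, since $\phi_i^c$ is then upper semicontinuous. For merely lower semicontinuous $c$ satisfying Assumption \ref{assump:Cost}, I would use the representation $-\phi_i^c(\tilde x) = \sup_{x \in \spt(\mu_i)} (\phi_i(x) - c(x, \tilde x))$, note that each term in the supremum is upper semicontinuous in $\tilde x$, and exploit the coercivity or truncation in Assumption \ref{assump:Cost} together with the separability of $\spt(\mu_i)$ as a subset of the Polish space $\X$ to reduce the supremum to a countable one. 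This presents $-\phi_i^c$ as a countable supremum of upper semicontinuous (hence Borel) functions, and so yields its Borel measurability. Once this is in hand, the graph $\{(\tilde x, v) \in \X \times \Delta_\Y : \ell(v, i) - \phi_i^c(\tilde x) \leq 0, \, \forall i \in \Y\}$ is Borel measurable (as the set $\ell$ is continuous in $v$ and each $\phi_i^c$ is Borel measurable in $\tilde x$), the selection goes through, and the claimed equivalence of the two problem values follows.
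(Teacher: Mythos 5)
Your overall architecture matches the paper's: the easy direction is identical, and for the converse you build a Borel classifier by measurable selection, with the whole difficulty concentrated in the Borel measurability of the $c$-transforms $\phi_i^c$. The only genuine difference in the selection step is cosmetic: you select from the feasibility set $V(\tilde x)=\{v:\ell(v,i)\le \phi_i^c(\tilde x)\ \forall i\}$, while the paper selects from the argmin of the minimax $\min_v\max_m\sum_i(\ell(v,i)-\phi_i^c(\tilde x))m_i$, viewed as a set-valued map of the vector $(\phi_1^c(\tilde x),\dots,\phi_K^c(\tilde x))$ and composed with that (Borel) vector map. Both routes need the same inputs (nonempty closed values, which requires lower semicontinuity of $\ell(\cdot,i)$ --- note Assumption \ref{assump:LossFunction} only states convexity, so you are implicitly leaning on the continuity hypothesis carried over from Proposition \ref{Prop:First}, as does the paper), and your $V(\tilde x)$ is if anything slightly more direct.

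The genuine gap is in your proposed proof that $\phi_i^c$ is Borel when $c$ is only lower semicontinuous. You write $-\phi_i^c(\tilde x)=\sup_{x\in\spt(\mu_i)}\bigl(\phi_i(x)-c(x,\tilde x)\bigr)$ and want to replace $\spt(\mu_i)$ by a countable dense subset. But for fixed $\tilde x$ the map $x\mapsto \phi_i(x)-c(x,\tilde x)$ is \emph{upper} semicontinuous in $x$ (since $c$ is lsc), and suprema over dense subsets recover the full supremum only for \emph{lower} semicontinuous integrands. The failure is not hypothetical: for the cost $c_\veps$ in \eqref{eqn:0inftyCost} one has $-\phi_i^c(\tilde x)=\sup\{\phi_i(x): x\in\spt(\mu_i)\cap B_\veps(\tilde x)\}$, and a countable dense subset of $\spt(\mu_i)$ can entirely miss $\spt(\mu_i)\cap B_\veps(\tilde x)$ when that intersection is, say, a single boundary point; the countable supremum is then $-\infty$ while the true value is finite. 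Separability and coercivity do not repair this. The paper's Lemma \ref{lem:MeasurabilityCTRansform} takes a different and correct route: it proves directly that $\phi_i^c$ is sequentially lower semicontinuous by taking $\tilde x_n\to\tilde x$, choosing $1/n$-near-minimizers $x_n\in\spt(\mu_i)$, and using the compactness--coercivity clause of Assumption \ref{assump:Cost} to extract a convergent subsequence $x_n\to x\in\spt(\mu_i)$, after which lsc of $c$ and continuity of $\phi_i$ give $\liminf_n\phi_i^c(\tilde x_n)\ge c(x,\tilde x)-\phi_i(x)\ge\phi_i^c(\tilde x)$. Replace your countable-supremum step with this argument (or simply invoke the lemma) and the rest of your proof goes through.
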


\begin{proof}
Suppose that $\{ \phi_i \}_{i \in \Y}$ and $f$ form a feasible tuple for \eqref{eq:ATGeneralLoss_Dual_0}. Then, for all $\tilde x \in \X$ and all $i \in \Y$,
\[ \phi_i^c(\tilde x) = \inf_{x \in \text{spt}(\mu_i)} \{ c(x, \tilde x) -  \phi_i(x) \} \geq \ell(f(\tilde x), i).\]
Hence
$$0 \geq \ell(f(\tilde x), i) - \phi_i^c(\tilde x), \quad \forall \tilde x \in \X \quad \forall i \in \Y.$$
It follows that 
$$0 \geq \max_{m \in \Delta_\Y} \sum_{i \in \Y}(  \ell(f(\tilde x), i) - \phi_i^c(\tilde x) ) m_i  \geq \min_{v \in \Delta_\Y} \max_{m \in \Delta_\Y} \sum_{i \in \Y}(  \ell(v, i) - \phi_i^c(\tilde x) ) m_i, \quad \forall \tilde x \in \X,$$
 from where we conclude that $(\phi_1, \ldots, \phi_K) \in \mathfrak{A}$. In particular, $\eqref{eq:ATGeneralLoss_Dual_0} \geq \eqref{eq:ATGeneralLoss_Dual2}$. Note that this part of the argument holds for any $\G$ containing $C_b(\X)$.

Conversely, let  $(\phi_1, \ldots, \phi_K) \in \mathfrak{A}$ for $\G= C_b(\X)$. Since each $\phi_i$ is continuous and bounded, it follows from Lemma \ref{lem:MeasurabilityCTRansform} in the Appendix (which relies on Assumption \ref{assump:Cost} on the cost function $c$) that $\phi_i^c$ is Borel measurable for every $i \in \Y$. Now, by definition, for any $\tilde x \in \mc X$ we have
$$0 \geq \min_{v \in \Delta_\Y} \max_{m \in \Delta_\Y} \sum_{i \in \Y} (  \ell(v, i) - \phi_i^c(\tilde x) ) m_i.$$
Our goal is to construct a measurable function $f : \X \mapsto \Delta_\Y$ such that
\[ f(\tilde x) \in  \arg \min_{v \in \Delta_\Y}  \max_{m \in \Delta_\Y} \sum_{i \in \Y}( \ell(v, i )  - \phi_i^c(\tilde x)) m_i, \quad \forall \tilde x \in \X. \]
To do this, first consider the set-valued map
\begin{equation}
\Xi : (b_1, \dots, b_K) \in \R^{K}\longmapsto \arg\min_{v \in \Delta_\Y} \max_{m \in \Delta_\Y} \sum_{i \in \Y}(\ell(v,i) - b_i) m_i.
\end{equation}
We can verify that $\Xi$ satisfies the assumptions in the Kuratowski-Ryll-Nardzewski measurable selection theorem and thus admits a measurable selection $\xi: \R^{K} \mapsto \Delta_\Y $. The desired measurable map $f $ can then be defined as $f(\tilde x) := \xi \circ \vec{b}(\tilde x)$, where $\vec{b}(\tilde x):= ( \phi_1^c(\tilde x), \dots, \phi_K^c(\tilde x))$. For this function, which is Borel measurable given that it is the composition of two Borel measurable maps, we have
$$0 \geq \max_{m \in \Delta_\Y} \sum_{j\in \Y}(  \ell(f(\tilde x), j) - \phi_j^c(\tilde x) ) m_j \geq  \ell(f(\tilde x), i) - \phi_i^c(\tilde x), \quad \forall i \in \Y.$$
Using the definition of $\phi_i^c$ and then reordering some terms, we obtain
$$c(x_i,\tilde x) - \phi_i(x_i) \geq \phi_i^c(\tilde x) \geq \ell(f(\tilde x),i),$$
for all $i \in \Y$, $\tilde x \in \X$, $x_i \in \spt(\mu_i)$. 
We conclude that the tuple $(\phi_1, \ldots, \phi_K), f$ is feasible for \eqref{eq:ATGeneralLoss_Dual_0}. This implies the reverse inequality $\eqref{eq:ATGeneralLoss_Dual2} \geq  \eqref{eq:ATGeneralLoss_Dual_0} $. 
\end{proof}

\begin{proposition}
For any set $\G$ of Borel measurable functions on $\X$, problems \eqref{eq:ATGeneralLoss_Dual2} and \eqref{dual form} are equivalent.
 \label{prop:ThirdAux}   
\end{proposition}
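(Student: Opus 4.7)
The two problems share the same objective $-\sum_i \int \phi_i\,d\mu_i$, so the task reduces to showing that the admissible set $\mathfrak{A}$ of \eqref{eq:ATGeneralLoss_Dual2} coincides with the feasibility set of \eqref{dual form}. I would fix $(\phi_1,\dots,\phi_K)\in \G^K$ and proceed pointwise in $\tilde x\in\X$. The function $(v,m)\mapsto \sum_{i\in\Y}(\ell(v,i)-\phi_i^c(\tilde x))m_i$ is convex in $v\in \Delta_\Y$ by Assumption \ref{assump:LossFunction} and linear (hence concave and continuous) in $m\in \Delta_\Y$; both simplices are compact and convex, and the existence of $v_0$ in the effective domain ensures the minimum over $v$ is finite. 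Sion's minimax theorem thus gives
$$\min_{v\in\Delta_\Y}\max_{m\in\Delta_\Y}\sum_{i\in\Y}(\ell(v,i)-\phi_i^c(\tilde x))m_i \;=\; \max_{m\in\Delta_\Y}\Bigl\{\ell_\Y(m) - \sum_{i\in\Y}\phi_i^c(\tilde x)m_i\Bigr\},$$
so membership in $\mathfrak{A}$ is equivalent to requiring $\ell_\Y(m) \leq \sum_{i\in\Y}\phi_i^c(\tilde x)\,m_i$ for all $\tilde x\in \X$ and all $m\in\Delta_\Y$.

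Next, I would decompose this constraint according to the support of $m$. For $m\in\Delta_\Y$ with $A:=\{i:m_i>0\}$ and $m_A := (m_i)_{i\in A}\in\Delta_A$, one has $\ell_\Y(m)=\ell_A(m_A)$ (since zero coefficients drop from the defining infimum) and $\sum_{i\in\Y}\phi_i^c(\tilde x)m_i=\sum_{i\in A}\phi_i^c(\tilde x)m_i$; conversely, every pair $(A, m_A)$ with $A\subseteq \Y$ and $m_A\in \Delta_A$ arises this way. Using that $m_i\geq 0$ and that $\spt(\mu_A)=\prod_{i\in A}\spt(\mu_i)$ is a product, the $c$-transforms separate as
$$\sum_{i\in A}m_i\,\phi_i^c(\tilde x) \;=\; \inf_{x_A\in\spt(\mu_A)}\sum_{i\in A}m_i\bigl(c(x_i,\tilde x)-\phi_i(x_i)\bigr),$$
so the constraint becomes: for every $A\subseteq\Y$, every $m_A\in\Delta_A$, every $x_A\in\spt(\mu_A)$, and every $\tilde x\in\X$,
$$\sum_{i\in A}m_i\phi_i(x_i) + \ell_A(m_A) \;\leq\; \sum_{i\in A}m_i\,c(x_i,\tilde x).$$
Taking the infimum of the right-hand side over $\tilde x\in\X$ yields $c_A(x_A,m_A)$ by definition, and rearranging produces precisely the constraint appearing in \eqref{dual form}. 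Every step is reversible, so the two feasibility sets coincide and the equivalence of the problems follows.

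The main obstacle is the justification of Sion's theorem, which requires lower semicontinuity of $v\mapsto \sum_i \ell(v,i)m_i$ on $\Delta_\Y$; this follows from the standard fact that a proper convex function on a finite-dimensional convex set is lower semicontinuous on its effective domain, together with Assumption \ref{assump:LossFunction} guaranteeing that this domain is nonempty. Because the whole analysis is pointwise in $\tilde x$ and in $x_A$, no continuity or measurability conditions on $\G$ beyond Borel measurability enter the argument, which is exactly why the proposition holds for an arbitrary $\G$ of Borel measurable functions.
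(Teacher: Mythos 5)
Your argument is correct and lands on the same two pillars as the paper's proof --- the minimax swap enabled by convexity in $v$ and linearity in $m$, and the passage between pointwise constraints in $x_A$ and constraints on the $c$-transforms --- but it is organized differently. The paper proves the two inclusions separately: in the direction $\mathfrak{A}\Rightarrow\eqref{eq:ConstraintFiniteSupport}$ it picks an optimal $v$ for each $\tilde x$ and never needs the swap, while in the converse direction it applies the swap to the constraint for $A=\Y$ and then has to invoke compactness of $\Delta_\Y$ to pass from ``for every tuple $x_\Y$ there is some $v$'' to a single $v$ that works against the $c$-transforms. You instead perform the swap once, upfront, to rewrite membership in $\mathfrak{A}$ as $\ell_\Y(m)\le\sum_i\phi_i^c(\tilde x)m_i$ for all $m,\tilde x$, and then exploit the fact that $\sum_{i\in A}m_i\phi_i^c(\tilde x)$ is an infimum over the product set $\spt(\mu_A)$ of independent variables; this turns the whole proof into a chain of reversible steps and eliminates the compactness argument. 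That is a genuine simplification. Two small caveats: the constraint in \eqref{dual form} ranges over all $m_A\in\Delta_A$, not only those with full support on $A$, so you should note that a boundary $m_A$ reduces to the constraint for its support (harmless, since restrictions of points of $\spt(\mu_A)$ lie in $\spt(\mu_{A'})$).

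The one justification you give that is actually false is the claim that a proper convex function on a finite-dimensional convex set is lower semicontinuous on its effective domain: it is continuous on the \emph{relative interior}, but can fail lower semicontinuity on the boundary (e.g.\ $f(0)=1$, $f(t)=0$ for $t\in(0,1]$ is convex on $[0,1]$), so Sion's theorem does not apply verbatim. This is not a structural gap --- the paper performs the identical swap citing Ky Fan's theorem, and for a function that is convex in $v$ and \emph{linear} in $m$ on compact finite-dimensional simplices the minimax equality follows from a direct separating-hyperplane argument applied to the upward-closed convex set $\{z:\exists v\in\Delta_\Y,\ z_i\ge\ell(v,i)\ \forall i\}$ --- but you should replace the lsc claim with one of these correct justifications.
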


\begin{proof}
It suffices to show that the condition 
\begin{equation}
0 \geq \sup_{m_A \in \Delta_{A}} \left\{ \sum_{i \in A} m_i\phi_i(x_i)  +  \ell_A(m_A) - c_A(x_A , m_A)      \right\}, \quad \forall x_A  \in \spt(\mu_A), \quad \forall A \subseteq \Y,
\label{eq:ConstraintFiniteSupport}
\end{equation}
is equivalent to $(\phi_1, \dots, \phi_K) \in \mathfrak{A}$. 

To see this, let us first assume that $(\phi_1, \dots, \phi_K) \in \mathfrak{A}$. Then for any given $\tilde x  \in \X$ there is $v \in \Delta_\Y$ such that
\[ 0 \geq \sum_{i \in \Y} (\ell(v,i) - \phi_i^c(\tilde x)  ) m_i, \quad \forall m\in \Delta_\Y. \]
By definition of $\phi_i^c(\tilde x)$ we have
\[ \phi_i(x_i) + \phi_i^c(\tilde x) \leq c(x_i, \tilde x), \quad \forall x_i \in \spt(\mu_i), \]
and thus also
\[ 0 \geq \sum_{i \in \Y} (\ell(v,i) + \phi_i(x_i) - c(x_i, \tilde x) ) m_i, \quad \forall m \in \Delta_\Y, \quad \forall x_i \in \spt(\mu_i), \quad \forall i \in \Y.  \]
If $m$ is chosen to belong to $\Delta_{A}$ for some $A \subseteq \Y$, the above implies
\[0 \geq \ell_A(m_A) + \sum_{i \in A} m_i \phi_i(x_i) - \sum_{i \in A} c(x_i, \tilde x) m_i, \quad \forall x_A \in \spt(\mu_A).\]
In particular, if for a fixed $x_A \in \spt(\mu_A)$ we take the supremum of the right hand side of the above expression over $\tilde x$, we deduce
\[ 0 \geq   \sum_{i \in A} m_i  \phi_i(x_i)  + \ell_A(m_A)  - c_A(x_A, m_A ), \quad \forall m_A \in \Delta_A. \]
Condition \eqref{eq:ConstraintFiniteSupport} follows. 

Conversely, suppose that $\{ \phi_i \}_{i \in \Y}$ satisfies \eqref{eq:ConstraintFiniteSupport}. Fix $\tilde x \in \X$ and consider $\{ x_i\}_{i \in \Y}$ with $x_i \in \spt(\mu_i)$, $i \in \Y$.
We use \eqref{eq:ConstraintFiniteSupport} with the tuple $x_\Y:= \{ x_i\}_{i \in \Y}$ to obtain
\begin{align}
\label{eq:Swap}
\begin{split}
   0 & \geq \max_{m \in \Delta_\Y} \min_{v \in \Delta_\Y} \{  \sum_{i \in \Y} m_i(  \phi_i(x_i) + \ell(v,i) )  - c_\Y(x_\Y, m)   \} 
   \\& \geq \max_{m \in \Delta_\Y} \min_{v \in \Delta_\Y} \{  \sum_{i \in \Y} m_i(  \phi_i(x_i) + \ell(v,i) )  - \sum_{i \in \Y}  c(x_i, \tilde x) m_i   \} 
   \\& =  \min_{v \in \Delta_\Y } \max_{m \in \Delta_\Y}  \{  \sum_{i \in \Y} m_i( \phi_i(x_i) + \ell(v,i) )  -  \sum_{i \in \Y} c(x_i, \tilde x) m_i   \}.
   \end{split}
\end{align}
The second inequality follows from the definition of $c_\Y(x_\Y , m)$. In the third line, we can swap the min and the max thanks to Assumption \ref{assump:LossFunction} (which implies convexity in the $v$ variable) and the linearity (in particular concavity) in the $m$ variable. It follows that for every $\tilde x$ and every tuple $\{x_i\}_{i \in \Y}$ there is $v \in \Delta_\Y$ such that
\[  0 \geq \sum_{i \in \Y} m_i(   \ell(v,i) + \phi_i(x_i) - c(x_i, \tilde x)  ), \quad \forall m \in \Delta_\Y .\]
Now, since $x_i \in \spt(\mu_i)$, $i \in \Y$, were arbitrary, we can conclude, using the definition of $\phi_i^c(\tilde x )$ and compactness of $\Delta_\Y$, that
\[ 0 \geq  \sum_{i \in \Y} m_i(\ell(v,i)- \phi_i^c(\tilde x)), \quad \forall m \in \Delta_\Y, \]
for some $v \in \Delta_\Y$. In turn, we deduce
\[ 0 \geq \min_{v \in \Delta_\Y}\max_{m \in \Delta_\Y} \sum_{i \in \Y} m_i(\ell(v,i)- \phi_i^c(\tilde x)).\]
\end{proof}

In order to close the duality gap between \eqref{dual form} for $\G = C_b(\X)$ and \eqref{eqn:ATGeneralLoss} for $\F = \F_{\mathrm{all}}$, we use the next proposition that resembles the duality theorem for multimarginal optimal transport (MMOT) but whose proof, which we present in Appendix \ref{app:MoreDetails}, requires new constructions and ideas. To enunciate it, we first introduce some notation that we use later on. 

Given $\pi \in \M_{+}(\X_1 \times \dots \times \X_K \times \Delta_\Y)$, where, recall, $\X_i = \spt(\mu_i)$, we define $P_i\pi \in \M_+(\X_i ) $  according to
\begin{equation}
\int_{\X_i} h(x_i) d P_i \pi(x_i) = \int_{\X_1 \times \dots \times \X_K \times \Delta_\Y} m_i h(x_i) d\pi(\vec{x}, m), \quad \forall h \in C_b(\X_i). 
\label{def:PiPro}
\end{equation}
Here and in the sequel, we use $\vec{x}$ as shorthand notation to represent an arbitrary tuple $(x_1, \dots, x_K)$.

\begin{proposition}
\label{prop:MMOTVersion}
Under Assumption \ref{assump:LossFunction} on $\ell$ and Assumption \ref{assump:Cost} on $c$, the value of 
\begin{equation}
   -\min_{\pi \in \mathfrak{G} } \int_{\X_1 \times \dots \times \X_K \times \Delta_\Y} (c_\Y(\vec x, m)  - \mathbf{\ell}_\Y(m)) d\pi(\vec x, m), 
   \label{eqn:MMOTVersion}
\end{equation}
where 
\begin{equation}
    \mathfrak{G}:= \{ \pi \in \M_+(\X_1 \times \dots \times \X_K \times \Delta_\Y)\quad  \mathrm{ s.t. }  \quad   P_{i }\pi = \mu_i, \quad \forall i \in \Y  \},
    \label{eqn:CouplingsWeird}
\end{equation}
is the same as the value of problem \eqref{dual form} with $\G = C_b(\X)$. We recall that $P_i \pi$ was defined in \eqref{def:PiPro}.
\end{proposition}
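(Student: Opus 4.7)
The plan is to prove Proposition \ref{prop:MMOTVersion} as a strong duality statement between the weighted multimarginal optimal transport problem \eqref{eqn:MMOTVersion} and its Kantorovich-type dual, which I will identify with \eqref{dual form} for $\G = C_b(\X)$. The overall structure mirrors the classical Kantorovich/MMOT duality argument (weak duality, existence of a primal minimizer, and closing the gap via a Fenchel--Rockafellar or minimax step), but with two features specific to our setting: the extra ``simplex'' variable $m \in \Delta_\Y$ entering the primal cost $F(\vec x, m) := c_\Y(\vec x, m) - \ell_\Y(m)$, and the \emph{weighted} marginal constraint $P_i \pi = \mu_i$.

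First, I would establish the easy direction (weak duality). Any $\{\phi_i\}_{i \in \Y} \subseteq C_b(\X)$ feasible for \eqref{dual form} satisfies in particular the $A = \Y$ constraint, namely $c_\Y(\vec x, m) - \ell_\Y(m) \geq \sum_{i \in \Y} m_i \phi_i(x_i)$ on $\prod_i \spt(\mu_i) \times \Delta_\Y$. Integrating this pointwise inequality against an arbitrary $\pi \in \mathfrak{G}$ and using the defining identity $\int m_i \phi_i(x_i)\, d\pi = \int \phi_i\, d(P_i \pi) = \int \phi_i\, d\mu_i$ yields $\int F\, d\pi \geq \sum_i \int \phi_i\, d\mu_i$; taking the infimum over $\pi$ and the supremum over $\{\phi_i\}$ gives the easy inequality.

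To close the duality gap, I would first verify existence of a primal optimizer. The set $\mathfrak{G}$ is nonempty (it contains, e.g., $\pi_0 := \sum_{i \in \Y} \delta_{e_i} \otimes \mu_i \otimes \bigotimes_{j \neq i} \delta_{x_j^\circ}$ for any base points $x_j^\circ \in \spt(\mu_j)$), and every $\pi \in \mathfrak{G}$ has total mass $\int (\sum_i m_i)\, d\pi = \sum_i \mu_i(\X) = 1$. The compactness of $\Delta_\Y$, the tightness of the $\mu_i$, and the coercivity in Assumption \ref{assump:Cost} jointly render minimizing sequences tight, with weak-$*$ cluster points in $\mathfrak{G}$. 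Joint lower semicontinuity of $F$, inherited from l.s.c.\ of $c_\Y$ (infimum over $\tilde x$ of a jointly l.s.c., coercive integrand) and upper semicontinuity of $\ell_\Y$ (infimum over $v$ of $m$-affine functions, finite at the $v_0$ of Assumption \ref{assump:LossFunction}), then yields existence of a minimizer $\pi^*$.

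With $\pi^*$ in hand, I would conclude via a Fenchel--Rockafellar / Sion minimax argument applied to the Lagrangian $L(\pi, \{\phi_i\}) = \int F\, d\pi + \sum_i \int \phi_i\, d(\mu_i - P_i \pi)$ on $\M_+(\X_1 \times \dots \times \X_K \times \Delta_\Y) \times C_b(\X)^K$. The main obstacle I anticipate is the non-standard weighted nature of the marginal map $P_i$, which prevents directly quoting off-the-shelf MMOT duality. A natural workaround is to introduce the auxiliary measures $\hat\pi_i := m_i\, \pi$, so that $P_i \pi$ becomes the classical $x_i$-marginal of $\hat\pi_i$ and $\sum_i \hat\pi_i = \pi$; this recasts the problem in a form where standard duality tools apply, with the non-compactness of $\X$ handled by the coercivity part of Assumption \ref{assump:Cost}. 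Finally, the dual produced by this LP/minimax argument features only the $A = \Y$ constraint of \eqref{dual form}, but (as already noted implicitly in Proposition \ref{prop:ThirdAux}) this single constraint is equivalent to the full family indexed by $A \subseteq \Y$, since each $\Delta_A$ is a face of $\Delta_\Y$. Thus both formulations of the dual coincide, delivering the claimed equality of values.
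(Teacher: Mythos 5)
Your weak‐duality step, the nonemptiness of $\mathfrak{G}$, and the observation that the $A=\Y$ constraint already subsumes all $A\subseteq\Y$ (by restricting $m$ to a face $\Delta_A$ of $\Delta_\Y$) are all correct and mirror the paper's restatement (Proposition \ref{prop:WeirdMMOT}). However, the step that actually closes the duality gap is not carried through, and the workarounds you gesture at would not work as stated.

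The genuine gap is the non-compactness of $\X$. Fenchel--Rockafellar applied to $E = C_b(\X_1\times\dots\times\X_K\times\Delta_\Y)$ works only when the $\X_i$ are compact, because it requires $E^*$ to be the space of (countably additive) Radon measures; for non-compact $\X$, the topological dual of $C_b$ contains purely finitely additive functionals, so the standard argument breaks. Sion's minimax has an analogous problem: neither $\M_+$ (without compactness of the base) nor $C_b(\X)^K$ is weak-$*$ compact, and $\pi \mapsto \int F\,d\pi$ is only lower semicontinuous, not continuous, so no off-the-shelf version of the minimax theorem applies. Having a primal minimizer $\pi^*$ in hand (which you plausibly get via tightness and l.s.c.) does not in itself give strong duality, and you do not supply the missing argument. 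Your auxiliary-measure idea $\hat\pi_i := m_i\pi$ does turn $P_i\pi=\mu_i$ into a genuine marginal constraint on $\hat\pi_i$, but the cost $F(\vec x,m)=c_\Y(\vec x,m)-\ell_\Y(m)$ is an infimal convolution in $m$ and does \emph{not} decompose as $\sum_i F_i\,d\hat\pi_i$, so the substitution does not recast \eqref{eqn:MMOTVersion} as a standard MMOT problem where known duality theorems can be quoted.

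The paper resolves exactly this difficulty with a Villani-style three-step argument: (i) prove the duality by Fenchel--Rockafellar on compact sets with bounded cost; (ii) approximate the $\mu_i$ by measures on compact subsets, apply the compact-case duality there, and then \emph{extend} the resulting dual potentials $\phi_i$ to all of $\X$ by a recursive $c$-transform-type infimum construction, carefully tracking both feasibility and near-optimality; (iii) remove the boundedness of $c$ by monotone truncation. Step (ii) is the technical heart of the proof and is precisely what your proposal skips; without a concrete substitute for it, the claimed equality of values is not established.
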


We are ready to prove Theorem \ref{thm:main}.

\begin{proof}[ Proof of Theorem \ref{thm:main}]
In view of Propositions \ref{Prop:First}, \ref{proposition:duality}, and \ref{prop:ThirdAux} it will be sufficient to prove that
\begin{equation}
 \sup_{ (\pi_i)_{i \in \Y} \textrm{ s.t. } \pi_i \in \Gamma_1(\mu_i) } \inf_{f \in \F_{\mathrm{all}}}   \sum_{i \in \Y}  \int_{\X_i \times \X} \ell(f(\tilde x),i) d\pi_i(x_i, \tilde x) - \sum_{i \in \Y} \int_{\X_i \times \X} c(x_i, \tilde x) d\pi_i(x_i, \tilde x)  \geq  \eqref{eqn:MMOTVersion}.
 \label{eq:AuxMainProof}
\end{equation}
Indeed, assuming the above inequality holds, we can deduce
\[ \eqref{eqn:ATGeneralLoss} \geq \textrm{LHS of } \eqref{eq:AuxMainProof} \geq \eqref{eqn:MMOTVersion} = \eqref{dual form} \geq \eqref{eqn:ATGeneralLoss},\] 
which in turn implies that the above quantities are all equal. We thus focus on establishing \eqref{eq:AuxMainProof}.

Let $\pi \in \mathfrak{G}$, and define $\pi _i \in \M_+(\X_i \times \X)$ according to
\[ \int_{\X_i \times \X} h (x_i, \tilde x) d\pi_i(x_i, \tilde x) = \int_{\X_1 \times \dots \times \X_K \times \Delta_\Y} m_i h(x_i, T(x, m)) d\pi(\vec x, m), \quad \forall h \in C_b(\X_i \times \X), \]
where $T: \X_1 \times \dots \times \X_K \times \Delta_\Y \rightarrow  \X$ is a Borel measurable map satisfying
\[  T(\vec{x}, m) \in \argmin_{\tilde x \in \X } \sum_{i\in \Y} m_i c(x_i, \tilde x); \]
existence of a Borel measurable map satisfying the above property follows from the assumption on $c$ and standard measurable selection theorems. It follows that $\pi_i \in \Gamma_1(\mu_i)$.

Now, notice that for any $f \in \F_{\mathrm{all}}$ we have
\begin{align*}
\sum_{i \in \Y} & \int_{\X_i \times \X} \ell(f(\tilde x),i) d\pi_i(x_i, \tilde x) - \sum_{i \in \Y} \int_{\X_i \times \X} c(x_i, \tilde x) d\pi_i(x_i, \tilde x)
\\& = \int_{\X_1 \times \dots \times \X_K \times \Delta_\Y} \left(\sum_{i \in \Y} m_i \ell(f(T(\vec{x},m)), i) - \sum_{i \in \Y} m_i c(x_i, T(\vec x, m))\right) d\pi(\vec x, m) 
\\& = \int_{\X_1 \times \dots \times \X_K \times \Delta_\Y}\left(\sum_{i \in \Y} m_i \ell(f(T(\vec x , m)), i) - c_{\Y}(\vec x, m)\right) d\pi(\vec x, m) 
\\& \geq \int_{\X_1 \times \dots \times \X_K \times \Delta_\Y} \left(\ell_\Y(m) - {c}_{\Y}(\vec x, m)\right) d\pi(\vec x, m).
\end{align*}
Since this is true for every $f \in \F_{\mathrm{all}}$ and since $\pi \in \G$ was arbitrary, we deduce \eqref{eq:AuxMainProof}. This completes the proof of the equality $\eqref{dual form}=\eqref{eqn:ATGeneralLoss}$ for $\F=\F_{\mathrm{all}}$ and $\G = C_b(\X)$.

The final part in the theorem follows from the above argument given that if $f^*$ is \textit{assumed} to be Borel measurable, then $(\phi_1^*, \dots, \phi_K^*),f^*$ would be feasible (and in turn optimal) for \eqref{eq:ATGeneralLoss_Dual_0}, following the proof of Proposition \ref{proposition:duality}. In addition, since $C_b(\X) \subseteq \G$, it follows that the value of \eqref{eq:ATGeneralLoss_Dual_0} for $\G$ is smaller than or equal to the value of \eqref{eq:ATGeneralLoss_Dual_0} for $C_b(\X)$. Hence, the value of \eqref{eq:ATGeneralLoss_Dual_0} with $\G$ is also equal to the value of \eqref{eqn:ATGeneralLoss} with $\F=\F_{\mathrm{all}}$. From the discussion in the proof of Proposition \ref{Prop:First}, it follows that $f^*$ is optimal for \eqref{eqn:ATGeneralLoss} with $\F = \F_{\mathrm{all}}$.
\end{proof}

\nc

\subsection{Proofs of Section \ref{sec:Examples}}
\label{sec:ProofsConcreteLosses}

We first state a general result that will be useful in the discussion of the examples considered in section \ref{sec:Examples}.

\begin{lemma}
Suppose that $\ell$ satisfies Assumption \ref{assump:LossFunction}. Then $v^* \in \Delta_\Y$ is a minimizer of the problem 
\begin{equation}
\min_{v \in \Delta_\Y}  \max_{m \in \Delta_\Y} \sum_{i\in \Y}( \ell(v, i )  - \phi_i^c(\tilde x)) m_i
\label{eqn:OptimalClassifierEquation}
\end{equation}
if and only if there exist $\lambda_v, \lambda_m \in \R$, $\gamma_v, \gamma_m \in \R^K_+$, and $m^* \in \Delta_\Y$ such that
\begin{enumerate}    
    \item $\vec{0}_K \in \partial_v \vec \ell(v^*)  m^* + \{\lambda_v \vec 1_K -\gamma_v\}  $ \label{enum condition v opt 1}
    \item $\gamma_v \odot v^* = \vec{0}_K$ \label{enum condition v opt 2}
    \item $\vec \ell(v^*) - \vec \Phi^c + \lambda_m \vec 1_K + \gamma_m =\vec{0}_K  $ \label{enum condition m opt 1}
    \item $\gamma_m  \odot m^* = \vec{0}_K $ \label{enum condition m opt 2},
\end{enumerate}
where  $\vec \ell(v) = (\ell(v,i))_{i \in \Y}$, $\Phi^c=(\phi_i^{c}(\tilde x) )_{i \in \Y}$, and $\partial_v  \vec \ell$ is the matrix whose columns are the subdifferentials of the functions $\ell(\cdot, i)$. 
\label{lemma:optimal_K}
\end{lemma}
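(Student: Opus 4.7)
The plan is to view \eqref{eqn:OptimalClassifierEquation} as a convex--concave minmax and to read off the KKT conditions at a saddle point. I define
\[ L(v,m) := \sum_{i \in \Y}(\ell(v,i) - \phi_i^c(\tilde x))\, m_i \]
on $\Delta_\Y \times \Delta_\Y$. By Assumption \ref{assump:LossFunction}, $L(\cdot, m)$ is convex (nonnegative combination of the convex functions $\ell(\cdot, i)$) and $L(v, \cdot)$ is linear (hence concave and continuous). Since $\Delta_\Y$ is compact and convex, Sion's minimax theorem yields $\min_v \max_m L = \max_m \min_v L$, attained at a saddle point. Consequently, $v^*$ is a minimizer of \eqref{eqn:OptimalClassifierEquation} if and only if there exists $m^* \in \Delta_\Y$ such that $(v^*, m^*)$ is a saddle point of $L$, equivalently $v^* \in \argmin_{v \in \Delta_\Y} L(v, m^*)$ and $m^* \in \argmax_{m \in \Delta_\Y} L(v^*, m)$.

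Next I would translate each of these two separate optimality conditions into the KKT system stated in the lemma. For the outer condition, $v \mapsto \sum_i m_i^* \ell(v,i)$ is convex, Slater's condition holds on $\Delta_\Y$ (using the common finite point $v_0$ of Assumption \ref{assump:LossFunction}, moved into the relative interior if needed), and the normal cone to $\Delta_\Y$ at $v^*$ is exactly $\{\lambda_v \vec 1_K - \gamma_v : \lambda_v \in \R,\ \gamma_v \in \R^K_+,\ \gamma_v \odot v^* = \vec 0_K\}$, arising from the equality constraint $\sum_i v_i = 1$ (multiplier $\lambda_v$) and the nonnegativity constraints $v_i \geq 0$ together with complementary slackness (multiplier $\gamma_v$). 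The subdifferential sum rule identifies $\partial_v\bigl[\sum_i m_i^* \ell(\cdot, i)\bigr](v^*)$ with $\partial_v \vec \ell(v^*)\, m^*$ in the lemma's notation, so the inclusion $\vec 0_K \in \partial_v L(v^*, m^*) + N_{\Delta_\Y}(v^*)$ is exactly conditions \ref{enum condition v opt 1}--\ref{enum condition v opt 2}. For the inner condition, $m \mapsto L(v^*, m) = \langle \vec \ell(v^*) - \vec \Phi^c,\, m\rangle$ is linear and $\Delta_\Y$ polyhedral; LP duality (no further constraint qualification required) then gives that $m^*$ is a maximizer if and only if there exist $\lambda_m \in \R$ and $\gamma_m \in \R^K_+$ with $\gamma_m \odot m^* = \vec 0_K$ and $\vec \ell(v^*) - \vec \Phi^c + \lambda_m \vec 1_K + \gamma_m = \vec 0_K$, which is exactly conditions \ref{enum condition m opt 1}--\ref{enum condition m opt 2}.

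Assembling the two KKT systems yields the ``only if'' direction. The ``if'' direction is then immediate: conditions \ref{enum condition v opt 1}--\ref{enum condition v opt 2} are also sufficient for $v^*$ to minimize the convex function $L(\cdot, m^*)$ on $\Delta_\Y$, and conditions \ref{enum condition m opt 1}--\ref{enum condition m opt 2} are sufficient for $m^*$ to maximize the linear function $L(v^*, \cdot)$ on $\Delta_\Y$, so $(v^*, m^*)$ is a saddle point of $L$ and $v^*$ in particular minimizes $\max_m L(v, m)$. The only subtlety I anticipate is the mild care needed when some $\ell(\cdot, i)$ take the value $+\infty$ on parts of $\Delta_\Y$: one must apply the sum rule and Sion's theorem on the effective domain, using the common finite point $v_0$ from Assumption \ref{assump:LossFunction} to guarantee a Slater-type qualification and the continuity of convex functions on the relative interior of their domain. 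This is routine and does not affect the structure of the argument.
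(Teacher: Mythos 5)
Your proposal is correct and follows essentially the same route as the paper: the paper likewise invokes a minimax theorem (Ky Fan's, in place of your Sion's — interchangeable here) to reduce the problem to a saddle point of $L$, and then reads off the Kuhn--Tucker conditions under a Slater qualification together with the subdifferential characterization of optimality. Your write-up simply supplies the details (normal cone of the simplex, sum rule, LP optimality for the $m$-block) that the paper's three-sentence proof leaves implicit.
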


\begin{proof}[Proof of Lemma \ref{lemma:optimal_K}]
The proof follows from the characterization of the optimal solution to a minimax problem on a compact set. Indeed, Ky Fan's minimax theorem (see Theorem 4.36 in \cite{clarke2013functional}) implies that there is no duality gap in \eqref{eqn:OptimalClassifierEquation} and that the minimization and maximization operations can be applied in any order. The desired result follows from the Kuhn-Tucker conditions under the Slater qualification condition and the subdifferential characterization of the optimal. 
\end{proof}

\subsubsection{Cross-entropy loss}
\label{Subsec: Cross-entropy loss}

\begin{proof}[Proof of Corollary \ref{cor:CrossEntropyOptimizer}]
Thanks to Theorem \ref{thm:main} we may focus on finding solutions to \eqref{eqn:OptimalClassifierEquation} for the choice $\ell=\ell_{\mathrm{ce}}$ and $\phi_i=\phi_i^*$. Now, if we take $v^*$ as in \eqref{classifier cross-entropy pure} and consider $m^*= v^*$, $\gamma_v= \gamma_m = \vec{0}_K$, $\lambda_v =1$, and $\lambda_m = - \log \left( \sum_{i \in \Y} \exp(-\phi_i^c(\tilde x)) \right)$ it is straightforward to verify conditions 1-4 in Lemma \ref{lemma:optimal_K}. This implies the optimality of $v^*$. 
\end{proof}

\begin{remark}
    We note that the form for the solution of \eqref{eqn:OptimalClassifierEquation} for the cross-entropy loss can be \textit{derived} directly from the conditions 1-4 in Lemma \ref{lemma:optimal_K}. Indeed, due to the shape of the cross-entropy loss function (in particular, the fact that $\lim_{t \rightarrow 0^+}\ell_{\mathrm{ce}}(t,i)= \infty$ ), an optimal $v^*$ for \eqref{eqn:OptimalClassifierEquation} must lie in the interior of $\Delta_\Y$. Therefore, by condition 4 in Lemma \ref{lemma:optimal_K} we must have $\gamma_m = 0$. In turn, condition 3 implies that \eqref{classifier cross-entropy pure} is the only possible form that an optimizer can have.  
\end{remark}

\begin{proof}[Proof of Corollary \ref{cor:CrossEntropyBarycenter}]
In this case, $\beta(t)= -\log(t)$ and a direct calculation reveals that the function $\varphi$ in \eqref{eqn:Varphi} becomes
\[ \varphi(s)= s\log(s)  -s.\]
In addition, for $\{ \tilde{\mu}_i \}_{i \in \Y}$ for which $\sum_{i \in \Y} C(\mu_i, \tilde \mu_i)$ is finite we must have $\sum_{i \in \Y} \tilde \mu_i(\X) =\sum_{i \in \Y} \mu_i(\X)=1 $. The desired result follows from these two facts.
\end{proof}

\begin{proof}[Proof of Corollary \ref{cor:CrossEntropyDual}]
It suffices to show that, up to a change of variables, the constraint 
\begin{equation}
   0 \geq \sup_{m_A \in \Delta_{A}} \left\{ \sum_{i \in A} m_i\phi_i(x_i)  +  \ell_A(m_A) - c_A(x_A , m_A)      \right\}, 
   \label{eq:ConstarintAux}
\end{equation}
for a given $x_A =\{ x_i \}_{i \in A} \in  \supp(\mu_A)$ and $A \subseteq  \Y $, reduces to the constraint in \eqref{dual_CE}.

To see this, let us start by denoting by $ \mathcal{A}$  the collection of subsets $A'$ of $A$ such that $\bigcap_{i \in A'} B_\veps(x_i) \not = \emptyset$. With this notation in hand, observe that if $m_A \in \Delta_{A}$ is such that the set $\{ i \in A \text{ s.t. } m_i >0 \}$ is not in $\A$, then $c_A(x_A, m_A) = \infty$. On the other hand, if the set $\{ i \in A \text{ s.t. } m_i>0 \}$ is contained in $\A$, then $c_A(x_A, m_A) =0$.

Observe, also, that for any $m_A \in \Delta_A$ we have
\[ \ell_A(m_A) = \inf_{v \in \Delta_{\Y}} \sum_{i \in A} \ell_{\text{ce}}(v,i) m_i =\inf_{v \in \Delta_{A}} \sum_{i \in A} \ell_{\text{ce}}(v,i) m_i = - \sum_{i \in A} \log(m_i) m_i,  \]
which follows from the fact that for any $v \in \Delta_A$ we have
\[ \sum_{i \in A} \ell_{\mathrm{ce}}(v, i) m_i = \mathrm{KL}(m_A| v) - \sum_{i \in \A} \log(m_i) m_i.  \]

Combining the above observations, we deduce that
\begin{align*}
\sup_{m_A  \in \Delta_{A}} & \left\{ \sum_{i \in A} m_i\phi_i(x_i)  +  \ell_A(m_A) - c_A(x_A , m_A)      \right\}  
    \\& = \sup_{A' \in \A} \sup_{m_{A'} \in \Delta_{A'}} \left\{ \sum_{i \in A'} m_i \phi_i(x_i) - \sum_{i \in A'} \log(m_i) m_i  \right\}
    \\& = \sup_{A' \in \A}  \log(\sum_{i \in A'} \exp(\phi_i(x_i)) ). 
\end{align*}
Therefore, the constraint \eqref{eq:ConstarintAux} is equivalent to 
\[ \sum_{i \in A'} \exp(\phi_i(x_i)) \leq 1 , \quad \forall A' \in \A.    \]
The desired result easily follows after applying the change of variables $\psi_i(x_i) := \exp(\phi_i(x_i))$. 
\end{proof}

\subsubsection{$\alpha$-logarithmic loss}
\label{Subsec: Proof of results alpha fair packing loss}

\begin{proof}[Proof of Corollary \ref{cor:OptimalClassAlpha}]
We split the proof into two cases.

\textbf{Case 1:} When $\alpha>1$, the situation is very similar to the cross-entropy case and a solution $v^*$ for \eqref{eqn:OptimalClassifierEquation} must lie in the interior of $\Delta_\Y$. From this fact, we can deduce that an optimal $v^*$ must take the form $\ref{classifier alpha>1}$. As an alternative argument, consider $v^*$ as in \eqref{classifier alpha>1} and set $m^*_i:= (v_i^*)^\alpha/ (\sum_{j \in \Y} (v_j^*)^\alpha)$, $\gamma_v= \gamma_m =\vec{0}_K$, $\lambda_v =1/\sum_{j \in \Y} (v_j^*)^\alpha$, and $\lambda_m = - Z(\tilde x) $ to directly verify 1-4 in Lemma \ref{lemma:optimal_K}.

\textbf{Case 2:} When $\alpha \in [0,1)$, let $v^*$ be as in \eqref{classifier alpha<1} and set $m^*_i:= (v_i^*)^\alpha/ (\sum_{j \in \Y} (v_j^*)^\alpha)$ for every $i \in \Y$. Also, let $\lambda_v =1/\sum_{j \in \Y} (v_j^*)^\alpha$ and $\lambda_m = -Z(\tilde x) $. Finally, set
\[\gamma_v^i := \lambda_v  \]
for those $i$ for which $v_i^* =0$, and set $\gamma_v^i=0$ otherwise. Likewise, define
\[ \gamma_m^i := \phi_i^c(\tilde x) + Z(\tilde x) - \frac{1}{1-\alpha} \]
for those $i$ for which $v_i^*=0$, and set $\gamma^i_m =0$ when $v_i^*>0$.

Note that $\gamma_v^i $ is greater than or equal to zero for all $i \in \Y$ because $\lambda_v > 0$. On the other hand, $\gamma_m^i \geq0$ for all $i \in \Y$ thanks to the fact that $v_i^*=0 $ if and only if $ -\phi_i^c(\tilde x) - Z(\tilde x) \leq -\frac{1}{1-\alpha}$, as can be easily verified from the properties of $\log_\alpha$ for $\alpha\in [0,1)$. 

Using the fact that $\log_\alpha(0)= -\frac{1}{1-\alpha}$, we can directly verify that 1-4 in Lemma \ref{lemma:optimal_K} hold for the above choices of parameters. We deduce that $v^*$ is optimal for \eqref{eqn:OptimalClassifierEquation}.
\end{proof}

\begin{proof}[Proof of Corollary \ref{cor:alphaFairBarycenter}]

In this case $\beta(t)= -\log_\alpha(t)$ and a direct calculation reveals that the function $\varphi$ in \eqref{eqn:Varphi} becomes
\[ \varphi(s)= s\left( \frac{s^{q-1} -1}{q-1} \right)  -s,\]
for $\alpha >0$ and $\alpha \not = 1$. When $\alpha =0$, we have $\varphi(s) = -s$ for $s\leq 1$ and $\varphi(s)=\infty$ for $s >1$. In addition, for $\{ \tilde{\mu}_i \}_{i \in \Y}$ for which $\sum_{i \in \Y} C(\mu_i, \tilde \mu_i)$ is finite we must have $\sum_{i \in \Y} \tilde \mu_i(\X) =\sum_{i \in \Y} \mu_i(\X)=1 $. The desired result follows from these two facts.
\end{proof}

\begin{proof}[Proof of Corollary \ref{cor:AlphaFairDual}]

As for the cross-entropy loss, recall that if the set of $i \in A$ for which $m_i >0$ is contained in $\A$ (the collection of subsets $A'$ of $A$ such that $\bigcap_{i \in A'} B_\veps(x_i) \not = \emptyset$), then $c_A(x_A, m_A) =0$, while  $c_A(x_A, m_A) = \infty$ otherwise. Thus, as before, we can focus on the case $x_A \in \spt(\mu_A)$ s.t. $\bigcap_{i \in A} B_\veps(x_i) \not = \emptyset$, where we get
\[ \sup_{m_A  \in \Delta_{A}}  \left\{ \sum_{i \in A} m_i\phi_i(x_i)  +  \ell_A(m_A) - c_A(x_A , m_A)      \right\}   = \sup_{m_A  \in \Delta_{A}}  \left\{ \sum_{i \in A} m_i\phi_i(x_i)  +  \ell_A(m_A) \right\} .\]

Now, observe that the right hand side of the above expression can be rewritten as 
\begin{equation}
\sup_{m_A \in \Delta_A} \inf_{v \in \Delta_A} \sum_{i \in A}  ( \ell_{\alpha}(v, i)  +   \phi_i(x_i)  ) m_i,  
\label{eqn:AuxLogalpha}
\end{equation}
using the fact that $\ell_A(m_A)= \inf_{v \in \Delta_\Y} \sum_{i \in A} \ell_\alpha(v,i)m_i = \inf_{v \in \Delta_A} \sum_{i \in A} \ell_\alpha(v,i)m_i$. We can directly adapt the analysis in the proof of Corollary \ref{cor:OptimalClassAlpha} and deduce that the pair $(m^*, v^*)$ defined according to 
\[ v_i^*= \begin{cases}  \exp_\alpha \left( \max \left\{ \phi_i(x_i) - Z(x_A), -\frac{1}{1-\alpha}  \right\} \right) , & \text{ if } \alpha  \in [0,1),\\ \exp_\alpha(\phi_i(x_i) - Z(x_A)), & \text{ if } \alpha >1, \end{cases}\]
\[ m_i^* = \frac{(v_i^*)^\alpha}{\sum_{j \in A} (v_j^*)^\alpha},\]
for $i \in A$, is a saddle for the max-min problem \eqref{eqn:AuxLogalpha}; in the above, $Z(x_A)$ is a normalization that guarantees that $v^* \in \Delta_A$. The value of \eqref{eqn:AuxLogalpha} can thus be written as
\[ \sum_{i\in A} (-\log_\alpha(v_i^*) + \phi_i(x_i) ) m_i^*= \frac{\sum_{i\in A} (-\log_\alpha(v_i^*) + \phi_i(x_i) ) (v_i^*)^\alpha}{ \sum_{j \in A} (v_j^*)^\alpha   },  \]
and requiring for \eqref{eqn:AuxLogalpha} to be less than or equal to zero is in turn equivalent to the condition
\begin{equation}
\sum_{i \in A}  (- \log_\alpha(v_i^*)  +\phi_i(x_i))(v_i^*)^\alpha \leq 0.
\label{eqn:NewConditionAlpha}
\end{equation}
The subsequent analysis is split into two cases.

\textbf{Case 1:} In case $\alpha >1$, plugging the formula for $v^*$ in condition \eqref{eqn:NewConditionAlpha} we deduce $ Z(x_A) \sum_{i \in A} v_i^* \leq 0$, which is equivalent to $Z(x_A) \leq 0$. In turn, this condition is equivalent to 
\[ 1= \sum_{i \in A} \exp_\alpha(\phi_i(x_i) - Z(x_A)) \geq  \sum_{i \in A} \exp_\alpha(\phi_i(x_i)),   \]
thanks to Remark \ref{rem:Porpertiesexp+Alpha}. 

We conclude that problem \eqref{dual form} is equivalent to 
 \begin{equation*}
    \begin{aligned}
    &\inf_{ \{ \phi_i \}_{i \in \Y} \subseteq \G  }& \quad   -  &\sum_{i \in \Y} \int_\X \phi_i(x_i)  d \mu_i (x_i), \\
    & \qquad \mathrm{s.t.}& & \sum_{i\in A} \exp_\alpha(\phi_i(x_i)) \leq   1 \quad \forall A \subseteq \Y, \quad \forall x_A \in \spt(\mu_A) \quad  \mathrm{ s.t. } \quad  \bigcap_{i \in A} B_\veps(x_i) \not = \emptyset,
      \end{aligned}
\end{equation*}
and after the change of variables $\psi_i = \exp_\alpha(\phi_i) $ we deduce the desired result in the case $\alpha>1$.

\textbf{Case 2:} In case $ \alpha \in [0,1)$, condition \eqref{eqn:NewConditionAlpha} can be equivalently written as
\begin{align*}
   0 & \geq  \sum_{i \in A} (-\max\{ \phi_i(x_i) -Z(x_A), - \frac{1}{1-\alpha}  \}  + \phi_i(x_i) ) (v_i^*)^\alpha 
   \\ & =Z(x_A) \sum_{i \in A \, \text{s.t.} \, v_i^* >0} (v_i^*)^\alpha
   \\& = Z(x_A) \sum_{i \in A} (v_i^*)^\alpha,
\end{align*}
where in the second line we have used the fact that $v_i^*=0$ if and only if $\phi_i(x_i) -Z(x_A) \leq -\frac{1}{1-\alpha}$. Hence, \eqref{eqn:NewConditionAlpha} is equivalent to $Z(x_A) \leq 0$, just as in the $\alpha>1$ case. This condition, in turn, can be seen to be equivalent to
\[ 1\geq   \sum_{i \in A}\exp_\alpha \left( \max \left\{ \phi_i(x_i), -\frac{1}{1-\alpha}  \right\} \right).\]
We conclude that problem \eqref{dual form} is equivalent to 
 \begin{equation*}
    \begin{aligned}
    &\inf_{ \{ \phi_i \}_{i \in \Y} \subseteq \G }& \,   -  &\sum_{i \in \Y} \int_\X \phi_i(x_i)  d \mu_i (x_i), \\
    &\mathrm{s.t.}& & \sum_{i\in A} \exp_\alpha(\max\{ \phi_i(x_i), -\frac{1}{1-\alpha} \}) \leq   1, \, \forall A \subseteq \Y,\, \forall x_A \in \spt(\mu_A)\,  \mathrm{ s.t. } \bigcap_{i \in A} B_\veps(x_i) \not = \emptyset.
      \end{aligned}
\end{equation*}
Now, for any feasible tuple $\{\phi_i\}_{i \in \Y}$ in the above problem, the new tuple $\max\{ \phi_i, -\frac{1}{1-\alpha}  \}$ is feasible and moreover does not worsen the objective function of the original tuple. Hence, we can assume that the $\phi_i$ take values in the domain of $\exp_\alpha$ and then consider the change of variables $\psi_i= \exp_\alpha(\phi_i)$. The desired result follows immediately.  
\end{proof}

\subsubsection{Quadratic loss}

\begin{proof}[Proof of Corollary \ref{cor:OptimalClassQuadratic}]
Thanks to Theorem \ref{thm:main} we may focus on finding solutions to \eqref{eqn:OptimalClassifierEquation} for the choice $\ell=\ell_{Q}$ and $\phi_i=\phi_i^*$.

First, note that, even though $\partial_v \vec \ell(v)  $ is not a diagonal matrix as for the other loss functions already considered, a direct computation provides the explicit form
\[ \partial_v \vec \ell(v)  = - 2 \mathbb{I}_K + 2 v \otimes \vec 1_K.\] 
From this we deduce that, regardless of the value of $v \in \Delta_\Y$, condition 1 in Lemma \ref{lemma:optimal_K} is satisfied with the choices $\lambda_v=0$, $\gamma_v = \vec{0}_K$, and $m =v$. With these choices, condition 2 in Lemma \ref{lemma:optimal_K} is also satisfied. 

On the other hand, condition \ref{enum condition m opt 1} is equivalent to
\[(|v|^2+1)\vec 1_K - 2 v - \Phi^c + \lambda_m \vec 1_K + \gamma_m =\vec{0}_K,\]
or, after simplifications, to
\[ v = \frac{1}{2}\left[ (|v|^2 +1 + \lambda_m)  \vec 1_K + \gamma_m -\Phi^c \right],\]
for some vector $\gamma_m$ with non-negative entries and for a scalar $\lambda_m$. To obtain an explicit form for $v=v^*$, assume, without loss of generality, that $\phi^c_1(\tilde x) \leq \phi^c_2(\tilde x) \leq \ldots \leq \phi^c_K(\tilde x) $. With the usual convention $\min(\emptyset) =  \infty$, let $i^*$ and $c^*$ be given by 
\[ i^*  = K \wedge \min \{ i = 1, \ldots, K \quad  \mathrm{ s.t. } \quad  i \phi^c_{i+1}(\tilde x) -   \sum_{j=1}^{i}\phi_j^c(\tilde x) >2    \},\]
and 
\[c^* = \frac{1}{i^*} (2+ \sum_{i=1}^{i^*} \phi^c_i). \]
Let $v^*$ be defined as
\[ v_i^* := 
\begin{cases}
    \frac{1}{2} (c^*  - \phi^c_i),  & \text{ if } i\leq  i^*, \\
    0, & \mathrm{else},
\end{cases} 
\]
which can be seen to satisfy $v^* \in \Delta_\Y$. That the coordinates of $v^*$ sum to one is straightforward from the definition of $c^*$ and $i^*$. The fact that $v_i^*\geq 0$ for $i \leq i^*$ follows from the definition of $i^*$ and the fact that $\phi_i^c$ is non-decreasing in $i$. Indeed, if for the sake of contradiction we assumed that $v_i^*<0$ for some $i\leq i^*$, then we would contradict the definition of $i^*$.

Finally, we may take
\[ \gamma_m^i = \begin{cases}
    0, & \mathrm{ if } \, {i \leq i^*},
     \\\phi_i^c - c^*  & \mathrm{ else}, 
\end{cases}\]
and $\lambda_m = c^*-|v^*|^2-1$ and with all these choices verify conditions 3-4 in Lemma \ref{lemma:optimal_K}; note that, from the definition of $c^*$ and the fact that $\phi_i^c$ is non-decreasing in $i$, it follows that $\gamma_m$ indeed has non-negative entries. We conclude the desired result.
\end{proof}

\section{Applications}
\label{subsec:Lowerbounds}

It is straightforward to show that the $\alpha$-logarithmic losses in \eqref{def:AlphaLoss} are monotonically ordered according to
\begin{equation} \ell_{\alpha}(v, i) \leq \ell_{\alpha'}(v,i) \leq \ell_{\mathrm{ce}}(v,i), \quad v \in \Delta_\Y, \, i \in \Y, 
\label{eqn:CompAlphas}
\end{equation}
for all $ 0 \leq \alpha \leq \alpha' < 1$. Thanks to this and \eqref{eqn:LowerBound}, the learner-agnostic lower bounds for a smaller $\alpha$ are valid lower bounds on the adversarial risk of a model trained with the loss function $\ell_{\alpha'}$ for a larger $\alpha'$. In particular, solving the agnostic adversarial robustness problem with the 0-1 loss provides a lower bound for the adversarial risk of a model trained with the cross-entropy loss. 

In what follows, we illustrate in concrete experimental settings the possible gains of using the tighter bounds that our theoretical results motivate. The code used in our experiments is available at\\ \href{https://github.com/camgt/dual_adversarial_multidim}{https://github.com/camgt/dual\_adversarial\_multidim}.

\subsection{A synthetic example}

To demonstrate the practical implications of our theoretical results, we first consider an adversarially robust classification problem in a simple synthetic setting. We select $\mc X \subset \R^2$, $\mc Y = \{0,1,2\}$, and let $\mu_i$ be concentrated on 20 points sampled from $\mc N(m^i,  \mathbb{I}_2)$, where $m^i$ is one of $ (-2,2),(2,2), (-2,-2);$ an illustration is presented in Figure \ref{fig:points and adversarial risk - synthetic}. We consider the 0$-\infty$ cost function defined in \eqref{eqn:0inftyCost} using the Euclidean distance. We solve the dual problem \eqref{dual_alpha fair} using Python and the CVXOPT library for convex and linear optimization. 

The resulting adversarial risk is shown in Figure \ref{fig:points and adversarial risk - synthetic}. The plots illustrate, as expected, that risk increases with the adversarial budget $\veps$. Indeed, as the adversarial budget increases, points can increasingly interact with other classes. With sufficient adversarial budget, all points can be perturbed into other classes, resulting in \emph{complete confusion}. In that regime, the risk approaches $-\log_\alpha(1/3)$, which corresponds to the risk associated to a uniform distribution over the three classes, thanks to the fact that all the $\mu_i$ have the same number of points. Another aspect clearly illustrated in Figure \ref{fig:points and adversarial risk - synthetic} is that the adversarial risk increases with $\alpha$, in accordance with \eqref{eqn:CompAlphas}.

It is worth noting that there is a clear reduction in complexity when considering dual problems instead of direct adversarial attacks in the primal problem. Indeed, the original (primal) adversarial risk minimization problem would involve searching for a solution in the space of all couplings supported on balls of radius $\veps$ around the original clean data. In contrast, the dual problem in the discrete case requires us to solve only for the dual functions evaluated at the points in the support of the starting measures.

Furthermore, we observed that initializing the solver for a given $\alpha$ with a small perturbation of the solution from a previous $\tilde \alpha<\alpha$ significantly accelerates convergence\footnote{Specifically, we initialize the search with $ (1-\vartheta) \psi^{*}_{\tilde \alpha}  + \vartheta \vec{1}$ for small $\vartheta>0$. The rationale for adding this perturbation is that it always produces points in the interior of the feasible region and improves stability in the search.}, especially when leveraging the sparsity induced by $\alpha<1$ (recall Remark \ref{rem:QualitativeBehavalpha}). This is particularly useful when dealing with non-sparse losses such as cross-entropy. Additional efficiency could be achieved by more intensively distributing some computations, as highlighted in \cite{Diakonikolas_Fair_2020}.

Concerning the classifiers, we illustrate in Figure \ref{fig:classifiers - synthetic}
the optimal classifiers evaluated at the points in the supports of the original distributions $\mu_i$\footnote{Although we limit ourselves to the  points in the domain of $\mu$, classifiers can be computed for other points within a ball of radius $\veps$ from any point in any of the supports of the $\mu_i$.} for the cases $\alpha \in \{0,1\}$ obtained from  \eqref{classifier cross-entropy pure} and \eqref{eq:RobustClassif0-1}. Classifiers respond to the expected degree of confusion, with clearer classification (i.e., higher values) for points within a group as they are farther away from the boundary between groups. We have highlighted significant differences (larger than 0.1) between the optimal classifiers using the 0-1 and cross-entropy losses: All of these appear near the boundary and show that cross-entropy loss seems to give greater importance to the own label of the given points.

\begin{figure}
    \centering
    \includegraphics[width=0.45\linewidth]{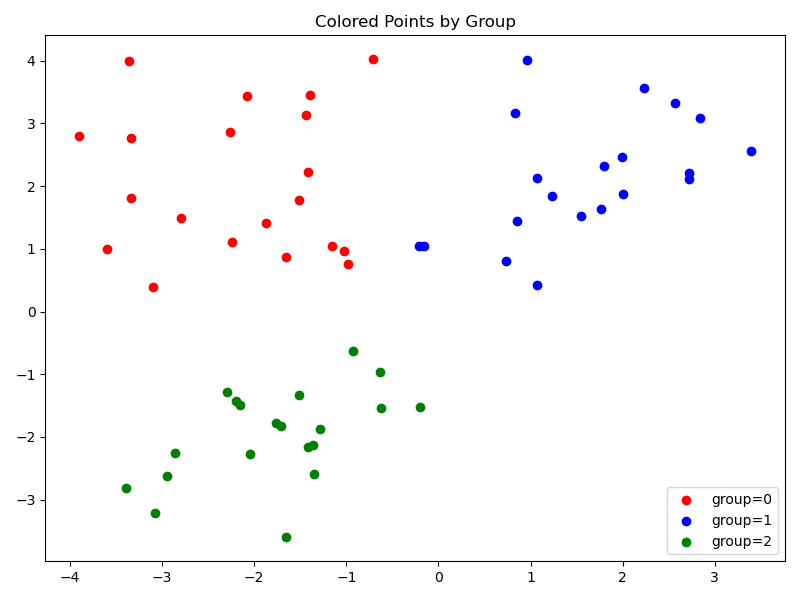}
    \includegraphics[width=0.45\linewidth]{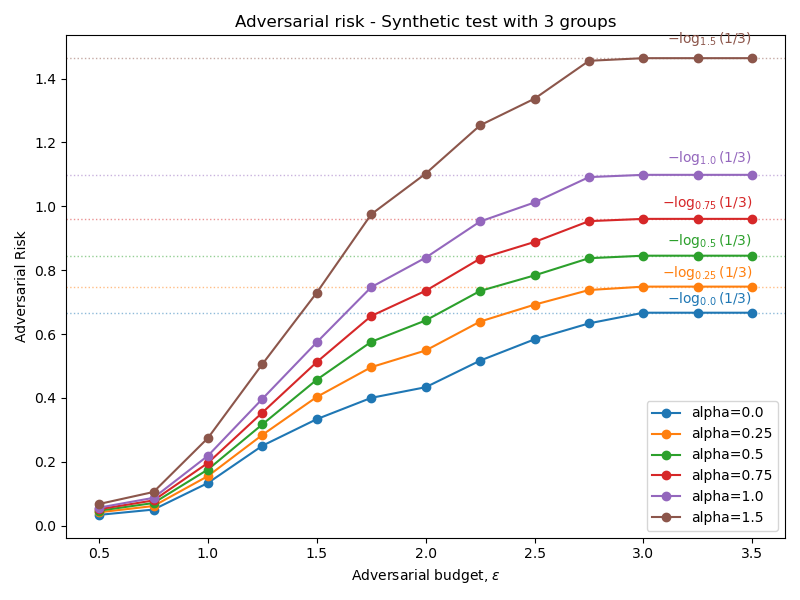}    
    \caption{\textit{Left:} Position of masses for initial measures $(\mu_i)_{i=0,1,2}$; \textit{Right:} Adversarial risk for different $\alpha$. As expected,  plots are monotonic with respect to the adversarial budget, and converge to the risk of full confusion between labels. Notice, also, that plots are  monotonic in $\alpha$ for a fixed budget.    }
    \label{fig:points and adversarial risk - synthetic}
\end{figure}

\begin{figure}
    \centering
    \includegraphics[width=0.8\linewidth]{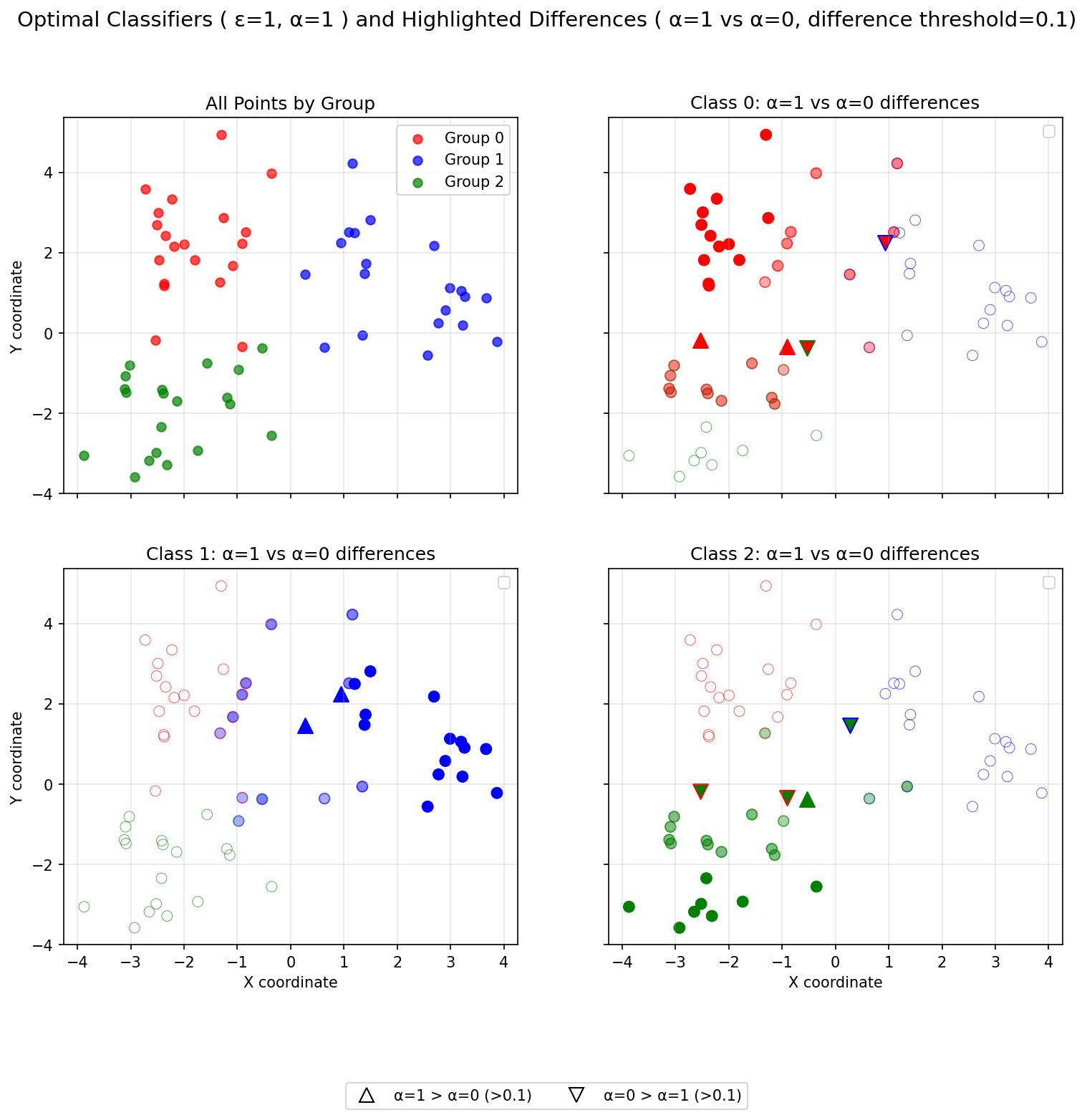}
    \caption{Top left: original data points. Remaining subplots: optimal classifier for each group in the case $\veps =1,\alpha =1$ (i.e. cross-entropy). The value is represented in terms of opaqueness of the interior (higher value, higher opaqueness). The original group is represented by the edge color. Arrows highlight significant differences ($>0.1$) with optimal classifier with same adversarial budget but $\alpha=0$ (0-1 loss). The direction of the arrow indicates the sign of this difference.}
    \label{fig:classifiers - synthetic}
\end{figure}

\subsection{Application to an MNIST sample}

For a more realistic view of the applicability of our results, we turn to the adversarially robust classification of a sample of MNIST images. In this example, $\mc X = \R^{784}$ and we consider four groups corresponding to numbers $1,4,7$ and 9, with 50 images per class.  As before, we consider the 0$-\infty$ cost function defined in \eqref{eqn:0inftyCost} using the Euclidean and Chebyshev distances. 

Figure \ref{fig:MNIST risk} shows the results of solving the dual problem for $\alpha \in \{0, 0.75, 1\}$. We can observe a similar behavior as in the synthetic case, with the risk increasing with the adversarial budget and with $\alpha$. Here, we cap the number of groups that can interact in the dual to either 2 or 3 (as suggested in Remark \ref{rem:Truncation}). As expected, allowing for more interactions produces sharper lower bounds. More importantly, from a practical perspective, truncating the number of interactions has a small effect for small adversarial budgets. Observe the difference in shape between the two distance functions.  Indeed, the Chebyshev case has a staircase behavior corresponding to the fact that, in this metric, points tend to cluster around certain distances from each other. Let us remark that, under the Chebyshev distance, we lack information for $\alpha=1$ when the adversarial budget is large, given that the optimizer that we used did not converge in the specified number of iterations. This illustrates the potential advantages of using intermediate values of $\alpha$ in obtaining sharper lower bounds for a problem with cross-entropy loss than those offered by the 0-1 loss. Overall, this example illustrates the relevance of our theoretical results and reinforces the insights from our synthetic tests.

\begin{figure}
    \centering
    \includegraphics[width=0.75\linewidth]{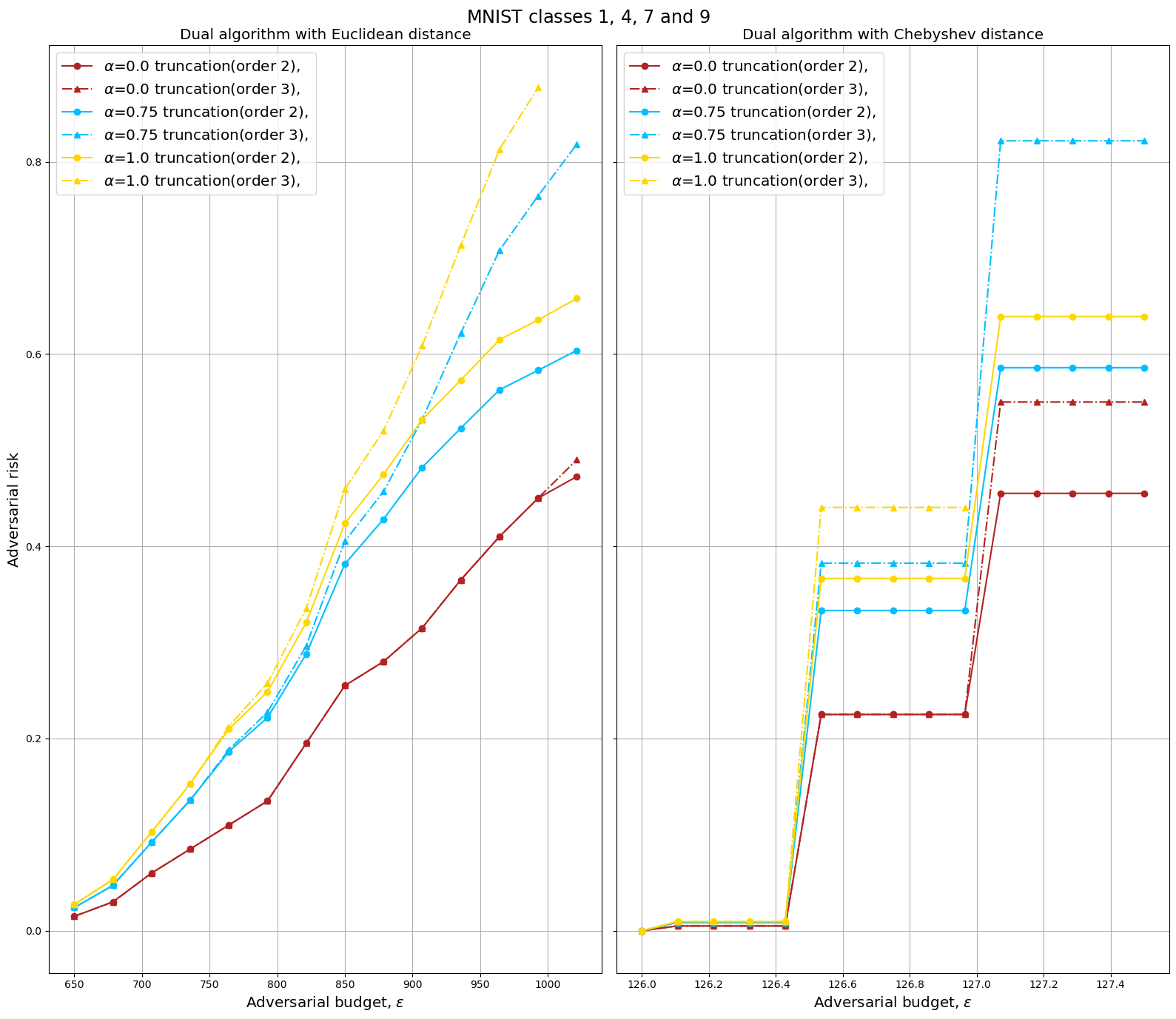}
    \caption{Adversarial risk as a function of adversarial budget for the MNIST test}
    \label{fig:MNIST risk}.
\end{figure}

\section{Conclusions}
\label{sec:Conclusions}

We considered adversarially robust optimization for multiclass supervised learning with general loss functions. We obtained new dual and barycenter formulations for the learner-agnostic adversarial risk minimization problem beyond the 0-1 loss setting, providing in this way sharp lower bounds for adversarial risks under general losses. We studied in detail the quadratic and cross-entropy losses, which are of theoretical and practical interest. We also studied a family of power loss functions that we termed $\alpha$-logarithmic losses, which can be seen to interpolate between the 0-1 and cross-entropy losses. The family of $\alpha$-logarithmic losses has been used in fairness and economics, has good analytical properties, offers theoretical connections to Tsallis entropies and associated divergences through our generalized barycenter results, and provides practical flexibility for classification tasks. Our numerical experiments illustrate the promising practical benefits of our dual formulation, including improved convergence and the potential for distributed optimization techniques. Future work may explore these computational aspects more deeply, including the development of distributed algorithms for the dual problem and the further exploitation of warm-start strategies and sparsity properties to accelerate convergence.

\section*{Acknowledgments}
NGT was supported by the grant NSF-DMS 2236447. The authors would like to thank Jakwang Kim and Matt Werenski for providing the code that facilitated the implementation of our numerical examples. Claude Sonnet 4 was used for code debugging purposes.

\bibliographystyle{siam}
\bibliography{ML}

\appendix

\section{Adversarial training}
\label{app:AT}
As discussed in several papers in the literature (see, e.g., \cite{VarunMuni2,ExistenceSolutionsAT}) there is a close connection between the problem \eqref{eqn:StnadardAT} and the problem \eqref{eqn:ATGeneralLoss} for the cost function $c$ as in \eqref{eqn:0inftyCost}. We point out, however, that some care is needed to rigorously make a statement about this equivalence given that the function 
\[x\mapsto \sup_{\tilde x \in B_\veps(x) } \ell(f(\tilde x ), i), \]
for $B_\veps(x)$ the \textit{closed} ball of radius $\veps$ around $x$,
may not necessarily be Borel measurable if $f$ is only assumed to be Borel measurable. However, if we put these measurability issues aside, we can provide an informal argument suggesting this equivalence. 

First, for the cost $C$ induced by $c_\veps$, it is straightforward to see that $C(\mu_i, \tilde \mu_i) =0$ if and only if there exists $\pi_i \in \Gamma(\mu_i, \tilde \mu_i)$ whose support is contained in the set $\{ (x, \tilde x) \text{ s.t. } d(x, \tilde x) \leq \veps \}$. If the latter condition is not satisfied, then $C(\mu_i, \tilde \mu_i) =\infty$. From this one should formally deduce that \eqref{eqn:ATGeneralLoss} is smaller than \eqref{eqn:StnadardAT}. To motivate the reverse inequality, for a given $f \in \F$ we can formally consider the mapping $x \mapsto T_i(x) \in \mathrm{argmax}_{\tilde x \in B_\veps(x) } \ell( f(\tilde x),i)$ (note that this map may not be Borel measurable if the only thing known about $f$ is that it is Borel measurable). Intuitively, the idea in this construction is to associate the worst possible perturbation to every input $x \in \X$. With these maps, one may then consider the measures $\tilde{\mu}'_i := T_{i \sharp }\mu_i $, $i \in \Y$, and formally get the inequality
\begin{align*}
 \sum_{i \in \Y}   \int_\X \sup_{\tilde x \in B_\veps (x)}\ell(f(\tilde x), i) d \mu_i( x)  & =    \sum_{i \in \Y}   \int_\X \ell(f(\tilde x), i) d\tilde \mu_i'(\tilde x)   - \sum_{i \in \Y} C(\mu_i, \tilde \mu_i') 
 \\& \leq    
\sup _{ \{\tilde{\mu}_i\}_{i \in \mc Y}  } \sum_{i \in \Y}   \int_\X \ell(f(\tilde x), i) d\tilde \mu_i(\tilde x)   - \sum_{i \in \Y} C(\mu_i, \tilde \mu_i),   
\end{align*}
which motivates the reverse relation between \eqref{eqn:ATGeneralLoss} and \eqref{eqn:StnadardAT}.

\section{Additional details in the proof of Theorem \ref{thm:main}}
\label{app:MoreDetails}

\begin{lemma}
Let $c$ be a cost function satisfying Assumption \ref{assump:Cost} and let $\phi_i \in C_b(\X)$. Then $\phi_i^c$ is lower-semicontinuous and hence Borel measurable. 
\label{lem:MeasurabilityCTRansform}
\end{lemma}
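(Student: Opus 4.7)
The plan is to prove lower-semicontinuity of $\phi_i^c$ sequentially: for every $\tilde x_n \to \tilde x$ in $\X$, show $\liminf_n \phi_i^c(\tilde x_n) \geq \phi_i^c(\tilde x)$. Since $\X$ is Polish and hence metrizable, this characterizes lower-semicontinuity, and Borel measurability follows automatically. The only properties of $\phi_i$ that enter are continuity and boundedness; the work is entirely on the cost side, where Assumption~\ref{assump:Cost} must supply enough compactness to extract a limit of near-minimizers.

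First I would pass to a subsequence along which $\phi_i^c(\tilde x_n) \to L \in [-\infty, +\infty]$. The case $L = +\infty$ is trivial, so assume $L < \infty$ and pick near-minimizers $x_n \in \spt(\mu_i)$ with
\[ c(x_n, \tilde x_n) - \phi_i(x_n) \leq \phi_i^c(\tilde x_n) + 1/n. \]
Writing $M := \lVert \phi_i \rVert_\infty$, this yields the crucial a priori bound $c(x_n, \tilde x_n) \leq L + 1 + M$ for all large $n$.

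Under item~(1) of Assumption~\ref{assump:Cost}, boundedness of $\tilde x_n$ (convergent) together with the uniform bound on $c(x_n, \tilde x_n)$ activates the precompactness/coercivity hypothesis, so after a further subsequence $x_n \to x^* \in \spt(\mu_i)$ (the support is closed). Joint lower-semicontinuity of $c$ and continuity of $\phi_i$ then give
\[ \phi_i^c(\tilde x) \leq c(x^*, \tilde x) - \phi_i(x^*) \leq \liminf_n \bigl[ c(x_n, \tilde x_n) - \phi_i(x_n) \bigr] \leq L, \]
which is the desired inequality.

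The main obstacle is item~(2), where $c = \min\{c_0, B\}$ may saturate and the coercivity hypothesis is only for $c_0$. The plan is to split on whether $c_0(x_n, \tilde x_n) \leq B$ infinitely often: if so, restrict to that subsequence (on which $c = c_0$) and repeat the previous paragraph's argument with $c_0$ in place of $c$. Otherwise, $c(x_n, \tilde x_n) = B$ for all large $n$, so
\[ \phi_i^c(\tilde x_n) + 1/n \geq B - \phi_i(x_n) \geq B - \sup\nolimits_{\spt(\mu_i)} \phi_i, \]
and passing to the limit gives $L \geq B - \sup_{\spt(\mu_i)} \phi_i$. But the trivial bound $\phi_i^c(\tilde x) \leq B - \phi_i(x)$ for any $x \in \spt(\mu_i)$ (using $c \leq B$) yields $\phi_i^c(\tilde x) \leq B - \sup_{\spt(\mu_i)} \phi_i \leq L$. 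The subtle point is that saturation at $B$ is simultaneously the obstruction to precompactness and the reason both sides collapse to the same trivial upper bound, closing the argument.
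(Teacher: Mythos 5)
Your proof is correct and, for costs satisfying the compactness--coercivity condition, it is essentially the paper's argument: extract near-minimizers $x_n\in\spt(\mu_i)$, use boundedness of $\phi_i$ to get a uniform bound on $c(x_n,\tilde x_n)$ (or dispose of the case where this fails), invoke precompactness to pass to a convergent subsequence in the closed set $\spt(\mu_i)$, and conclude by lower-semicontinuity of $c$ and continuity of $\phi_i$. The one place you go beyond the paper is case (2) of Assumption \ref{assump:Cost}: the paper simply asserts that it suffices to treat case (1), whereas you give an explicit saturation argument showing that when $c(x_n,\tilde x_n)=B$ eventually, both $L$ and $\phi_i^c(\tilde x)$ are controlled by the same constant $B-\sup_{\spt(\mu_i)}\phi_i$; your argument is valid, and an alternative one-line justification of the paper's reduction is the identity $\phi_i^c=\min\{\phi_i^{c_0},\,B-\sup_{\spt(\mu_i)}\phi_i\}$, which exhibits $\phi_i^c$ as the minimum of a lower-semicontinuous function and a constant.
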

\begin{proof}
It is sufficient to prove the result for cost functions $c$ satisfying the compactness and coercivity condition. First, since $\phi_i$ is bounded, it follows that $\phi_i^c$ is bounded from below by a fixed constant. Since $\X$ is a metric space, to prove that $\phi_i^c$ is lower semi-continuous it would suffice to prove that it is sequentially lower-semicontinuous. Toward that aim, suppose that $\tilde x_n \rightarrow \tilde x $, and for each $n \in \N$ let $x_n \in \spt(\mu_i)$ be such that
\[ \phi_i^c(\tilde x_n) + \frac{1}{n} \geq c(x_n, \tilde x_n ) - \phi_i(x_n). \]
If $\liminf_{n \rightarrow \infty} c(x_n, \tilde x_n) =\infty$, we can immediately deduce $\liminf_{n \rightarrow \infty} \phi_i^c(\tilde x_n) \geq \phi_i^c(\tilde x).$ If not, without the loss of generality we can assume that $\sup_{n \in \N} c(x_n, \tilde x_n) <\infty$. Thanks to Assumption \ref{assump:Cost} we can then conclude that, up to a subsequence that we do not relabel (for simplicity), we must have $x_n \rightarrow x$ for some $x$. Since $\spt(\mu_i)$ is always a closed set, the point $x$ must belong to  $\spt(\mu_i)$. We can now use the lower-semicontinuity of $c$ and the continuity of $\phi_i$ to deduce that
\[  \liminf_{n \rightarrow \infty} \phi_i^c(\tilde x_n) \geq \liminf_{n \rightarrow \infty} ( c(x_n, \tilde x_n) - \phi_i(x_n)) \geq c(x, \tilde x) - \phi_i(x) \geq \phi_i^c(\tilde x), \]
completing in this way the proof.
\end{proof}

Next, we present the proof of Proposition \ref{prop:MMOTVersion}. At a high level, the strategy is similar to the proof of the Kantorovich duality theorem appearing in Chapter 1.1.7 in \cite{VillaniBook}. However, the approximation arguments and specific details to make this strategy work are nontrivial adjustments of the ones discussed in \cite{VillaniBook}. These modifications to these arguments are necessary, given that problem \eqref{eqn:MMOTVersion} is not a standard MMOT problem. Below, we restate Proposition \ref{prop:MMOTVersion} in a slightly different way, noticing that the constraint in \eqref{dual form} holds if and only if $\sum_{i \in \Y} m_i\phi_i(x_i) \leq c_\Y(\vec x, m) - \ell_\Y(m)$ for all $x_i \in \X_i, \, i \in \Y$, and $m \in \Delta_\Y$.
\begin{proposition}
\label{prop:WeirdMMOT}
Let $\c: \X_1 \times \dots \times \X_K \times \Delta_\Y \rightarrow \R \cup \{\infty \}$ be defined as
\begin{equation}
   \mathbf{c}(\vec x , m) := c_\Y(\vec{x},m) - \ell_\Y(m),
   \label{eqn:CostTensor}
\end{equation}
for a cost function $c$ satisfying Assumption \ref{assump:Cost} and a loss function $\ell$ satisfying Assumption \ref{assump:LossFunction}.
Then 
\begin{equation}
     \min_{\pi \in \mathfrak{G} } \int_{\X_1 \times \dots \times \X_K \times \Delta_\Y} \mathbf{c}(\vec x , m) d\pi(\vec x, m)
     \label{eqn:MMOTPrimal}
\end{equation}
(recall $\mathfrak{G}$ was introduced in \eqref{eqn:CouplingsWeird})
is equal to 
\begin{equation}
    \begin{aligned}
    &\sup_{ \{ \phi_i \}_{i \in \Y}  \subseteq C_b(\X)  }& \quad  &  \sum_{i \in \Y} \int_\X \phi_i(x_i) d \mu_i (x_i) \\
    & \qquad \mathrm{s.t.}& &    \sum_{i \in \Y} m_i\phi_i(x_i)  \leq \mathbf{c}(x_1, \dots, x_k, m) , \, \forall (\vec{x}, m)  \in \X_1 \times \dots\times \X_K \times \Delta_\Y.
    \end{aligned}
    \label{eqn:DualityRestated}
      \end{equation}
\end{proposition}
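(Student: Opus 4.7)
The plan is to follow the Fenchel-Rockafellar-based strategy underlying Villani's proof of the Kantorovich duality theorem (Chapter 1.1.7 of \cite{VillaniBook}), adapted to the weighted-marginal multimarginal setting of Proposition \ref{prop:WeirdMMOT}. Weak duality (that \eqref{eqn:DualityRestated} $\leq$ \eqref{eqn:MMOTPrimal}) is immediate: for any admissible $\pi \in \mathfrak{G}$ and admissible tuple $(\phi_1, \ldots, \phi_K)$, the definition of $P_i\pi$ and the dual feasibility inequality give
\[
\sum_{i \in \Y} \int_\X \phi_i(x_i) \, d\mu_i(x_i)
\;=\; \int \sum_{i \in \Y} m_i \phi_i(x_i) \, d\pi(\vec x, m)
\;\leq\; \int \mathbf{c}(\vec x, m) \, d\pi(\vec x, m).
\]

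For the reverse inequality I would apply Fenchel-Rockafellar on the Banach space $E = C_b(\X_1 \times \cdots \times \X_K \times \Delta_\Y)$ with the convex functionals
\[
F(u) := \begin{cases} 0 & \text{if } u + \mathbf{c} \geq 0 \text{ pointwise,} \\ +\infty & \text{otherwise,} \end{cases}
\]
and $G(u) := -\sum_{i \in \Y} \int \phi_i\, d\mu_i$ whenever $u$ admits the representation $u(\vec x, m) = -\sum_{i} m_i \phi_i(x_i)$ for some $\phi_i \in C_b(\X_i)$ (which is unique, as seen by evaluating at $m = e_j$), and $G(u) := +\infty$ otherwise. Then $\inf_E(F + G)$ equals the negative of the value of \eqref{eqn:DualityRestated}. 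Lower semicontinuity of $F$ follows from l.s.c.\ of $\mathbf{c}$, which itself comes from the l.s.c.\ of $c$ via a standard infimum argument giving l.s.c.\ of $c_\Y$, together with continuity of $\ell_\Y$ on the compact simplex $\Delta_\Y$ (both consequences of Assumptions \ref{assump:Cost} and \ref{assump:LossFunction}). The Slater qualification is met by choosing $\phi_i \equiv -\ell(v_0, i) - 1$ (with $v_0$ as in Assumption \ref{assump:LossFunction}), whose associated $u_0(\vec x, m) = \sum_i m_i(\ell(v_0, i) + 1)$ lies in $C_b$, renders $F, G$ finite, and satisfies $u_0 + \mathbf{c} \geq 1$ everywhere (using $\ell_\Y(m) \leq \sum_i m_i \ell(v_0, i)$ and $c_\Y \geq 0$), so $F$ is continuous at $u_0$. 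A routine computation of the Fenchel conjugates then identifies $G^*(v) = 0$ on the set of positive measures $\pi$ with $P_i\pi = \mu_i$ for every $i$ (and $+\infty$ otherwise), by freeing each $\phi_j$ in turn, and identifies $F^*(-v) = \int \mathbf{c}\, d\pi$ for $\pi \geq 0$, using approximation of the upper semicontinuous function $-\mathbf{c}$ from above by bounded continuous functions (valid because $-\mathbf{c}$ is bounded above by $\max_i \ell(v_0, i)$).

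The main obstacle is that $\mathbf{c}$ may take the value $+\infty$ (e.g., under the $0$-$\infty$ cost in \eqref{eqn:0inftyCost}) and that the topological dual $C_b^*$ is strictly larger than the space of finite signed Radon measures on $\X_1 \times \cdots \times \X_K \times \Delta_\Y$, so the sup in the Fenchel-Rockafellar conclusion is a priori over a space bigger than $\mathfrak{G}$. To bypass this, I would first observe that every $\pi \in \mathfrak{G}$ is a probability measure, since $\pi(\X_1 \times \cdots \times \Delta_\Y) = \int (\sum_i m_i)\, d\pi = \sum_i \mu_i(\X) = 1$; combined with tightness of each $\mu_i$ and compactness of $\Delta_\Y$, and with the fact that finite-cost $\pi$ are further concentrated on bounded subsets of the product space (an input provided by Assumption \ref{assump:Cost}), this yields tightness of optimizing sequences for the primal. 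Restricting then to a sufficiently large compact subset where $C_b$ and $C_0$ agree recovers the identification of the dual with Radon measures. A complementary device is to truncate $\mathbf{c}$ by $\mathbf{c}^{(N)} := \min(\mathbf{c}, N)$, which is lower semicontinuous and bounded, apply the Fenchel-Rockafellar argument above to the truncated problem for each $N$, and pass to the limit $N \to \infty$: the primal values are monotone nondecreasing and converge to $\inf_{\pi \in \mathfrak{G}} \int \mathbf{c}\, d\pi$ by monotone convergence along a weakly convergent subsequence of minimizers, while the dual admissible set enlarges with $N$ so the dual values are also monotone nondecreasing. Combining strong duality for each truncation with the passage to the limit and with weak duality established at the outset yields the claimed equality \eqref{eqn:MMOTPrimal} $=$ \eqref{eqn:DualityRestated}.
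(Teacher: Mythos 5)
Your weak duality argument, your Fenchel--Rockafellar setup (the choice of $F$ and $G$, their conjugates, and the Slater point built from $v_0$), and your final truncation $\mathbf{c}^{(N)}=\min(\mathbf{c},N)$ all match the paper's Steps 0, 1 and 3. The gap is in your treatment of non-compact $\X_i$. The obstruction you correctly identify --- that $C_b(\X_1\times\cdots\times\X_K\times\Delta_\Y)^*$ is strictly larger than the space of Radon measures --- is not resolved by tightness of primal minimizing sequences or by ``restricting to a sufficiently large compact subset where $C_b$ and $C_0$ agree'': the Fenchel--Rockafellar maximizer lives a priori in the full topological dual (finitely additive set functions), and knowing that the \emph{primal}, measure-valued problem has tight minimizing sequences says nothing about whether that abstract dual element is countably additive. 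You would need a separate argument that any feasible element of $E^*$ with finite dual value is automatically Radon, and you do not supply one. (Also, your claim that finite-cost plans concentrate on bounded sets fails under the second branch of Assumption \ref{assump:Cost}, where $c=\min\{c_0,B\}$ is bounded, so every plan has finite cost.)

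The paper instead runs the compact argument on restricted measures $\mu_i^0$ supported on compact sets $\X_i^0$ with $\mu_i(\X_i\setminus\X_i^0)\le\delta$, which produces potentials $\phi_i$ that are feasible and near-optimal \emph{only on} $\X_1^0\times\cdots\times\X_K^0\times\Delta_\Y$. The entire difficulty --- and the part your proposal omits --- is then to upgrade these to $\tilde\phi_i\in C_b(\X)$ satisfying $\sum_i m_i\tilde\phi_i(x_i)\le\mathbf{c}(\vec x,m)$ for \emph{all} $(\vec x,m)$ while losing only $O(\delta)$ in the objective. Because the constraint couples the potentials through the weights $m_i$ and the joint cost $c_\Y(\vec x,m)-\ell_\Y(m)$, this is not the standard $c$-transform extension from Kantorovich duality: the paper does it by an iterated partial transform
\[
\phi_i'(x_i)=\inf_{x_{\Y\setminus\{i\}},\,m:\,m_i>0}\ \frac{1}{m_i}\Bigl(\mathbf{c}(\vec x,m)-\sum_{j<i}m_j\tilde\phi_j(x_j)-\sum_{j>i}m_j\phi_j(x_j)\Bigr),
\]
preceded by a normalization ensuring each $\phi_i$ attains a value $\ge D_0$ somewhere on $\X_i^0$ (needed so that the division by $m_i$ does not destroy the uniform lower bound $D_0'$), and followed by the choice of a continuous minorant of the resulting lower-semicontinuous function. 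Without this extension step your argument does not close; note also that truncating $\mathbf{c}$ addresses unboundedness of $c$, not non-compactness of the $\X_i$ --- these are separate issues, handled in separate steps of the paper.
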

\begin{remark}
The functions $\phi_i$ in \eqref{eqn:DualityRestated} belong to $C_b(\X)$ and are thus assumed to be defined in the whole of $\X$. However, as per Tietze's extension theorem, we can equivalently consider $\phi_i \in C_b(\X_i)$, i.e., continuous and bounded functions only defined on $\X_i$. 
\end{remark}

\begin{proof}[Proof of Proposition \ref{prop:WeirdMMOT}]
We split the proof into several steps.

\textbf{Step 0:} We begin with a series of observations. First, note that the cost tensor $\mathbf{c}$ is lower-semicontinuous and bounded from below by a constant. Indeed, the fact that it is bounded from below follows from the fact that $c$ is non-negative and the fact that, thanks to Assumption \ref{assump:LossFunction}, $\ell_\Y$ is bounded above by a positive constant (e.g., by $\max_{i \in \Y}\ell(v_0,i)$).  The lower-semicontinuity of this cost tensor follows from Assumption \ref{assump:Cost} on the cost function $c$ and the fact that $\ell_\Y$ is an upper semi-continuous function (since it is the infimum over a family of continuous functions). Given that $\sum_{i\in \Y}\mu_i$ is a probability measure over $\X$, it follows that the desired strong duality holds if and only if it holds after adding or subtracting a constant to the cost tensor $\mathbf{c}$. Because of this, we will implicitly assume that $\mathbf{c}\geq 0$ throughout the rest of this proof. Finally, note that it is sufficient to prove that $\eqref{eqn:DualityRestated} \geq \eqref{eqn:MMOTPrimal}$, since the reverse inequality follows easily as when analyzing duality in MMOT problems.

\textbf{Step 1:} We will first prove the result under the additional assumptions that the sets $\X_1, \dots, \X_K$ are compact and the cost function $c$ is bounded. We seek to apply the Fenchel-Rockafellar duality theorem (Theorem 1.9 in \cite{VillaniBook}) with a suitable choice of spaces and functions. In particular, we consider the Banach space $E= C_b(\X_1, \dots, \X_K\times \Delta_\Y)$, whose dual $E^*$ is $\M(\X_1 \times \dots \times \X_K \times \Delta_\Y)$ (the space of finite signed Borel measures on $\X_1\times \dots \X_k \times \Delta_\Y$), thanks to the compactness assumption on the sets $\X_i$. Next, we define the (convex) functions $\Theta, \Xi : E \rightarrow \R\cup \{ \infty\} $ according to
\[ \Theta(\Phi):= \begin{cases} 0, & \text{ if } \quad \Phi(\vec x , m)\geq-   \mathbf{c}(\vec{x},m),
\\ \infty, & \text{ else,}\end{cases} \]
\[ \Xi(\Phi):= \begin{cases} \sum_{i \in \Y} \int_{\X_i} \phi_i(x_i) d\mu_i(x_i), & \text{ if } \quad \Phi(\vec x , m)= \sum_{i \in \Y} m_i \phi_i(x_i),   
\\ \infty, & \text{else.} \end{cases} \]
A direct computation reveals that the Fenchel dual of $\Theta$ is
\[ \Theta^*(-\pi) = \sup_{\Phi \in  E  } \{ -\int \Phi d\pi   - \Theta(\Phi) \} = \begin{cases}  \int\mathbf{c}d \pi, & \text{ if } \pi \in \M_+(\X_1 \times \dots \times \X_K \times \Delta_\Y),\\ \infty, & \text{ else}, \end{cases}  \]
because $\mathbf{c}$ is lower semi continuous and non-negative (thus it admits a monotone approximation from below with continuous and bounded functions). Also, $\Xi$'s dual is
\[ \Xi^*(\pi)=   \sup_{\Phi \in  E  }  \{ \int \Phi d\pi   - \Xi(\Phi) \} = \begin{cases}  0, \quad \text{ if }  P_i \pi = \mu_i, \quad \forall i \in \Y, \\ \infty, \quad \text{else}.  \end{cases} \]
The Fenchel-Rockafellar duality theorem gives 
\[ \inf_{\Phi \in E} \{ \Theta(\Phi) + \Xi(\Phi) \} = \max_{\pi \in E^*} \{ - \Theta^*(- \pi) - \Xi(\pi)  \}  , \]
which, after rewriting it, is precisely the desired result under the additional assumptions that the sets $\X_1, \dots, \X_K$ are compact and $c$ is bounded. 
\medskip

\textbf{Step 2:} Next, we relax the assumption that the sets $\X_1, \dots, \X_K$ are compact, but we continue to assume that $c$ is bounded. Let $0<\delta < \frac{1}{4}$. Following the second step in the proof of Theorem 1.3 in \cite{VillaniBook}, we can find compact sets $\X_i^0 \subseteq \X_i$ and positive measures $\mu_i^0$ concentrated on $\X_i^0$ satisfying:
\begin{enumerate}
\item $\mu_i(\X_i \setminus \X_i^0) \leq \delta$ for all $i \in \Y$.
\item $ (1+\delta)\mu_i^0(B) \geq \mu_i(B) \geq (1-\delta) \mu_i^0(B)$ for every Borel subset $B$ of $\X_i^0$ and every $i \in \Y$.
\item $\sum_{i \in \Y} \mu_i^0(\X_i^0) =1.$
\item $\min_{\pi^0 \in \mathfrak{G}_0} \int_{\X_1^0 \times \dots \times \X_K^0 \times \Delta_\Y } \mathbf{c}(\vec{x}, m) d\pi^0(\vec x , m) \geq \eqref{eqn:MMOTPrimal} - \delta,$    
\end{enumerate}
where $\mathfrak{G}_0$ is defined as $\mathfrak{G}$ but with $\mu_i^0$ and $\X_i^0$ in place of $\mu_i$ and $\X_i$, respectively. Applying Step 1 to the measures $\mu_i^0$ (since they are concentrated on the compact sets $\X_i^0$), we can obtain a tuple $\{\phi_i \}_{i \in \Y}$ of functions $\phi_i \in C_b(\X_i^0)$ satisfying
\begin{equation}
    \sum_{i \in \Y} m_i \phi_i(x_i) \leq \mathbf{c}(\vec{x}, m), \quad \forall x_i \in \X_i^0, \, i \in \Y, \, m \in \Delta_\Y,
\label{eqn:FeasibilityAux}
\end{equation}
as well as
\begin{equation}
\sum_{i \in \Y}\int_{\X_i^0} \phi_i(x_i) d\mu_i^0(x_i) \geq  \min_{\pi^0 \in \mathfrak{G}_0} \int_{\X_1^0 \times \dots \times \X_K^0 \times \Delta_\Y } \mathbf{c}(\vec{x}, m) d\pi^0(\vec x , m) - {\delta} \geq \eqref{eqn:MMOTPrimal} -2\delta.
\label{eqn:BetterObjective}
\end{equation}
Our goal now is to use the tuple $\{\phi_i\}_{i \in \Y}$ to construct functions $\{ \tilde \phi_i \}_{i \in \Y}$, with $\tilde \phi_i \in C_b(\X)$ for every $i \in \Y$, that satisfy
\begin{equation}
    \sum_{i \in \Y} m_i \tilde \phi_i(x_i) \leq \mathbf{c}(\vec{x}, m), \quad \forall x_i \in \X, \, i \in \Y, \, m \in \Delta_\Y,
\label{eqn:FeasibilityAux2}
\end{equation}
as well as
\begin{equation}
   \sum_{i \in \Y} \int_\X \tilde \phi_i(x_i) d\mu_i(x_i)  \geq \eqref{eqn:MMOTPrimal}  - C_0\delta,
\label{eqn:LowerBoundDuality}
\end{equation}
for some constant $C_0$ independent of $\delta$. This will be sufficient to deduce the desired duality result under the additional assumption that $c$ is bounded, thanks to the observations we made in Step 0.

We thus focus on constructing the functions $\tilde\phi_i$ mentioned above. This construction is where our argument differs more significantly from the one presented in \cite{VillaniBook}. First, observe that if the tuple $\phi_1, \dots, \phi_K$ is feasible as in \eqref{eqn:FeasibilityAux}, then necessarily
\begin{equation}
 \phi_i(x_i) \leq \lVert c \rVert_{\infty}, \quad \forall x_i \in \X_i^0,\, i \in \Y, 
    \label{eqn:UpperBoundDuals}
\end{equation}
which follows from \eqref{eqn:FeasibilityAux} by just taking $m \in \Delta_\Y$ with $m_i=1$. Next, we claim that we can assume, without the loss of generality, that for every $i \in \Y$ there is $x_i^0 \in \X_i^0$ such that
\[ \phi_i(x_i^0) \geq  - \frac{2K \lVert c \rVert_\infty}{\min_{j \in \Y} \mu_j(\X)} =: D_0 .\]
Indeed, if not, we could take those $i$ for which $\sup_{x_i \in \X_i^0 } \phi_i(x_i) \leq - \frac{2K \lVert c \rVert_\infty}{\min_{j\in \Y} \mu_j(\X)} $ (we will denote this set of $i$ by $\Y_s$)
and consider a number $M_i$ with $M_i \geq  \frac{2K \lVert c \rVert_\infty}{\min_{j \in \Y} \mu_j(\X)}$ such that the sup of $\hat{\phi_i} := \phi_i + M_i$ 
is negative but greater than $ -\frac{K \lVert c \rVert_\infty}{\min_{j \in \Y} \mu_j(\X)}$. For all other $i \in \Y$, we define $\hat{\phi}_i := \phi_i - \lVert c \rVert_\infty$ (in case $\sup_{x_i \in \X_i^0} \phi_i\geq 0$) and set $\hat{\phi}_i = \phi_i$ otherwise. By construction and \eqref{eqn:UpperBoundDuals}, all $\hat{\phi}_i$ are negative and thus satisfy \eqref{eqn:FeasibilityAux}. Furthermore, we see that
\[ \sum_{i \in  \Y_s}M_i\mu_i^0(\X_i^0) - \sum_{i \in  \Y \setminus \Y_s} \lVert c \rVert_\infty \mu_i^0(\X_i^0) >0.  \]
This means that we could replace the $\phi_i$ with the $\hat{\phi}_i$ to obtain a larger value on the left hand side of \eqref{eqn:BetterObjective}. We can now proceed to construct the functions $\tilde \phi_i$ mentioned earlier. 

Let us start with $i=1$ and let $\phi_i'$ be given by
\[  {\phi}'_i(x_i) := \inf_{x_{\Y \setminus \{i \}}, m} \left\{  \frac{1}{m_i} (\c(x_{\Y \setminus \{ i \}}, x_i, m)  - \sum_{j \not = i} m_j \phi_j(x_j)  ) \right\}, \]
where the inf ranges over tuples  $x_{\Y \setminus\{ i\}}$ with $x_j \in \X_j^0$, and $m \in \Delta_y$ such that $m_i >0$. Observe that, thanks to \eqref{eqn:FeasibilityAux}, 
\begin{align*}
    \mathbf{c}(\vec x , m) - \sum_{j \not = i} m_j \phi_j(x_j) & =  \mathbf{c}(\vec x , m) + m_i \phi_i(x_i^0)  -  \sum_{j \in \Y} m_j \phi_j(x_j')
    \\& \geq \mathbf{c}(\vec x , m) - \mathbf{c}(\vec{x}', m) + m_i 
    \phi_i(x_i^0)
    \\& = c_\Y(\vec{x},m) -  c_\Y(\vec{x}',m) + m_i \phi_i(x_i^0)
    \\& \geq c_\Y(\vec{x},m) -  c_\Y(\vec{x}',m) + m_i D_0,
\end{align*}
where in the above we used the tuple $\vec{x}'$ defined as $x_j'=x_j$ for all $j \not =i$, and $x_i' = x_i^0$. Now, observe that for every $\tilde x \in \X$ we have
\begin{align*}
\sum_{j\in \Y} m_j c(x_j, \tilde x) - c_\Y(\vec{x}', m) & \geq   \sum_{j\in \Y} m_j c(x_j, \tilde x) -  \sum_{j\in \Y} m_j c(x_j', \tilde x) 
\\& = m_i( c(x_i, \tilde x) - c(x_i^0, \tilde x)) 
\\ & \geq - m_i\lVert c \rVert_\infty.
\end{align*}
Putting together the above estimates, we deduce 
\[ \frac{1}{m_i} (\c(\vec x, m)  - \sum_{j \not = i} m_j \phi_j(x_j)  ) \geq  D_0 - \lVert c \rVert_\infty =: D_0'. \]
In particular, the function $\phi_i'$ satisfies
\begin{equation}
\phi_i'(x_i) \geq D_0', \quad \forall x_i \in \X. 
\label{eqn:LowerBoundEverywhere}
\end{equation}
Also, from the definition of $\phi_i'$ and \eqref{eqn:FeasibilityAux}, it follows that
\begin{equation}
\phi_i'(x_i) \geq \phi_i(x_i), \quad \forall x_i \in \X_i^0.
\label{eqn:LowerBoundXi0}
\end{equation}
Finally, following a similar argument as in the proof of Lemma \ref{lem:MeasurabilityCTRansform}, we can prove that the function $\phi_i'$ is lower-semicontinuous. Since $\phi_i'$ is bounded from below by the constant $D_0'$, we can find a function $\tilde \phi_i \in C_b(\X)$ bounded from below by $D_0'$ and from above by $\phi_i'$ for which
\begin{equation}
    \int_{\X_i^0} \tilde \phi_i(x_i) d\mu_i(x_i) \geq \int_{\X_i^0}  \phi_i'(x_i) d\mu_i(x_i) -\frac{\delta}{K}.
    \label{eqn:LowerBoundXi0_2}
\end{equation}

Inductively, assuming we have constructed functions $\phi_1', \dots, \phi_{i-1}'$, and $\tilde \phi_1, \dots, \tilde \phi_{i-1} \in C_b(\X)$, we define $\phi_i'$ according to:
\[  {\phi}'_i (x_i) := \inf_{x_{\Y \setminus \{i \}}, m} \left\{  \frac{1}{m_i} (\c(x_{\Y \setminus\{ i\}}, x_i, m)  - \sum_{j < i} m_j \tilde \phi_j(x_j) - \sum_{j > i} m_j \phi_j(x_j)   ) \right\}, \]
where the inf ranges over tuples  $x_{\Y \setminus\{ i\}}$ with $x_j \in \X$ for $j < i$ and $x_j \in \X_j^0$ for $j>i$, and $m \in \Delta_y$ such that $m_i >0$. Repeating the argument as for the case $i=1$, we can verify that $\phi_i'$ satisfies \eqref{eqn:LowerBoundEverywhere} and \eqref{eqn:LowerBoundXi0}. Also, we can find $\tilde{\phi}_i \in C_b(\X)$ bounded from below by $D_0'$ and from above by $\phi_i'$ for which \eqref{eqn:LowerBoundXi0_2} holds.

By definition of $\phi_K'$ and the fact that $\tilde \phi_K \leq \phi'_K $, it follows that the functions $\tilde{\phi}_1, \dots, \tilde{\phi}_K$ satisfy \eqref{eqn:FeasibilityAux2}. Furthermore, 
\begin{align*}
  \sum_{i \in \Y} \int_{\X_i} \tilde{\phi}_i(x_i) d \mu_i(x_i) & =    \sum_{i \in \Y} \int_{\X_i^0} \tilde{\phi}_i(x_i) d \mu_i(x_i) +   \sum_{i \in \Y} \int_{\X_i \setminus \X_i^0} \tilde{\phi}_i(x_i) d \mu_i(x_i) 
  \\&  \geq  \sum_{i \in \Y} \int_{\X_i^0} \phi'_i(x_i) d \mu_i(x_i)-\delta  +   \sum_{i \in \Y} \int_{\X_i \setminus \X_i^0} \tilde{\phi}_i(x_i) d \mu_i(x_i)
  \\& \geq  \sum_{i \in \Y} \int_{\X_i^0} \phi_i(x_i) d \mu_i(x_i)-\delta   - K|D_0'|\delta
   \\& \geq  (1-\delta)\sum_{i \in \Y} \int_{\X_i^0} \phi_i(x_i) d \mu_i^0(x_i)-\delta   - K|D_0'|\delta
   \\& \geq \eqref{eqn:MMOTPrimal} - ( 3 + \lVert c \rVert_\infty + K|D_0'| ) \delta.
\end{align*}
This finishes the proof in this case.
\nc

\textbf{Step 3:} In this final step, we relax the assumption that $c$ is bounded. This, however, is easily accomplished as in step 3 in the proof in \cite{VillaniBook}. For that we consider the cost functions $c^N$ given by 
\[ c^N(x, \tilde x) := \min  \{ c(x, \tilde x), N \}, \quad N \in \N,\]
which are lower-semicontinuous and bounded. We let $c_\Y^N$ and $\mathbf{c}^N$ be defined as $c_\Y$ and $\mathbf{c}$ but with respect to the new cost function $c^N$. It is straightforward to see that $\mathbf{c}^N$ approximates $\mathbf{c}$ monotonically from below. Thanks to Step 2, the duality holds for each $\mathbf{c}^N$ and it remains to follow the same steps as in the last part of the proof of Theorem 1.3 in \cite{VillaniBook} to conclude the desired duality result for $\mathbf{c}$.
\end{proof}

\section{Proof of Theorem \ref{Thm: Dual as generalised barycenter}}
\begin{proof}[Proof of Theorem \ref{Thm: Dual as generalised barycenter}]

From the proof of Theorem \ref{thm:main} we know that \eqref{eqn:ATGeneralLoss} (for $\F=\F_{\mathrm{all}}$) is equal to
\[   \sup _{ \{\tilde{\mu}_i\}_{i \in \mc Y}  }  \inf_{f \in \F_{\mathrm{all}}} \sum_{i \in \Y}   \int_\X \ell(f(\tilde x), i) d\tilde \mu_i(\tilde x)   - \sum_{i \in \Y} C(\mu_i, \tilde \mu_i). \]
It thus suffices to show that the above is equal to \eqref{eqn:ATGeneralLoss_dual_barycentric}. 

To see this, for fixed $\{\tilde \mu_i\}_{i \in \Y}$ we focus on rewriting the minimization problem
\[ \inf_{f \in \F_{\mathrm{all}}}  \sum_{i \in \Y}   \int_\X \beta(f_i(\tilde x)) d\tilde \mu_i(\tilde x).     \]
Let $\Lambda_0= \sum_{i \in \Y} \tilde \mu_i$ and observe that, thanks to the fact that $\beta$ is non-increasing, we have
\begin{align*}
  \inf_{f \in \F_{\mathrm{all}}} \sum_{i \in \Y}   \int_\X \beta(f_i(\tilde x)) d\tilde \mu_i(\tilde x) & =   \inf_{f \in \F_{\mathrm{all}}}    \int_\X  \left(\sum_{i \in \Y} \beta(f_i(\tilde x)) \frac{d\tilde \mu_i}{d\Lambda_0} \right) d\Lambda_0(\tilde x)  
  \\& = \int_{\X}\inf_{v \in \Delta_\Y} \left(\sum_{i \in \Y} \beta(v_i) \frac{d\tilde \mu_i}{d\Lambda_0} \right) d\Lambda_0(\tilde x)
  \\& =\int_{\X}\inf_{v \in \R^K_+ \text{ s.t. } \sum_{i \in \Y} v_i \leq 1 } \left(\sum_{i \in \Y} \beta(v_i) \frac{d\tilde \mu_i}{d\Lambda_0} \right) d\Lambda_0(\tilde x)
  \\&= \int_{\X} \sup_{a> 0}\left( - a \sum_{i \in \Y} \varphi\left(  \frac{1}{a}\frac{d\tilde \mu_i}{d\Lambda_0} \right) - a \right) d \Lambda_0(\tilde x) 
  \\& = \sup_{ a :\X \rightarrow \R_+ \text{ Borel }} \int_{\X} \left( - \sum_{i \in \Y}\varphi\left(  \frac{1}{a(\tilde x)}\frac{d\tilde \mu_i}{d\Lambda_0} \right) - 1  \right) a(\tilde x) d \Lambda_0(\tilde x) 
   \\& = \sup_{ \lambda \in \M_+(\X) \text{ s.t. } \tilde \mu_i \ll \lambda, \, \forall i \in \Y } \int_{\X} \left( - \sum_{i \in \Y}\varphi\left(  \frac{d\tilde \mu_i}{d\lambda} \right) - 1  \right) d \lambda(\tilde x). 
\end{align*}
The desired result now follows.
\end{proof}

\begin{remark}
\label{rem:CrossEntropyLB}
Since the function $\varphi$ in \eqref{eqn:Varphi} can be written as the supremum over linear functions, it is necessarily convex. In addition, by definition of $\varphi$ we have the lower bound
\[ \varphi(s) \geq - \beta(1/K) s  - 1/K, \]
which implies 
\[ \lambda(\X) + \sum_{i \in \Y} \int_{\X}\varphi \left( \frac{d\tilde \mu_i}{d\lambda}\right) d\lambda   \geq -\beta(1/K) \sum_{i \in \Y} \tilde{\mu}_i(\X)     \]
for any $\lambda , \{ \tilde \mu_i\}_{i \in \Y}$. If in addition $\sum_{i\in \Y} C(\mu_i, \tilde \mu_i) < \infty$, we have $\mu_i(\X) = \tilde{\mu}_i(\X)$, and the above bound reduces to $-\beta(1/K)$.

\end{remark}

\nc

\section{$\alpha$-fair packing}
\label{app:AlphaFair}
Given $\alpha \in [0, \infty)$, a general $\alpha$-fair packing problem with linear constraints takes the form
\begin{equation}
    \begin{aligned}
    &\max_{z \in \R^n}  &   &  \sum_{l =1 }^n  U_\alpha(z_l) , \\
    & \: \mathrm{ s.t.} \qquad & & Dz \leq \mathbf{1}, \\
    & \qquad & & 0 \leq z,
   \label{eqn:GeneralAlphaFair}
      \end{aligned}
      \end{equation}
where $U_\alpha(t) =  \log_\alpha(t)$, for $\log_\alpha$ as in \eqref{def:AlphaLoss} for $\alpha\geq 0$ and $\alpha \not =1$, and $\log_1$ the standard natural logarithm $\log$. Problem \eqref{eqn:GeneralAlphaFair} has an economic interpretation. Indeed, we can think of the variable $z=(z_1, \dots, z_n)$ in \eqref{eqn:GeneralAlphaFair} as a possible allocation of a monetary reward among $n$ different parties. When assigned income $z_l$, party $l$ receives \textit{utility} $U_\alpha(z_l)$. The constraint $z \geq 0$ captures the fact that incomes are nonnegative numbers, and the condition $Dz \leq \mathbf{1}$ captures specific additional constraints for the allocation. The goal in \eqref{eqn:GeneralAlphaFair} is to find the allocation of rewards producing the largest possible average utility.

\begin{remark}[On the form of $U_\alpha$]
    In economic theory, the family of functions $U_\alpha = \log_\alpha$ is known as \textit{isoelastic} utility functions. The utility functions in this family have several advantageous properties: they are increasing, concave, and smooth. Further, from an economic point of view,  each function in the family has a constant  \emph{relative risk aversion} (equal to $\alpha$ for $U_\alpha$). The relative risk aversion of the function at a point $x$ is a normalized measure of the curvature of the function, namely
    $$ - \frac{x \partial_{xx} U_\alpha(x)}{\partial_x U_\alpha(x)} = \alpha.$$
    Thanks to the above properties and their computational tractability, this family has been used intensively in utility theory and finance, and has been used to define inequality measures by Atkins in \cite{Atkinson_measurement_1970}.

    In the context of losses in classification, the parameter $\alpha$ can be used to control how harshly the loss penalizes confident misclassifications, which in turn allows users to improve learning for unbalanced distributions (as in \cite{Lin_Focal_2017}, where the power framework appears) or potentially to reduce sensitivity to outliers.     
\end{remark}

As discussed in the main body of the paper, when $\mu = \frac{1}{n}\sum_{l=1}^n \delta_{(x_l, y_l )}$ is an empirical measure, problems \eqref{dual_CE} and \eqref{dual_alpha fair} can be written in the form \eqref{eqn:GeneralAlphaFair}. We provide more details on this assertion. First, we consider the identification
\[ z_l = \psi_{y_l}(x_l).\]
Now, for a given $A \subseteq \Y$ and $x_A \in \spt(\mu_A) $ with $ \bigcap_{i \in A} B_\veps(x_i) \not = \emptyset $, we associate a row in the matrix of constraints $D$ in \eqref{eqn:GeneralAlphaFair}, setting to one those entries corresponding to the variables $\psi_{y_l}(x_l)$ for the $x_l$ in $x_A=\{ x_l \}_{l \in A}$, and setting to zero all other entries. With these identifications, it is clear that, indeed, \eqref{dual_CE} and \eqref{dual_alpha fair} can be written in the form \eqref{eqn:GeneralAlphaFair}.

\section[\texorpdfstring{Recovering the results for the 0-1 loss in \cite{MOTJakwang}}{Recovering the results for the 0-1 loss}]{Recovering the results for the 0-1 loss in \cite{MOTJakwang}}
\label{app:Linear}
In this appendix, we discuss the equivalence between problem \eqref{dual form} for $\ell=\ell_{01}$ (equal to $\ell_\alpha$ for $\alpha =0$) and the problem \eqref{dual_0-1} derived in \cite{MOTJakwang} for the 0-1 loss. We start with a lemma.

\begin{lemma}
Let $\{ a_i \}_{i \in S}$ be a finite collection of real numbers. Then the maximum in the problem
\begin{equation}
\label{eqn:Max0-1LOss}
   \max_{m \in \Delta_{S}} \{ \sum_{i \in S} m_i a_i   - \max_{i \in S} m_i \}  
\end{equation}
is achieved at the uniform measure over a subset of $S$.
\label{lemma:optimal m and s}
\end{lemma}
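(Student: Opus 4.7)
The plan is to recast \eqref{eqn:Max0-1LOss} as a linear program by lifting $\max_i m_i$ into a constraint. Introducing an auxiliary variable $M \in \R$, I consider
\[
\sup \left\{ \sum_{i \in S} m_i a_i - M : m \in \Delta_S,\; m_i \leq M \text{ for all } i \in S \right\}.
\]
Since the objective is strictly decreasing in $M$ and the only constraints on $M$ are lower bounds, any maximizer satisfies $M^* = \max_i m_i^*$, so this LP has the same value as \eqref{eqn:Max0-1LOss}. The feasible region is nonempty, and the objective is bounded above (by $\max_i a_i - 1/|S|$, using $\sum_i m_i = 1$ and the resulting bound $M \geq 1/|S|$), so by standard LP theory the supremum is attained at a vertex of the feasible polyhedron.

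The key step is then the characterization of vertices. Let $(m^*, M^*)$ be a vertex with support $T = \{i : m_i^* > 0\}$. I claim that $m_i^* = M^*$ for every $i \in T$, so that $m^*$ is the uniform measure on $T$ with $M^* = 1/|T|$. Indeed, suppose toward a contradiction that two distinct indices $i_0, j_0 \in T$ satisfy $m_{i_0}^*, m_{j_0}^* < M^*$. Then the perturbation $m_{i_0} \mapsto m_{i_0} \pm \epsilon$, $m_{j_0} \mapsto m_{j_0} \mp \epsilon$, with all other coordinates (including $M$) fixed, remains feasible for small $\epsilon > 0$, contradicting extremality. Hence at most one exceptional index $i_0 \in T$ can satisfy $m_{i_0}^* < M^*$; but in that case one parameterizes a 1-dimensional feasible motion by varying $M$ (with $m_j(M) = M$ for $j \in T \setminus \{i_0\}$ and $m_{i_0}(M) = 1 - (|T|-1)M$), again placing $(m^*, M^*)$ in the relative interior of an edge rather than at a vertex.

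Combining the two steps, the supremum in \eqref{eqn:Max0-1LOss} is attained at a point of the form $m^* = \frac{1}{|T|}\mathbf{1}_T$ for some nonempty $T \subseteq S$, as claimed. The main obstacle in making this rigorous is the vertex characterization --- specifically, ruling out the ``one exceptional index'' configuration --- but this is handled by exhibiting the explicit feasible path along which $M$ varies. A short linear-independence check on the system $\{m_i = 0 : i \notin T\} \cup \{m_i = M : i \in T\} \cup \{\sum_i m_i = 1\}$ confirms that the uniform-on-$T$ points are genuine vertices, completing the argument.
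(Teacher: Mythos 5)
Your proof is correct, and it takes a genuinely different route from the paper's. The paper partitions $\Delta_S$ into the order cones $B_p=\{m:\ m_{p(1)}\geq\cdots\geq m_{p(s)}\}$, on each of which the objective is linear, and then explicitly identifies the extreme points of such a cone as the uniform measures $u^1,\dots,u^s$ on initial segments, via the Abel-summation decomposition $m=\sum_r r(m_r-m_{r+1})\,u^r$. You instead lift $\max_i m_i$ into an epigraph variable $M$, turning the problem into a genuine LP over the pointed polyhedron $\{m\in\Delta_S,\ m_i\leq M\}$, invoke attainment at a vertex, and characterize vertices by two perturbation arguments: the two-index exchange rules out two coordinates strictly below $M$, and the explicit one-parameter motion $m_j(M)=M$ for $j\in T\setminus\{i_0\}$, $m_{i_0}(M)=1-(|T|-1)M$ rules out the single exceptional index (including the degenerate case $|T|=1$, where it shows $M^*$ cannot exceed $1$). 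Both steps are sound; note that the final ``linear-independence check'' confirming that uniform-on-$T$ points are vertices is not actually needed, since the argument only requires that every optimal vertex be of that form, not the converse. What each approach buys: the paper's decomposition yields the full extreme-point structure of the monotone simplex (a reusable fact), while your LP lift avoids the permutation bookkeeping and the explicit convex-combination formula, at the cost of invoking the fundamental theorem of linear programming (for which you correctly verify boundedness of the objective and pointedness of the polyhedron).
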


\begin{proof}
Without the loss of generality, we can assume $S= \{1, \dots, s \}$. We start by observing that the simplex $\Delta_S$ can be written as
\[ \Delta_S = \bigcup_{ p \in \Pi_S } B_p, \]
where $\Pi_S$ denotes the set of permutations of the elements in $S$, and where, for each $p \in \Pi_S$, the set $B_p$ is given by
\[ B_p := \{ m \in \Delta_S \: : \:   m_{p(1)} \geq m_{p(2)} \geq \dots \geq m_{p(s)}  \}. \]
From this, it trivially follows that the max in \eqref{eqn:Max0-1LOss} is reached in at least one of the sets $B_p$. After relabeling the indices if necessary, we can assume that the $B_p$ where the max is reached is the identity permutation, and from now on we use $B$ to denote this set.

Observe that in $B$ the
objective function $\sum_{i\in S} m_i a_i  - \max_{i \in S} m_i$ is a linear function in $m$, since, in $B$, we have $\max_{i \in S} m_i= m_1$. Therefore, the maximum of this objective function over $B$ is achieved at $B$'s extreme points, which, as we discuss next, is the set of points $E= \{ u^1, \dots, u^s \}$, where, for each $r \leq s$, we have
\[ u^r_j := \begin{cases} 1/r, & \quad \text{if } j \leq r, \\ 0, &  \quad \text{else}.  \end{cases}\]
Once we have proved that $E$ is indeed the set of extreme points of $B$ the result will immediately follow.

To prove that $E$ is the set of extreme points of $B$, let us consider an arbitrary element $m$ in $B$, which by definition must satisfy $m_1 \geq m_2\geq \dots \geq m_s$. Let $m_1 \dots, m_t$ denote the nonzero entries of $m$ and set $m_{t+1}:=0$ in case $t=s$. For each $r=1, \dots, t$, let $\alpha_r$ be given by
\[\alpha_r := r(m_r -m_{r+1}),  \]
which is a nonnegative number. A straightforward computation reveals that
\[ \sum_{r=1}^t \alpha_r = \sum_{r=1}^t m_r =1. \]
Moreover, 
\[ m = \sum_{r=1}^t \alpha_r u^r. \]
We have thus shown that any element in $B$ can be written as a convex combination of the elements in $E$. At the same time, it is clear that no element in $E$ can be written as a convex combination of the other elements in $E$. This shows that $E$ is the set of extreme points of $B$. 
\end{proof}

\begin{proposition}
Problem \eqref{dual form} for $\ell=\ell_{\mathrm{01}}$ is equivalent to problem \ref{dual_0-1}. Precisely, the value of \eqref{dual form} is $1$ minus the value of \eqref{dual_0-1}, and $\{ \phi_i\}_{i \in \Y}$ is a solution of \eqref{dual form} if and only if $\{ g_i:= 1+\phi_i \}_{i \in \Y}$ is a solution of \eqref{dual_0-1}. 
\end{proposition}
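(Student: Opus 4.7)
The plan is to massage the constraint in \eqref{dual form} for $\ell=\ell_{01}$ until it matches, term for term, the constraint in \eqref{dual_0-1} after the substitution $g_i = 1 + \phi_i$, and then compare objectives. The first step is to compute $\ell_A$ explicitly. Since $\ell_{01}(v,i)=1-v_i$ and $\sum_{i\in A} m_i=1$ for $m_A\in\Delta_A$, one has
\[ \ell_A(m_A) = \inf_{v\in\Delta_\Y}\sum_{i\in A}(1-v_i)m_i = 1 - \max_{v\in\Delta_\Y}\sum_{i\in A}v_i m_i = 1 - \max_{i\in A} m_i. \]
Plugging in $\phi_i = g_i-1$ gives $\sum_{i\in A} m_i\phi_i(x_i) = \sum_{i\in A} m_i g_i(x_i) - 1$, so the constraint in \eqref{dual form} becomes
\[ 0 \geq \sup_{m_A\in\Delta_A}\left\{\sum_{i\in A} m_i g_i(x_i) - \max_{i\in A} m_i - c_A(x_A,m_A)\right\}. \]

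The second step is to swap the inner infimum in $c_A(x_A,m_A)=\inf_{\tilde x}\sum_{i\in A} c(x_i,\tilde x)m_i$ with the outer supremum, which is legitimate since minus an infimum is a supremum. This yields
\[ \sup_{\tilde x\in\X}\,\sup_{m_A\in\Delta_A}\left\{\sum_{i\in A} m_i\bigl(g_i(x_i)-c(x_i,\tilde x)\bigr) - \max_{i\in A} m_i\right\}. \]
For each fixed $\tilde x$, the inner supremum is exactly of the form treated in Lemma \ref{lemma:optimal m and s} with $a_i = g_i(x_i)-c(x_i,\tilde x)$. Consequently the sup is achieved at a uniform distribution over some subset $A'\subseteq A$, and equals
\[ \max_{\emptyset\ne A'\subseteq A}\frac{1}{|A'|}\left(\sum_{i\in A'}\bigl(g_i(x_i)-c(x_i,\tilde x)\bigr) - 1\right). \]

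The third step is to push the outer $\sup_{\tilde x}$ inside the $\max_{A'}$ (they commute since $\max$ is finite), and recognize that for each fixed $A'$
\[ \sup_{\tilde x\in\X}\left(-\sum_{i\in A'}c(x_i,\tilde x)\right) = -\inf_{\tilde x\in\X}\sum_{i\in A'}c(x_i,\tilde x) = -c_{A'}(x_{A'}). \]
Hence the whole supremum equals $\max_{A'\subseteq A}\tfrac{1}{|A'|}\bigl(\sum_{i\in A'}g_i(x_i) - 1 - c_{A'}(x_{A'})\bigr)$, and requiring it to be $\leq 0$ for every $A\subseteq\Y$ and every $x_A\in\spt(\mu_A)$ is equivalent to
\[ \sum_{i\in A'} g_i(x_i) \leq 1 + c_{A'}(x_{A'}), \quad \forall x_{A'}\in\spt(\mu_{A'}),\ \forall A'\subseteq\Y, \]
which is exactly the feasibility condition of \eqref{dual_0-1} (noting that sub-tuples of elements of $\spt(\mu_A)$ lie in $\spt(\mu_{A'})$, and conversely any constraint of \eqref{dual_0-1} arises by taking $A=A'$).

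Finally, the objective values are linked by the identity
\[ \sum_{i\in\Y}\int_\X g_i\,d\mu_i = \sum_{i\in\Y}\mu_i(\X) + \sum_{i\in\Y}\int_\X \phi_i\,d\mu_i = 1 + \sum_{i\in\Y}\int_\X \phi_i\,d\mu_i, \]
so minimizing $-\sum_i\int \phi_i\,d\mu_i$ over feasible $\phi$ equals $1$ minus the maximum of $\sum_i\int g_i\,d\mu_i$ over the corresponding feasible $g$. Since the change of variables $g_i=1+\phi_i$ is an affine bijection mapping the feasible set of \eqref{dual form} onto that of \eqref{dual_0-1}, the two optimal sets are in explicit bijection, proving the final part of the statement. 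The only subtle point in the argument above is the application of Lemma \ref{lemma:optimal m and s}; the commutations of $\sup$/$\sup$ and $\sup$/$\max$ are routine.
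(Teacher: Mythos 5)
Your proposal is correct and follows essentially the same route as the paper's proof: compute $\ell_A(m_A)=1-\max_{i\in A}m_i$, swap the supremum over $\tilde x$ with the supremum over $m_A$, invoke Lemma \ref{lemma:optimal m and s} to reduce to uniform measures on subsets $A'$, and then change variables $g_i=1+\phi_i$. The only cosmetic difference is that you perform the change of variables at the start rather than at the end, and you are slightly more explicit about why the resulting family of constraints over sub-tuples matches that of \eqref{dual_0-1}.
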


\begin{proof}
First we prove prove that a tuple $\{ \phi_i\}_{i \in \Y}$ satisfies \eqref{eq:ConstraintFiniteSupport} if and only if
\[  \sum_{i\in A} \phi_i(x_i) \leq   1 -|A| +     c_A(x_A),  \quad \forall A \subseteq \Y, \quad \forall x_A \in \spt(\mu_A), \]
where 
\[ c_A(x_A) := \inf_{\tilde x \in \X} \sum_{i \in A} c(x_i, \tilde x ). \]

Observe that, for a given $m_A \in \Delta_A$,
\[\ell_A(m_A)= \inf_{v \in \Delta_\Y } \sum_{i\in A} \ell(v, i) m_i = \inf_{v \in \Delta_\Y } \sum_{i\in A} (1 - v_i) m_i = 1-\max_{i \in A} m_i. \]
On the other hand, using the definition of $c_A(x_A, m_A)$ we can write
\begin{align*}
\sup_{m_A \in \Delta_{A}} & \left\{ \sum_{i \in A} m_i\phi_i(x_i)  +  \ell_A(m_A) - c_A(x_A , m_A)      \right\} 
\\ & = \sup_{m_A \in \Delta_{A}} \sup_{\tilde x \in \X} \left\{ \sum_{i \in A} m_i\phi_i(x_i)  +  \ell_A(m_A) - \sum_{i \in A} m_i c(x_i, \tilde x)      \right\}.
\end{align*}
Swapping the two sups in the above expression we obtain
\begin{align*}
    \sup_{\tilde x \in \X} &  \sup_{m_A \in \Delta_A} \left\{ \sum_{i \in A} m_i\phi_i(x_i)  +  \ell_A(m_A) - \sum_{i \in A} m_i c(x_i, \tilde x)      \right\} \\ & = \sup_{\tilde x \in \X}  \sup_{m_A \in \Delta_A} \left\{ \sum_{i \in A} m_i\phi_i(x_i)  +  1 - \max_{i \in A} m_i - \sum_{i \in A} m_i c(x_i, \tilde x)      \right\}.
\end{align*}
Now, using Lemma \ref{lemma:optimal m and s} we can restrict the inner sup in the above expression to the $m_A$'s in $\Delta_A$ that are uniform measures over subsets of $A$. In particular, the above is equal to

\begin{align*}
    \sup_{\tilde x \in \X} & \sup_{A' \subseteq A} \left\{  \frac{1}{|A'|}\sum_{i \in A'} \phi_i(x_i)  +  1 - \frac{1}{|A'|} - \frac{1}{|A'|}\sum_{i \in A'} c(x_i, \tilde x)      \right\} \\ & = 
    \sup_{A' \subseteq A} \left\{  \frac{1}{|A'|}\sum_{i \in A'} \phi_i(x_i)  +  1 - \frac{1}{|A'|} - \frac{1}{|A'|}c_{A'}(x_{A'})      \right\}.
\end{align*}
Requiring that the above is smaller than or equal to zero is equivalent to the requirement 
\[ \sum_{i\in A'} \phi_i(x_i) + |A'| -1 - c_{A'}(x_{A'}) \leq 0, \quad \forall A' \subseteq A.    \]

At this stage, it suffices to consider the change of variables $g_i= \phi_i +1$ in order to deduce the equivalence between the two optimization problems. 
\end{proof}

\nc 

\end{document}